\def\llncs{0}
\newif\ifsubmission
\title{Sparse Linear Regression and Lattice Problems}
\author{
\ifnum\llncs=1
\author{Aparna Gupte\orcidID{0009-0007-2809-1257} \and Neekon Vafa\orcidID{0000-0002-0555-4200} \and Vinod Vaikuntanathan\orcidID{0000-0002-2666-0045}}
\institute{MIT CSAIL}
\else
Aparna Gupte\thanks{Research supported by the Ida M.~Green MIT Office of Graduate Education Fellowship.} \\MIT\\\texttt{agupte@mit.edu} \and Neekon Vafa\thanks{Research supported by NSF fellowship DGE-2141064 and by the grants of the third author.}\\MIT\\\texttt{nvafa@mit.edu} \and Vinod Vaikuntanathan\thanks{Research supported in part by DARPA under Agreement No. HR00112020023, NSF CNS-2154149, a grant from the MIT-IBM Watson AI, a grant from Analog Devices, a Microsoft Trustworthy AI grant,  a Thornton Family Faculty Research Innovation Fellowship from MIT and a Simons Investigator Award. Any opinions, findings and conclusions or recommendations expressed in this material are those of the author(s) and do not necessarily reflect the views of the United States Government or DARPA.}\\MIT\\\texttt{vinodv@mit.edu}
\fi
}
\date{}
\def\BDD{\mathsf{BDD}}
\def\BinBDD{\mathsf{BinaryBDD}}
\def\SLR{\mathsf{SLR}}
\def\sparse{\mathsf{sparse}}
\def\partite{\mathsf{partite}}
\def\CLWE{\mathsf{CLWE}}
\newcommand{\vect}[1]{\boldsymbol{\mathbf{#1}}}
\newcommand{\mat}[1]{\boldsymbol{\mathbf{#1}}}
\def\zero{\mathbf{0}}
\def\ones{\mathbf{1}}
\def\Z{\mathbb{Z}}
\def\R{\mathbb{R}}
\def\S{\mathcal{S}}
\def\cone{\mathbb{C}}
\def\sphere{\mathbb{S}}
\def\bin{\mathsf{bin}}
\def\lambdabin{\lambda_{1, \bin}}
\def\round{\mathsf{round}}
\def\col{\mathsf{col}}
\def\slrrows{m}
\def\bdddim{d}
\def\slrcols{n}
\def\REfac{\epsilon}
\def\RE{\zeta}
\def\cond{\kappa}
\def\TL{\mathsf{TL}}
\def\gamclwe{\gamma_{\CLWE}}
\def\ZWJ{\mathsf{ZWJ}}
\begin{document}
\maketitle

\begin{abstract}
Sparse linear regression (SLR) is a well-studied problem in statistics where one is given a design matrix $\mathbf{X} \in \mathbb{R}^{m \times n}$ and a response vector $\mathbf{y} = \mathbf{X} \boldsymbol{\theta}^* + \mathbf{w}$  for a $k$-sparse vector $\boldsymbol{\theta}^*$ (that is, $\|\boldsymbol{\theta}^*\|_0 \leq k$) and small, arbitrary noise $\mathbf{w}$, and the goal is to find a $k$-sparse $\widehat{\boldsymbol{\theta}} \in \mathbb{R}^{n}$ that minimizes the mean squared prediction error $\frac{1}{m} \|\mathbf{X} \widehat{\boldsymbol{\theta}} - \mathbf{X} \boldsymbol{\theta}^*\|^2_2$. While $\ell_1$-relaxation methods such as basis pursuit, Lasso, and the Dantzig selector solve SLR when the design matrix is well-conditioned, no general algorithm is known, nor is there any formal evidence of hardness in an {\em average-case} setting with respect to {\em all} efficient algorithms.

We give evidence of average-case hardness of SLR w.r.t.~all efficient algorithms assuming the worst-case hardness of lattice problems. Specifically, we give an {\em instance-by-instance} reduction from a variant of the bounded distance decoding (BDD) problem on lattices to SLR, where the condition number of the lattice basis that defines the BDD instance is directly related to the restricted eigenvalue condition of the design matrix, which characterizes some of the classical statistical-computational gaps for sparse linear regression. Also, by appealing to worst-case to average-case reductions from the world of lattices, this shows hardness for a \emph{distribution} of SLR instances; while the design matrices are ill-conditioned, the resulting SLR instances are in the identifiable regime. 

Furthermore, for well-conditioned (essentially) \emph{isotropic} Gaussian design matrices, where Lasso is known to behave well in the identifiable regime, we show hardness of outputting \emph{any} good solution in the \emph{unidentifiable} regime where there are many solutions, assuming the worst-case hardness of standard and well-studied lattice problems.
\end{abstract}

\ifnum\llncs=0
\newpage 
\tableofcontents
\newpage
\fi

\section{Introduction}
\label{sec:intro}

We study the fundamental statistical problem of {\em sparse linear regression} where one is given a design matrix $\vec X \in \mathbb{R}^{\slrrows\times \slrcols}$ and responses $\vec y \in \mathbb{R}^{\slrrows}$ where 
$$\vec y = \vec X \btheta^* + \vec w$$ 
for a hidden parameter vector $\btheta^* \in \mathbb{R}^\slrcols$ which is $k$-sparse, i.e., it has at most $k$ non-zero entries, and a small, arbitrary noise $\vec w \in \R^m$. The goal is to output a $k$-sparse vector $\widehat{\btheta}$ such that the {\em (mean squared) prediction error} $$\frac{1}{m} \|\matX\widehat{\btheta} - \matX \btheta^*\|_2^2$$ 
is as small as possible. Each row of $\vec X$ corresponds to a {\em sample} or a {\em measurement}, and each column of $\vec X$ corresponds to a {\em feature} of the model $\btheta^*$.

Information-theoretically, it is possible to achieve prediction error 
\[  \frac{1}{m} \norm{ \mat{X} \vect{\widehat{\theta}} - \mat{X} \vect{\theta}^*}^2_2 \leq \frac{4 \norm{\vect{w}}_2^2}{m}, \]
regardless of the design matrix $\mat{X}$. 
Algorithmically, however, the prediction error achieved by many efficient polynomial-time algorithms (such as $\ell_1$-relaxation or $\ell_1$-regularization including basis pursuit and Lasso estimators~\cite{tibshirani,CDS98} as well as the Dantzig selector~\cite{CT07}) depends on the conditioning of the design matrix $\matX$---specifically, the \emph{restricted-eigenvalue} constant  $\RE(\mat{X})$, which essentially lower bounds the smallest singular value of $\matX$ restricted to nearly sparse vectors. 
(See Definition~\ref{def:re-constant} for a formal definition.) By adapting the analysis of~\cite{negahban2012unified} (see Theorems~\ref{thm:general-lasso-bound} and \ref{thm:lasso-performance-in-terms-of-noise}), the thresholded Lasso estimator $\widehat{\vect{\theta}}_{\mathsf{TL}}$ achieves prediction error\footnote{This bound assumes $\mat{X}$ satisfies a certain column-normalization condition (see Definition~\ref{def:column-normalization}).}
\begin{align*}
    \delta_{\mathsf{Lasso}}^2 := \frac{1}{m} \| \mat{X} \widehat{\vect{\theta}}_{\mathsf{TL}} - \mat{X} \vect{\theta}^* \|_2^2 \le O\left(\frac{\|\vect{w}\|^2_2 \cdot k^2}{\RE(\mat{X})^2 \cdot m} \right).
\end{align*}
In particular, when the design matrix $\vec X$ is well-conditioned, i.e., $\RE(\mat{X}) = \Omega(1)$ which, for example, happens when the rows are drawn from $\N(\mathbf{0},\vec I_{\slrcols\times \slrcols})$ for $m = \Omega(k \log n)$, the $\ell_1$-relaxation or $\ell_1$-regularization algorithms achieve the information-theoretically optimal prediction error (up to polynomial factors in $k$). The smaller the restricted eigenvalue constant, the worse the prediction error bound. We emphasize that without computational bounds, the achievable prediction error does not depend on the characteristics of the design matrix $\vec X$.

In several naturally occurring high-dimensional regression problems, the design matrix $\matX$ may be ill-conditioned as the features, or even the samples, could be correlated. An important question then is to understand {\em which design matrices admit efficient sparse linear regression algorithms and which do not}. Stated differently, is Lasso (and friends) the best possible algorithm for sparse linear regression? This is the central question of interest in this paper.

A handful of works have started to explore this question both from the algorithmic front and the hardness front. On the algorithmic front, the recent work of~\cite{kelnerpreconditioning} showed an algorithm called {\em pre-conditioned Lasso} which achieves low prediction error for certain ill-conditioned matrices where Lasso provably fails. On the hardness front,~\cite{kelner2022lower, kelnerpreconditioning} prove hardness against particular algorithms, namely preconditioned Lasso. In terms of hardness against all efficient algorithms, \cite{zhang2014lower} construct a fixed design matrix $\mat{X}_{\ZWJ} \in \mathbb{R}^{m\times n}$ 
such that solving the $k$-sparse linear regression problem with better prediction error than Lasso for $k \approx n/4$ 
on an arbitrary (worst-case) $k$-sparse ground truth $\boldsymbol{\theta}^*$ implies that \textbf{NP} $\subseteq$ \textbf{P/poly}.\footnote{They show slightly more, i.e., they construct a distribution over $\boldsymbol{\theta}^*$ for which sparse linear regression is hard; however, to the best of our knowledge, this distribution is not polynomial-time sampleable. Indeed, if this distribution were sampleable, their result would show a worst-case to average-case reduction for \textbf{NP} which remains a major open problem in complexity theory.}
While an exciting initial foray into the landscape of hardness results for sparse linear regression, \cite{zhang2014lower} inspires many more questions:

\begin{enumerate}
    \item\label{zwj-question-family-of-instances} Most importantly, which design matrices $\vec X$ are hard for sparse linear regression? While \cite{zhang2014lower} gives us an example in the form of {\em a single} $\mat{X}_{\ZWJ}$, it does not give us much insight into the hardness profile of a given design matrix.
    \item\label{zwj-question-sparsity-parameter}The work of \cite{zhang2014lower} shows that finding $k$-sparse solutions for $k \approx n/4$ (i.e., constant factor sparsity) is hard, but is it significantly easier to find much sparser solutions, e.g. what happens with polynomial sparsity, logarithmic sparsity or even constant sparsity? The problem certainly becomes easier, but can we nevertheless show evidence of hardness?
    \item\label{zwj-question-fine-grained} A related question is that of fine-grained hardness: while \cite{zhang2014lower} shows evidence against polynomial-time algorithms, could there be non-trivial algorithms that solve sparse linear regression significantly faster than brute force search, i.e., in time $\slrcols^{o(k)}$?
    \end{enumerate}

\subsection{Our Results}
In this paper, we show hardness results for sparse linear regression that address all of these questions. In a nutshell, and hiding some details, we show an instance-by-instance reduction from (a variant of) the {\em bounded distance decoding problem} on a lattice with a given basis $\vec B$ to sparse linear regression w.r.t. a design matrix $\vec X$ that is essentially drawn from the Gaussian distribution $\N(\mathbf{0},\boldsymbol{\Sigma})$, where $\boldsymbol{\Sigma} = \boldsymbol{\Sigma}(\vec B)$ is determined by $\vec B$. 
In particular, this allows us to start from conjectured hard instances of lattice problems and construct (nearly) Gaussian design matrices (with a covariance matrix related to the lattice basis) for which $k$-SLR is hard, partially addressing Question~\ref{zwj-question-family-of-instances}. Our main theorem (Theorem~\ref{informalthm:main-bin-bdd-to-slr}; see also Remark~\ref{remark:fine-grained}) addresses Questions \ref{zwj-question-sparsity-parameter}, handling a large range of sparsity parameters $k$, and \ref{zwj-question-fine-grained}, by showing a fine-grained reduction.

As a secondary result, even for well-conditioned (essentially) \emph{isotropic} Gaussian design matrices, where Lasso is known to behave well in the identifiable regime, we show hardness of outputting \emph{any} good solution in the less standard \emph{unidentifiable} regime where there are many solutions, assuming the worst-case hardness of standard and well-studied lattice problems.

\paragraph{Binary Bounded Distance Decoding.} 
Our source of hardness is lattice problems. Given a matrix $\vec B \in \mathbb{R}^{d \times d}$, the lattice generated by $\vec B$ is $$ \mathcal{L}(\vec B) = \{ \vec B \vec z: \vec z \in \mathbb{Z}^d\}.$$
A lattice has many possible bases: indeed, $\vec B \vec U$ is a basis of $\mathcal{L}(\vec B)$ whenever $\vec U$ is a unimodular matrix, i.e., an integer matrix with determinant $\pm 1$. The minimum distance $\lambda_1(\vec B)$ of the lattice $\mathcal{L}(\vec B)$ is the (Euclidean) length of the shortest non-zero vector in $\mathcal{L}(\vec B)$. Note that $\lambda_1(\vec B)$ does not depend on $\vec B$, only on $\mathcal{L}(\vec B)$.\footnote{We actually use a slightly different definition of $\lambda_1(\vec B)$ which is unimportant for this exposition; we refer the reader to Section~\ref{sec:prelims} for more details.} 

A canonical lattice problem is the bounded distance decoding (BDD): given a basis $\vec B$ of a lattice, a target vector $\vec t \in \mathbb{R}^d$, and a parameter $\alpha < 1/2$, find a lattice vector $\vec v \in \mathcal{L}(\vec B)$ such that $\mathsf{dist}(\vec t, \vec v) \leq \alpha \cdot \lambda_1(\vec B)$ given the promise that such a vector exists (equivalently, that $\vec t$ is sufficiently close to the lattice). By the bound on $\alpha$, there is a unique such vector. The BDD problem has been very well studied, especially in the last decade, and is widely believed to be hard, for instance, forming the basis of a new generation of post-quantum cryptographic algorithms recently standardized by \cite{NIST}. The best algorithm known for BDD runs in time $2^{d + o(d)}$~\cite{aggarwal2015solving}.\footnote{Aggarwal et al.~\cite{aggarwal2015solving} give an algorithm for solving the exact Closest Vector Problem (CVP), which immediately implies an algorithm for BDD.}

Our hardness assumption is a variant called {\em binary BDD} (see, e.g. \cite{DBLP:conf/crypto/KirchnerF15}) which asks for a vector $\vec v \in \vec B\cdot \{\pm 1\}^d$ 
that is close to the target $\vec t$ (as above), again under the promise that such a vector exists. Equivalently, given a vector $\vec t = \vec B \vec z + \vec e$ where $\vec z \in \{\pm 1\}^d$ and $\norm{\vect{e}} \leq \alpha\cdot \lambda_1(\vec B)$, the problem is to find $\vec z$. Both BDD and binary BDD are believed to be hard for arbitrary lattice bases $\vec B$. Indeed, a canonical way to solve (binary as well as regular) BDD is to employ Babai's rounding algorithm~\cite{babai1986lovasz} which works as follows: compute and output  $$\round(\vec B^{-1} \vec t) = \round(\vec z + \vec B^{-1} \vec e) = \vec z + \round(\vec B^{-1} \vec e)~,$$ namely, round each coordinate of $\vec B^{-1} \vec t$ to the nearest integer. This works as long as each coordinate of $\vec B^{-1} \vec e$ is at most $1/2$, which happens as long as $\vec e$ is small {\em and} $\vec B$ has a good condition number. In particular, Babai succeeds if $\alpha = O(1/\kappa(\vec B))$, where $\kappa(\matB) = \sigma_{\max}(\matB)/\sigma_{\min}(\matB)$ is the condition number of $\matB$. 
In other words, the condition number of $\vec B$ determines the performance of the algorithm.

To be sure, there are algorithms for {\em binary} BDD that perform slightly better than BDD: in particular, \cite{DBLP:conf/crypto/KirchnerF15} show a $2^{O(d/\log \log(1/\alpha))}$-time algorithm for binary BDD for a large range of $\alpha$ (see 
\ifnum\llncs=0
Theorem~\ref{thm:kirchner-fouque-alg}).
\else
Theorem~7 of the full version~\cite{gupte2024sparse}).
\fi
In particular, for $\alpha = 1/\poly(d)$, this becomes a $2^{O(d / \log \log d)}$-time algorithm for binary BDD, whereas the best run-time for BDD algorithms in this regime is $2^{\Theta(d)}$.
More than that, \cite{DBLP:conf/crypto/KirchnerF15} study binary BDD (and generalizations) in detail and give evidence of hardness: they show reductions from variants of $\mathsf{GapSVP}$ and $\mathsf{UniqueSVP}$ to (a slight generalization) of binary BDD, as well as a direct reduction from low-density subset sum to binary BDD \cite[Theorem 14]{DBLP:conf/crypto/KirchnerF15}. In fact, \cite{DBLP:conf/crypto/KirchnerF15} use this reduction and their $2^{O(d/\log \log(1/\alpha))}$-time algorithm for binary BDD to give a state-of-the-art algorithm for low-density subset sum. Improving on this $2^{O(d/\log \log(1/\alpha))}$ run-time bound for binary BDD would consequently give better algorithms for low-density subset sum.\footnote{We remark that the binary LWE (learning with errors) problem, where the LWE secret is binary, is {\em not} a special case of binary BDD even though LWE is a special case of BDD. Indeed, when writing a binary LWE instance as a BDD instance in the canonical way, only a part of the coefficient vector of the closest lattice point is binary. Thus, even though binary LWE is equivalent in hardness to LWE,~\cite{DBLP:conf/innovations/GoldwasserKPV10, DBLP:conf/stoc/BrakerskiLPRS13,micciancio2018hardness}, we do not know such a statement relating binary BDD and BDD that preserves conditioning of the lattice basis.}

\paragraph{Our First Result.} 
Our first result shows that for every lattice basis $\vec B$, there is a related covariance matrix $\boldsymbol{\Sigma} = \boldsymbol{\Sigma}(\vec B)$ such that if binary BDD is hard given the basis $\vec B$, then $k$-SLR is hard w.r.t. an {\em essentially} Gaussian design matrix whose rows are i.i.d. $\N(\mathbf{0},\boldsymbol{\Sigma})$.\footnote{The bottom $k\times n$ sub-matrix of $\matX$ (call it $\matX_2$) is a fixed, worst-case, matrix while each of the top rows is drawn i.i.d. from $\N(\zero, \boldsymbol{\Sigma})$.
There are two natural ways to remove the ``worst-case'' nature of $\mat{X}_2$.
We discuss them in Section~\ref{sec:removing-worst-case-gadget}.} The more precise statement follows. For $\beta \in [0,2]$, we say that a $k$-$\SLR$ algorithm is a \emph{$\beta$-improvement} of Lasso if on input $(\mat{X} \in \R^{m \times n}, \vect{y} \in \R^m)$, the algorithm achieves prediction error $\delta^2$ where
\[ \delta^2 = \delta_{\mathsf{Lasso}}^2 \cdot \RE(\mat{X})^{\beta} \cdot \poly(k, \log n) \le \frac{\|\vect{w}\|^2_2}{\RE(\mat{X})^{2 - \beta} \cdot m}  \cdot \poly(k, \log n).\]
Note that $\beta = 0$ corresponds to Lasso itself, and $\beta = 2$ achieves the information-theoretic bound (up to $\poly(k, \log n)$ factors).

\begin{theorem}\label{informalthm:main-bin-bdd-to-slr}
There is a $\poly(m, k \cdot 2^{d/k})$-time randomized reduction \sloppy{from $\BinBDD$} in $d$ dimensions with parameter $\alpha \leq 1/10$ to $k$-$\SLR$ in dimension $\slrcols = k \cdot 2^{\bdddim/k}$ and $m \geq 17 d$ samples that succeeds with probability $1 - e^{-\Omega(m)}$. Moreover, the reduction maps a $\BinBDD$ instance w.r.t. lattice basis $\mat{B}$ to design matrices $\mat{X}\in \mathbb{R}^{m\times n}$ where 
\begin{itemize}
    \item the distribution of each of the top $m-k$ rows is i.i.d. $\N(\zero, \mat{\Sigma})$ where $\mat{\Sigma} = \mat{G}_{\sparse}^\top \mat{B}^\top \mat{B} \mat{G}_{\sparse}$ for a fixed, instance-independent matrix $\mat{G}_{\sparse} \in \R^{d \times n}$; 
    \item each of the bottom $k$ rows is proportional to a fixed, instance-independent vector that depends only on $\slrcols$ and $k$; and 
    \item if there is a $k$-$\SLR$ polynomial-time algorithm that is a $\beta$-improvement to Lasso, then there is a $\poly(m, k \cdot 2^{d/k})$-time algorithm for $\BinBDD$ in $d$ dimensions with parameter $$\alpha \leq  \frac{1}{ \poly(d) \cdot \kappa(\vec B)^{2 - \beta}}$$ 
    w.r.t. lattice bases $\mat{B}$.
\end{itemize}
\end{theorem}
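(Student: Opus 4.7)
My plan is to construct the SLR instance so that a $\BinBDD$ solution $\vect{z} \in \{\pm 1\}^d$ corresponds exactly to a $k$-sparse ground truth $\vect{\theta}^*$, and so that any $\beta$-improvement to Lasso can be rounded to recover $\vect{z}$. The main ingredient is a fixed ``binary gadget'' matrix $\mat{G}_{\sparse} \in \R^{d \times n}$ obtained by partitioning $[d]$ into $k$ blocks of size $d/k$ and placing, in each of the $k$ corresponding groups of $2^{d/k}$ columns, one column for each $\pm 1$ pattern of that block (supported on the block, zero elsewhere). Then every $\vect{z} \in \{\pm 1\}^d$ factors uniquely as $\vect{z} = \mat{G}_{\sparse} \vect{\theta}^*$ for a $k$-sparse $0/1$ vector $\vect{\theta}^*$ that selects one column per group. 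Given a $\BinBDD$ instance $(\mat{B}, \vect{t} = \mat{B}\vect{z} + \vect{e})$ with $\norm{\vect{e}} \le \alpha \lambda_1(\mat{B})$, I draw $\vect{g}_1, \ldots, \vect{g}_{m-k} \sim \N(\zero, \mat{I}_d)$ and set the top $m-k$ rows of $\mat{X}$ to $\vect{x}_i := (\mat{B}\mat{G}_{\sparse})^\top \vect{g}_i$, which are i.i.d.\ $\N(\zero, \mat{\Sigma})$ with $\mat{\Sigma} = \mat{G}_{\sparse}^\top \mat{B}^\top \mat{B}\mat{G}_{\sparse}$, and the corresponding responses $y_i := \langle \vect{g}_i, \vect{t}\rangle = \langle \vect{x}_i, \vect{\theta}^*\rangle + \langle \vect{g}_i, \vect{e}\rangle$. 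The bottom $k$ rows are an instance-independent block-identification gadget: row $i$ is proportional to the indicator of the $i$-th block with $y$-value chosen so that the ideal solution $\vect{\theta}^*$ is consistent; these rows push any low-loss estimator toward having exactly one nonzero per block and toward value $+1$ at that nonzero.

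The noise on the top rows is $w_i = \langle \vect{g}_i, \vect{e}\rangle \sim \N(0, \norm{\vect{e}}^2)$, so $\norm{\vect{w}}^2 = O(m \norm{\vect{e}}^2) = O(m \alpha^2 \lambda_1(\mat{B})^2)$ with probability $1 - e^{-\Omega(m)}$, while the bottom rows are noiseless. For any structured $k$-sparse $\vect{u}$, the identity $(\mat{X}\vect{u})_i = \langle \vect{g}_i, \mat{B}\mat{G}_{\sparse}\vect{u}\rangle$ combined with Gaussian concentration over the $m \ge 17 d$ top rows gives $\norm{\mat{X}\vect{u}}^2 / m = \Theta(\norm{\mat{B}\mat{G}_{\sparse}\vect{u}}^2)$; since $\mat{G}_{\sparse}$ acts as a $\sqrt{d/k}$-isometry on vectors with one nonzero per block, the appropriate column-normalization then yields $\RE(\mat{X}) = \Theta(\sigma_{\min}(\mat{B}))$ up to a polynomial factor in $d$ and $k$. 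This is the bridge that turns the lattice condition number into the restricted eigenvalue constant.

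For the converse direction, given a $\beta$-improvement estimator $\widehat{\vect{\theta}}$ with prediction error $\delta^2 \le \norm{\vect{w}}^2 / (\RE(\mat{X})^{2-\beta} m) \cdot \poly(k, \log n)$, I first use $\norm{\mat{X}(\widehat{\vect{\theta}} - \vect{\theta}^*)}^2 \le m \delta^2$ together with a net argument on the $d$-dimensional image of $\mat{G}_{\sparse}$ (where $m \ge 17 d$ suffices) to conclude $\norm{\mat{B}\mat{G}_{\sparse}(\widehat{\vect{\theta}} - \vect{\theta}^*)} = O(\delta)$. Second, the (essentially noiseless) gadget rows force $\widehat{\vect{\theta}}$ to have approximately one nonzero of value close to $1$ per block, so that $\widehat{\vect{z}} := \mathrm{sign}(\mat{G}_{\sparse}\widehat{\vect{\theta}}) \in \{\pm 1\}^d$ is well-defined and $\norm{\mat{G}_{\sparse}\widehat{\vect{\theta}} - \widehat{\vect{z}}}$ is small. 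A triangle inequality then yields $\norm{\mat{B}(\widehat{\vect{z}} - \vect{z})} < \lambda_1(\mat{B})$, forcing $\widehat{\vect{z}} = \vect{z}$ by the definition of minimum distance. Substituting $\norm{\vect{w}}^2 \lesssim m \alpha^2 \lambda_1(\mat{B})^2$, $\RE(\mat{X}) = \Theta(\sigma_{\min}(\mat{B}))$, and $\sigma_{\max}(\mat{B}) \le \kappa(\mat{B}) \sigma_{\min}(\mat{B})$ yields recovery precisely when $\alpha \le 1 / (\poly(d) \cdot \kappa(\mat{B})^{2-\beta})$.

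The main obstacle is calibrating the bottom gadget: it must be heavy enough to pin down the one-per-block structure of $\widehat{\vect{\theta}}$ (otherwise the rounding step to $\{\pm 1\}^d$ breaks), yet not so heavy that it overwhelms the restricted eigenvalue analysis, whose dependence on $\sigma_{\min}(\mat{B})$ is what makes the reduction sensitive to $\kappa(\mat{B})^{2-\beta}$. A secondary subtlety is that $\mat{G}_{\sparse}$ has a large kernel, so the top rows alone leave a direction unidentifiable that the gadget and the $k$-sparsity constraint must jointly eliminate. Careful choice of the gadget scale and of column normalization---so that the \cite{negahban2012unified}-style Lasso prediction bound applies verbatim---is what drives the $\poly(d)$ slack in $\alpha$ and the $1 - e^{-\Omega(m)}$ success probability.
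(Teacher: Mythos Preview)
Your construction is essentially the paper's: the gadget $\mat{G}_{\sparse}$, the top block $\mat{R}\mat{B}\mat{G}_{\sparse}$ with $\mat{R}$ standard Gaussian, and the bottom block $\gamma\mat{G}_{\partite}$ (your ``block indicators'') all coincide with what the paper does in Theorem~\ref{thm:main-bin-bdd-to-slr}, and your completeness/soundness outline for the first two bullets is right.

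The genuine gap is in the third bullet, specifically your restricted-eigenvalue claim. You assert that ``$\mat{G}_{\sparse}$ acts as a $\sqrt{d/k}$-isometry on vectors with one nonzero per block'' and conclude $\RE(\mat{X}) = \Theta(\sigma_{\min}(\mat{B}))$ up to polynomial factors. Two problems. First, after column normalization (by $Z = \Theta(\sigma_{\max}(\mat{B})\sqrt{d/k})$, which you must do for the Lasso bound from \cite{negahban2012unified} to apply), the RE constant is scale-invariant in $\mat{B}$ and hence cannot equal $\sigma_{\min}(\mat{B})$; the paper obtains $\RE = \Theta\bigl(k/(d^4\,\kappa(\mat{B})^2)\bigr)$, and it is exactly this $1/\kappa^2$ dependence that, when raised to the $2-\beta$ power in the $\beta$-improvement bound, produces the $\kappa(\mat{B})^{2-\beta}$ in $\alpha$. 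Second, and more importantly, the RE condition must hold over the entire cone $\cone_{\REfac}(S)$, which contains vectors whose mass on $\bar{S}$ is up to $(1+\REfac)$ times the mass on $S$ --- these are \emph{not} one-per-block vectors, and $\mat{G}_{\sparse}$ is far from an isometry on them (indeed $\mat{G}_{\sparse}$ has a huge kernel). Your sentence does not address such vectors at all.

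The paper's proof of this RE lower bound (Lemma~\ref{lemma:re-constant-lower-bound}) is the most delicate part of the argument. It proceeds by contradiction: if $\norm{\mat{G}_{\partite}\vect{\theta}}$ is already large, done; otherwise, one uses the cone condition together with an averaging lemma (Lemma~\ref{lemma:averaging-argument}) to find a single block $i$ where the $S$-coordinate $\eta$ is not too small and the off-$S$ mass $\norm{\bar{\vect{\eta}}}_1$ is only $(1 + O(\epsilon))\eta$. On that block, the constraint $\norm{\mat{G}_{\partite}\vect{\theta}}$ small forces the positive part of $\bar{\vect{\eta}}$ to be tiny, and then a column-by-column accounting of the $\pm 1$ entries in $\mat{H}$ shows $\norm{\mat{G}_{\sparse}\vect{\theta}}_1 \gtrsim \eta$. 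This interplay between the two gadgets (and the choice $\epsilon = \Theta(k/d)$) is what makes the argument go through and is entirely missing from your sketch. Without it, you cannot justify the $\kappa(\mat{B})^{2-\beta}$ dependence in $\alpha$, which is the content of the third bullet.
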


The first qualitative take-away message from our theorem is: if you beat Lasso\footnote{More precisely, $\beta$-improve Lasso for $\beta > 1$.}, you beat Babai. The performance of the plain Lasso algorithm is determined by the restricted eigenvalue constant; analogously, the performance of Babai's rounding algorithm is bounded by the condition number of the basis matrix.  Our theorem says that if you come up with an algorithm that beats the Lasso guarantee, in the sense of achieving prediction error $\frac{1}{\RE(\matX)^{2-\beta}}$ for $\beta > 1$,  then you have at hand an algorithm that beats the BDD approximation factor achieved by Babai's algorithm by a corresponding amount, namely solve BDD to within a factor of $\frac{1}{\kappa(\vec B)^{2-\beta}}$, whereas Babai's algorithm itself achieves an approximation factor of $\frac{1}{\kappa(\matB)}$. When $\beta = 2$, the $k$-SLR algorithm achieves the information-theoretic optimal prediction error (up to polynomial factors in $d$ and $k$) in which case it gives us a polynomial time binary BDD algorithm for $\alpha$ that is inverse-polynomial in $d$.

To be sure, there are (recent) algorithms that solve sparse linear regression beating the RE constant bound, e.g. \cite{kelnerpreconditioning,DBLP:journals/corr/abs-2305-16892}. These algorithms work by first {\em pre-conditioning} the design matrix and running the plain Lasso w.r.t. the preconditioned matrix. Our reduction then says that for BDD basis matrices which map to the easy $k$-SLR instances identified by \cite{kelnerpreconditioning,DBLP:journals/corr/abs-2305-16892}, you {\em can} beat Babai. However, that should {\em not} be surprising, in general: there {\em are} basis matrices $\matB$ which can be ``pre-conditioned'', e.g. using the Lenstra-Lenstra-Lov\'{a}sz basis reduction algorithm or its variants~\cite{lenstra1982factoring,DBLP:journals/tcs/Schnorr87}, thereby improving their condition number and consequently the performance of Babai. Indeed, in our view, understanding the relationship between the two types of preconditioning transformations --- one from the $k$-SLR world~\cite{kelnerpreconditioning,DBLP:journals/corr/abs-2305-16892} and the other from the lattice world~\cite{lenstra1982factoring,DBLP:journals/tcs/Schnorr87} --- is a fascinating open question.

On the other hand, there are also basis matrices whose condition number cannot be improved in polynomial time. (If not, then running such a conditioning algorithm and then Babai would give efficient worst-case lattice algorithms, which we believe do not exist.) Indeed, given what we know about the hardness of worst-case lattice problems, our theorem identifies a large class of design matrices where solving sparse linear regression is at least as hard.

Furthermore, our theorem shows the hardness of sparse linear regression for a range of sparsity parameters, addressing Question~\ref{zwj-question-sparsity-parameter}. The sparser the instance, the longer the run-time of the binary BDD algorithm guaranteed by the reduction. On one extreme, when $k = \Omega(d/\log d)$, the reduction runs in $\mathsf{poly}(d)$ time and the sparsity $k=n^{1 - \epsilon}$ can be set to any polynomial function of $n$. On the other extreme, when $k$ is sufficiently super-constant and (say) $\beta = 2$, we get slightly subexponential-time (in $d$) algorithms for binary BDD, beating the algorithm of \cite{DBLP:conf/crypto/KirchnerF15} for $k = \omega(\log \log d)$. Our reduction also addresses Question~\ref{zwj-question-fine-grained}, because gives us fine-grained hardness of $k$-$\SLR$ algorithms running in time $n^{o(k)}$ (see Remark~\ref{remark:fine-grained}).

In order to directly address our Question~\ref{zwj-question-family-of-instances}, we can combine our main reduction with known lattice reductions to generate an average-case hard instance for $k$-SLR. In particular, we can reduce the average-case learning with errors problem~\cite{DBLP:journals/jacm/Regev09}, to (standard) BDD with bounded norm solutions, and then to binary BDD using a gadget from \cite{DBLP:conf/eurocrypt/MicciancioP12} (%
\ifnum\llncs=0
see Appendix~\ref{sec:reduction-from-bdd-to-binbdd}
\else
see full version~\cite{gupte2024sparse}%
\fi
). In turn, since the learning with errors problem has a reduction from worst-case lattice problems~\cite{DBLP:journals/jacm/Regev09,DBLP:conf/stoc/Peikert09}, this hardness result relies only on \emph{worst-case} hardness assumptions, despite being \emph{average-case} over $k$-SLR instances. While the resulting distribution over sparse linear regression design matrices is degenerate and ill-conditioned, the resulting $k$-SLR instance is still in the identifiable regime and information-theoretically solvable. As far as the authors are aware, this is the first \emph{average-case} hardness result over $k$-SLR instances, and relies only on \emph{worst-case} lattice assumptions.

Finally, as a teaser to our techniques, we emphasize that lattice problems are fundamentally about integrality, while sparse-linear regression has no integrality constraint. To make our reduction go through, we convert an \emph{integrality} constraint in the lattice problem into a \emph{sparsity} constraint by enforcing an additional linear constraint via a certain gadget matrix. See Section~\ref{sec:proof-main-reduction-and-gadgets} for more details. We add that \cite{har2016approximate} previously used essentially the same gadget matrix in their reduction from $k$-SUM to $k$-SLR.

\paragraph{Our Second Result.}
Our second result, Theorem~\ref{thm:reduction-from-clwe}, shows the  hardness of sparse linear regression with a  design matrix  $\vec X$ whose rows are  essentially drawn from the spherical Gaussian $\N(\mathbf{0},\vec I_{n\times n})$, as long as the number of samples is smaller than $k\log d$. Even though this puts us in the non-identifiable regime where there may be many $k$-sparse $\widehat{\theta}$ consistent with $(\vec X, \vec y)$, the problem of minimizing the mean squared error\footnote{In the unidentifiable regime, minimizing the prediction error is not information-theoretically possible without some constraint on the noise.} is well-defined. More precisely, the top $m-k$ rows of $\vec X$ are i.i.d. $\N(\mathbf{0},\vec I_{n\times n})$ and the bottom $k$ rows are the same fixed matrix from Theorem~\ref{informalthm:main-bin-bdd-to-slr}. This is a regime where Lasso would have solved the problem with a larger number of samples, giving some evidence of the optimality of Lasso in terms of sample complexity.
The hardness assumption in this case is the continuous learning with errors (CLWE) assumption which was recently shown to be as hard as standard and long-studied worst-case short vector problems on lattices~\cite{BRST20,gupte2022continuous}, which are used as the basis for many post-quantum cryptographic primitives.
We state the result formally in Theorem~\ref{thm:reduction-from-clwe} and prove it in Section~\ref{sec:reduction-from-clwe}.

\subsection{Perspectives and Open Problems}

We view our results as an initial foray into understanding the landscape of {\em average-case hardness} of the sparse linear regression problem. One of our results shows a distribution of average-case hard $k$-SLR instances (under worst-case hardness of lattice problems). However, the design matrices that arise in this distribution are very ill-conditioned; even though the $k$-SLR instances they define are identifiable, the design matrices are singular and have RE constant $0$. An immediate open question arising from this result is whether one can come up with a more robust average-case hard distribution for $k$-SLR. 

More broadly, the connection to lattice problems that we exploit in our reduction inspires a fascinating array of open questions. To begin with, as we briefly discussed above, the recent improvements to $k$-SLR~\cite{kelnerpreconditioning,DBLP:journals/corr/abs-2305-16892} proceed via pre-conditioning the design matrix. Analogously, the famed LLL algorithm of \cite{lenstra1982factoring} from the world of lattices is nothing but a pre-conditioning algorithm for lattice bases.
This leads us to ask: is there a constructive way to use (ideas from) lattice basis reduction to get improved $k$-SLR algorithms? We mention here the results of \cite{DBLP:journals/tit/GamarnikKZ21,DBLP:conf/colt/ZadikSWB22} for positive indications of the effectiveness of lattice basis reduction in solving statistical problems. In the lattice world, it is known that if one allows for arbitrary (potentially unbounded-time) pre-processing of a given lattice basis, BDD can be solved with a polynomial approximation factor~\cite{DBLP:conf/approx/LiuLM06}. Is a similar statement true for $k$-SLR?  We believe an exploration of these questions and others is a fruitful avenue for future research.

\paragraph{Concurrent work.} 
The concurrent work of \cite{Buhai24} shows hardness of achieving non-trivial prediction error in sparse linear regression with Gaussian designs in the improper setting (where the estimator need not be sparse) by reducing from a slight variant of a standard sparse PCA problem. In their setting, they deduce a sample complexity lower bound of (roughly) $k^2$ for efficient algorithms, where (roughly) $k$ is possible information-theoretically, by assuming a (roughly) $k^2$ sample complexity lower bound on efficient algorithms for the sparse PCA problem. (Their reduction produces instances in which the noise is standard Gaussian. Our main theorem is written in terms of worst-case noise, but we show a similar statement for independent Gaussian noise in %
\ifnum\llncs=0
Appendix~\ref{section:independent-noise}.)%
\else
the full version~\cite{gupte2024sparse}.)%
\fi

\subsection{Road Map of Main Results}

We prove Theorem~\ref{informalthm:main-bin-bdd-to-slr} through a sequence of steps:
\begin{enumerate}
    \item In Section~\ref{sec:main-reduction}, we state and prove our reduction from $\BinBDD_{d, \alpha}$ to $k$-$\SLR$ (Theorem~\ref{thm:main-bin-bdd-to-slr}).
    \item In Section~\ref{sec:proof-of-re-constant-appendix}, we give a bound on the restricted-eigenvalue constant $\RE$ of the instances produced by this reduction in terms of $\cond(\matB)$, the condition number of the $\BinBDD_{d, \alpha}$ lattice instance $\matB$ (Lemma~\ref{lemma:re-constant-lower-bound}).
    \item In Section~\ref{sec:lasso-on-our-instances-appendix}, using our reduction and the guarantees of Lasso in terms of $\RE$ (Theorem~\ref{thm:lasso-performance-in-terms-of-noise}), we state how quantitative \emph{improvements} to Lasso (in terms of the dependence on $\RE$) give algorithms for $\BinBDD_{d, \alpha}$ with quantitatively stronger parameters (Theorem~\ref{thm:better-lasso-gives-better-bdd}).
\end{enumerate}
For the second result (hardness of $k$-$\SLR$ in the non-identifiable regime even for \emph{well-conditioned} design matrices), we give the proof in Section~\ref{sec:reduction-from-clwe}.

\section{Preliminaries}\label{sec:prelims}
For $n \in \mathbb{N}$, we use the notation $[n] := \{1, 2, \cdots, n\}$. We use the standard definitions of the $\ell_1, \ell_2$, and $\ell_{\infty}$ norms in $\R^n$, as well as the $\ell_0$ ``norm'' which is defined as the \emph{sparsity}, or number of non-zero entries of a vector $\vect{v} \in \R^n$. We use $\vect{bold}$ notation to denote vectors and matrices. For a matrix $\mat{A} \in \R^{m \times n}$ and $i \in [m]$, we write $\col_i(\mat{A}) \in \R^n$ to denote the $i$th column of $\mat{A}$. We let $I_{n \times n}$ denote the standard identity matrix in $\R^n$. For $\vect{v} \in \R^n$, we use the notation $\round(\vect{v}) \in \Z^n$ to denote the point-wise rounding function applied to $\vect{v} \in \R^n$ (with arbitrary behavior at half integers). We use the standard definitions of the maximum and minimum \emph{singular values} of $\mat{A}$ for $m \geq n$:

\[ \sigma_{\min}(\mat{A}) := \min_{\vect{v} \in \R^n} \frac{\norm{\mat{A} \vect{v}}_2}{\norm{\vect{v}}_2},\ \ \ \ \ \ \sigma_{\max}(\mat{A}) := \max_{\vect{v} \in \R^n} \frac{\norm{\mat{A} \vect{v}}_2}{\norm{\vect{v}}_2}. \]
We use the notation $\cond(\mat{A})$ to denote the \emph{condition number} of $\mat{A}$, defined as
\[ \cond(\mat{A}) := \frac{ \sigma_{\max}(\mat{A})}{\sigma_{\min}(\mat{A})}. \]
We use the notation $\N(\mu, \sigma^2)$ to denote the univariate normal distribution with mean $\mu$ and standard deviation $\sigma$. Similarly, we use the notation $\N(\vect{\mu}, \mat{\Sigma})$ to denote the multivariate normal distribution with mean $\vect{\mu} \in \R^n$ and positive semi-definite covariance matrix $\mat{\Sigma} \in \R^{n \times n}$. 
We emphasize that we do not require $\mat{\Sigma}$ to be positive definite. 

\subsection{Bounded Distance Decoding}
We now define lattice quantities that will be useful for us.

\begin{definition}
For a full-rank matrix $\mat{B} \in \R^{\bdddim \times \bdddim}$, the lattice $\mathcal{L}(\mat{B})$ generated by basis $\mat{B}$ consists of the set all integer linear combinations of the columns of $\mat{B}$. As is standard, we define 
\[ \lambda_1(\mat{B}) := \min_{\vect{z} \in \Z^{\bdddim} \setminus \{\zero\}} \norm{\mat{B} \vect{z}}_2,\]
which corresponds to the shortest distance between any two elements of $\mathcal{L}(\mat{B})$. In this paper, since we consider a ($\pm1$) \emph{binary} version of the bounded distance decoding problem, we define the quantity
\[ \lambdabin(\mat{B}) := \min_{\vect{z}_1 \neq \vect{z}_2 \in \{1, -1\}^{\bdddim}} \norm{\mat{B} \vect{z}_1 - \mat{B} \vect{z}_2}_2. \]
\end{definition}
Note that while $\lambda_1(\mat{B})$ is a basis-independent quantity for the lattice $\mathcal{L}(\mat{B})$, $\lambdabin(\mat{B})$ depends on the basis $\mat{B}$.
\begin{lemma}\label{lemma:lambda-1-bound}
For any matrix $\mat{B} \in \R^{\bdddim \times \bdddim}$, we have $\sigma_{\min}(\mat{B}) \leq \lambda_1(\mat{B}) \leq \lambdabin(\mat{B}) \leq 2\sigma_{\max}(\mat{B})$.
\end{lemma}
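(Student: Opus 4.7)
The plan is to verify each of the three inequalities separately using the characterizations of $\sigma_{\min}(\mat{B})$, $\sigma_{\max}(\mat{B})$, $\lambda_1(\mat{B})$, and $\lambdabin(\mat{B})$ as optima of closely related optimization problems. Each step should be a one-line argument; no step looks like an obstacle.

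For the leftmost inequality $\sigma_{\min}(\mat{B}) \leq \lambda_1(\mat{B})$, I would let $\vect{z}^\ast \in \Z^{\bdddim} \setminus \{\zero\}$ be a minimizer in the definition of $\lambda_1(\mat{B})$ and write
\[ \lambda_1(\mat{B}) = \|\mat{B} \vect{z}^\ast\|_2 \geq \sigma_{\min}(\mat{B}) \cdot \|\vect{z}^\ast\|_2 \geq \sigma_{\min}(\mat{B}), \]
where the last step uses the fact that any nonzero integer vector has Euclidean norm at least $1$.

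For the middle inequality $\lambda_1(\mat{B}) \leq \lambdabin(\mat{B})$, I would take a pair $\vect{z}_1 \neq \vect{z}_2 \in \{1,-1\}^{\bdddim}$ achieving the minimum in the definition of $\lambdabin(\mat{B})$. Their difference $\vect{z}_1 - \vect{z}_2$ is a nonzero integer vector, so $\mat{B}(\vect{z}_1 - \vect{z}_2)$ is a nonzero element of the lattice $\mathcal{L}(\mat{B})$, and hence
\[ \lambdabin(\mat{B}) = \|\mat{B}(\vect{z}_1 - \vect{z}_2)\|_2 \geq \lambda_1(\mat{B}). \]

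For the rightmost inequality $\lambdabin(\mat{B}) \leq 2 \sigma_{\max}(\mat{B})$, the key observation is that we do not need a generic pair in $\{1,-1\}^{\bdddim}$: we can specialize to two vectors that differ in exactly one coordinate. Concretely, I would fix any index $i \in [\bdddim]$ and pick $\vect{z}_1, \vect{z}_2 \in \{1,-1\}^{\bdddim}$ agreeing on every coordinate except the $i$th, so that $\vect{z}_1 - \vect{z}_2 = \pm 2 \vect{e}_i$. Then
\[ \lambdabin(\mat{B}) \leq \|\mat{B}(\vect{z}_1 - \vect{z}_2)\|_2 = 2 \|\mat{B} \vect{e}_i\|_2 \leq 2 \sigma_{\max}(\mat{B}), \]
since the $i$th column of $\mat{B}$ has norm at most $\sigma_{\max}(\mat{B})$ (as $\|\vect{e}_i\|_2 = 1$). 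The only subtlety worth flagging is making sure to choose differences supported on a single coordinate rather than using a generic $\{1,-1\}$ pair, which would only yield the weaker bound $2\sqrt{\bdddim}\cdot \sigma_{\max}(\mat{B})$.
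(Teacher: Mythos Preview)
Your proposal is correct and mirrors the paper's proof essentially step for step: the paper uses the same minimizer argument with $\|\vect{z}\|_2 \ge 1$ for the first inequality, calls the middle inequality immediate (your difference-of-$\pm1$-vectors argument is exactly the obvious justification), and proves the last inequality by specializing to a pair differing in a single coordinate so that $\vect{z}_1 - \vect{z}_2 = 2\vect{e}_i$.
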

\begin{proof}
It is immediate that $\lambda_1(\mat{B}) \leq \lambdabin(\mat{B})$. Let $\vect{e} \in \Z^{\bdddim}$ be any standard basis vector. We have
\[\lambdabin(\mat{B}) = \min_{\vect{z}_1 \neq \vect{z}_2 \in \{1, -1\}^{\bdddim}} \norm{\mat{B} (\vect{z}_1 - \vect{z}_2)}_2 \leq 2\norm{\mat{B} \vect{e}}_2 \leq 2\sigma_{\max}(\mat{B}) \norm{\vect{e}}_2 = 2\sigma_{\max}(\mat{B}). \]
We also have
\[\lambda_1(\mat{B}) = \min_{\vect{z} \in \Z^{\bdddim} \setminus\{\zero\}} \norm{\mat{B} \vect{z}}_2 \geq  \min_{\vect{z} \in \Z^{\bdddim} \setminus\{\zero\}} \sigma_{\min}(\mat{B}) \norm{\vect{z}}_2 \geq \sigma_{\min}(\mat{B}), \]
since all $\vect{z} \in \Z^{\bdddim} \setminus \{\zero\}$ satisfy $\norm{\vect{z}}_2 \geq 1$.
\end{proof}

\begin{definition}[$\BinBDD_{d, \alpha}$]
Let $\BinBDD_{d, \alpha}$ be the following worst-case search problem in dimension $\bdddim \in \mathbb{N}$ with parameter $\alpha < 1/2$. Given full rank $\mat{B} \in \R^{\bdddim \times \bdddim}$ and $\vect{t} \in \R^{\bdddim}$, output $\vect{z} \in \{-1,1\}^{\bdddim}$ such that $\norm{\mat{B} \vect{z} - \vect{t}}_2 \leq \alpha \cdot \lambdabin(\mat{B})$ (if one exists). Equivalently, given $(\mat{B}, \mat{B} \vect{z} + \vect{e})$ for $\norm{\vect{e}}_2 \leq \alpha \cdot \lambdabin(\mat{B})$, output $\vect{z} \in \{-1,1\}^{\bdddim}$.
\end{definition}

Note that by definition of $\lambdabin(\mat{B})$, the constraint $\alpha < 1/2$ guarantees the uniqueness of $\vect{z} \in \{-1,1\}^{\bdddim}$. As an aside, the standard $\BDD$ problem instead uses $\lambda_1(\matB)$ and allows outputting any $\vect{z} \in \Z^d$ instead of $\vect{z} \in \{-1, 1\}^d$. Additionally, note that we could easily consider full-rank $\matB \in \R^{d_1 \times d}$ for $d \leq d_1 \leq O(d)$ and our main results would all go through, and similarly for larger $d_1$ with a slight change in the parameter dependence. Furthermore, even though we write $\mat{B} \in \R^{\bdddim \times \bdddim}$, we will assume the entries of $\mat{B}$ have $\poly(\bdddim)$ bits of precision. (This is just for simplicity and convenience; one can modify the corresponding algorithmic statements so that they run in polynomial time in the description length of the basis $\matB$.)

We note that there is a reduction from general BDD to $\BinBDD$, at the cost of making the instance degenerate (in the sense of having a non-trivial kernel). See %
\ifnum\llncs=0
Appendix~\ref{sec:reduction-from-bdd-to-binbdd}.
\else
the full version~\cite{gupte2024sparse}.
\fi

We now recall standard spectral bounds for random Gaussian matrices.
\begin{lemma}[As in \cite{rudelson2010non}]\label{lemma:gaussian-singular-values}
Let $\mat{R} \in \R^{\slrrows \times \bdddim}$ be such that $R_{i,j} \sim_{\iid} \N(0,1)$. For all $t > 0$, we have
\[ \Pr\left[\sqrt{\slrrows} - \sqrt{\bdddim} - t \leq \sigma_{\min}(\mat{R}) \leq \sigma_{\max}(\mat{R}) \leq \sqrt{\slrrows} + \sqrt{\bdddim} + t \right] \geq 1 - 2 e^{-t^2/2}.\]
In particular, for $\slrrows \geq 16\bdddim$, by setting $t = \sqrt{\slrrows}/4$, we have
\[\Pr\left[\frac{\sqrt{\slrrows}}{2} \leq \sigma_{\min}(\mat{R}) \leq \sigma_{\max}(\mat{R}) \leq \frac{3\sqrt{\slrrows}}{2} \right] \geq 1 - 2 e^{-\slrrows/32}.\]
\end{lemma}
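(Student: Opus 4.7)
The plan is to combine two standard ingredients: Gordon's Gaussian min-max theorem to bound the expected extreme singular values, and Gaussian Lipschitz concentration to control deviation from those expectations. Both are classical, so the main task is to assemble them correctly and be careful that the union bound produces $2e^{-t^2/2}$ rather than $4e^{-t^2/2}$.

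First, I would establish
\[\E[\sigma_{\max}(\mat{R})] \leq \sqrt{m} + \sqrt{d}, \qquad \E[\sigma_{\min}(\mat{R})] \geq \sqrt{m} - \sqrt{d}\]
by Gordon's Gaussian comparison inequality. Writing $\sigma_{\max}(\mat{R}) = \sup_{u \in \sphere^{m-1},\, v \in \sphere^{d-1}} u^\top \mat{R} v$ and comparing the Gaussian process $X_{u,v} := u^\top \mat{R} v$ with the auxiliary process $Y_{u,v} := g^\top u + h^\top v$ (for independent $g \sim \N(\zero, \mat{I}_{m \times m})$ and $h \sim \N(\zero, \mat{I}_{d \times d})$), a direct computation of increments verifies the Slepian--Gordon comparison hypothesis. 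Taking suprema and using $\E\|g\|_2 \leq \sqrt{m}$, $\E\|h\|_2 \leq \sqrt{d}$ gives the upper bound on $\E[\sigma_{\max}]$, and the min-max form of Gordon's inequality gives the lower bound on $\E[\sigma_{\min}]$.

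Next I would invoke Gaussian Lipschitz concentration. Viewing $\mat{R}$ as a vector in $\R^{md}$, both of the maps $\mat{R} \mapsto \sigma_{\max}(\mat{R})$ and $\mat{R} \mapsto \sigma_{\min}(\mat{R})$ are $1$-Lipschitz with respect to the Frobenius norm (as follows from the triangle inequality applied to their variational characterizations). The standard Gaussian concentration inequality for Lipschitz functions then yields the one-sided tails
\[\Pr[\sigma_{\max}(\mat{R}) > \E[\sigma_{\max}(\mat{R})] + t] \leq e^{-t^2/2}, \qquad \Pr[\sigma_{\min}(\mat{R}) < \E[\sigma_{\min}(\mat{R})] - t] \leq e^{-t^2/2}.\]
Substituting the Gordon bounds on expectations and union-bounding over just these two events (only the upper tail is needed for $\sigma_{\max}$, only the lower tail for $\sigma_{\min}$) delivers the two-sided inequality with failure probability at most $2e^{-t^2/2}$, as stated.

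For the ``in particular'' specialization, I would plug in $t = \sqrt{m}/4$ and use $m \geq 16 d$, so that $\sqrt{d} \leq \sqrt{m}/4$; then $\sqrt{m} - \sqrt{d} - t \geq \sqrt{m}/2$ and $\sqrt{m} + \sqrt{d} + t \leq 3\sqrt{m}/2$, and the failure probability becomes $2e^{-m/32}$. No serious obstacle is expected here: this result is entirely standard and the heavy lifting lives in the cited theorem of Rudelson--Vershynin. The one subtlety worth noting is the one-sided tail accounting in the union bound, which is precisely what keeps the constant inside the exponential at $1/2$ and the prefactor at $2$.
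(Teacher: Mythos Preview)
Your proposal is correct and is exactly the standard argument from the cited Rudelson--Vershynin survey. Note, however, that the paper itself does not give a proof of this lemma at all---it is stated as a black-box citation---so there is nothing to compare against beyond observing that your write-up reconstructs the classical Gordon-plus-concentration argument that underlies the cited result.
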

\noindent Throughout the paper, let $\chi^2(m)$ denote the chi-squared distribution with $m$ degrees of freedom.

\begin{lemma}[As in \cite{laurent2000adaptive}, Corollary of Lemma~1]\label{lemma:chi-squared-tail-bound}
Let $X \sim \chi^2(\slrrows)$, i.e., $X$ is distributed according to the $\chi^2$ distribution with $\slrrows$ degrees of freedom. For any $t \geq 0$, we have
\[ \Pr[X \geq \slrrows + 2 \sqrt{t\slrrows} + 2t] \leq e^{-t}. \]
In particular, setting $t = \slrrows/4$, we get
\[ \Pr[X \geq 5\slrrows/2] \leq e^{-\slrrows/4}.\]
\end{lemma}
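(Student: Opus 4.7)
The plan is to prove this by a standard Chernoff bound, which is the canonical route to this Laurent--Massart-style tail bound. First I would write $X = \sum_{i=1}^{\slrrows} Z_i^2$ where the $Z_i$ are i.i.d.\ standard normals, so that the moment generating function factors as
\[ \mathbb{E}[e^{\lambda X}] = (1 - 2\lambda)^{-\slrrows/2} \]
for all $\lambda \in (0, 1/2)$, using the single-variable identity $\mathbb{E}[e^{\lambda Z^2}] = (1-2\lambda)^{-1/2}$.

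Next, Markov's inequality applied to $e^{\lambda X}$ gives
\[ \Pr[X \geq \slrrows + u] \leq \exp\bigl(-\lambda(\slrrows + u) - (\slrrows/2)\log(1 - 2\lambda)\bigr) \]
for any $u > 0$. The Taylor estimate $-\log(1-2\lambda) \leq 2\lambda + 2\lambda^2/(1-2\lambda)$, valid on $(0,1/2)$, simplifies the right-hand side to $\exp\bigl(-\lambda u + \slrrows \lambda^2/(1-2\lambda)\bigr)$, which exposes the sub-exponential behavior of $X - \slrrows$.

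The final step is to optimize $\lambda$ given the target deviation $u = 2\sqrt{t\slrrows} + 2t$. Choosing $\lambda$ so that $\slrrows \lambda/(1 - 2\lambda) = \sqrt{t\slrrows}$ (concretely, $\lambda = \sqrt{t/\slrrows}/(1 + 2\sqrt{t/\slrrows})$) causes the exponent to collapse to exactly $-t$ after a short algebraic manipulation, giving the main inequality $\Pr[X \geq \slrrows + 2\sqrt{t\slrrows} + 2t] \leq e^{-t}$. The ``in particular'' specialization with $t = \slrrows/4$ then follows by direct substitution: $\slrrows + 2\sqrt{\slrrows \cdot \slrrows/4} + 2(\slrrows/4) = \slrrows + \slrrows + \slrrows/2 = 5\slrrows/2$, with failure probability $e^{-\slrrows/4}$.

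I would expect the only real obstacle to be the bookkeeping in the optimization, verifying that the exponent lands on exactly $-t$ with no slack from the Taylor bound on $-\log(1-2\lambda)$; the particular form $2\sqrt{t\slrrows} + 2t$ is engineered precisely so this works out. Since the bound is explicitly attributed to \cite{laurent2000adaptive}, one could equally well just invoke their proof verbatim and only verify the trivial numerical specialization $t = \slrrows/4$ by hand.
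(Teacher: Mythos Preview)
Your Chernoff argument is correct and is exactly the standard Laurent--Massart derivation; the paper itself does not give a proof at all, simply citing \cite{laurent2000adaptive} for the main inequality and noting the $t=\slrrows/4$ specialization. So your proposal supplies strictly more detail than the paper does, and (as you anticipated in your final paragraph) one could equally just defer to the cited reference and check the arithmetic $\slrrows + 2\sqrt{\slrrows\cdot\slrrows/4} + 2(\slrrows/4) = 5\slrrows/2$.
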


\subsection{Sparse Linear Regression}
We now define the problem of $k$-sparse linear regression ($k$-$\SLR$ for short), which, for computational complexity simplicity, is phrased in terms of mean squared error (as opposed to mean squared prediction error).

\begin{definition}[$k$-$\SLR$]\label{def:worst-case-k-slr}
Given a design matrix $\mat{X} \in \R^{\slrrows \times \slrcols}$, a target vector $\vect{y} \in \R^{\slrrows}$, and $\delta > 0$, output a $k$-sparse $\vect{\widehat{\theta}} \in \R^{\slrcols}$ such that
\[\frac{\|\mat{X} \vect{\widehat{\theta}} - \vect{y}\|^2_2}{\slrrows} \le \delta^2,\]
assuming one exists.
\end{definition}

We follow the convention of~\cite{negahban2012unified} of working with design matrices $\mat{X}$ that are \emph{column-normalized}:

\begin{definition}[As in \cite{negahban2012unified}]\label{def:column-normalization}
We say a matrix $\widetilde{\mat{X}} \in \R^{m \times n}$ is \emph{column-normalized} if for all $i \in [n]$, \[\norm*{\col_i (\widetilde{\mat{X}})}_2 \leq \sqrt{m}.\]
\end{definition}

Now, we show that a column-normalized $\widetilde{\mat{X}}$ also satisfies a spectral bound with respect to sparse vectors.

\begin{lemma}\label{lem:max-sparse-X-tilde}
Suppose $\widetilde{\mat{X}} \in \R^{m \times n}$ is column-normalized. Then, 
\begin{align*}
    \max_{\substack{\norm{\vect{v}}_2 = 1,\\ \norm{\vect{v}}_0 \le k}} \norm{\widetilde{\mat{X}}\vect{v}}_2 \le \sqrt{km}.
\end{align*}
\end{lemma}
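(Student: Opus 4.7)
The plan is to bound $\|\widetilde{\mat{X}} \vect{v}\|_2$ by writing $\widetilde{\mat{X}} \vect{v}$ as a linear combination of at most $k$ columns of $\widetilde{\mat{X}}$, and then invoking the triangle inequality together with a standard Cauchy--Schwarz bound relating $\|\vect{v}\|_1$ to $\|\vect{v}\|_2$ for sparse vectors.

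In more detail, I would let $\vect{v}$ be any unit vector with $\|\vect{v}\|_0 \le k$, and let $S \subseteq [n]$ with $|S| \le k$ denote its support. Then
\[
\widetilde{\mat{X}} \vect{v} \;=\; \sum_{i \in S} v_i \cdot \col_i(\widetilde{\mat{X}}),
\]
so the triangle inequality together with the column-normalization assumption (Definition~\ref{def:column-normalization}) gives
\[
\|\widetilde{\mat{X}} \vect{v}\|_2 \;\le\; \sum_{i \in S} |v_i| \cdot \|\col_i(\widetilde{\mat{X}})\|_2 \;\le\; \sqrt{m} \sum_{i \in S} |v_i| \;=\; \sqrt{m}\,\|\vect{v}\|_1.
\]

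To finish, I would use the fact that for any $k$-sparse vector, $\|\vect{v}\|_1 \le \sqrt{k}\,\|\vect{v}\|_2$ (this is Cauchy--Schwarz applied to $\vect{v}$ and the indicator of its support). Since $\|\vect{v}\|_2 = 1$, this gives $\|\vect{v}\|_1 \le \sqrt{k}$, and hence $\|\widetilde{\mat{X}} \vect{v}\|_2 \le \sqrt{km}$. Taking the maximum over all such $\vect{v}$ yields the claim. There is no substantive obstacle here; the proof is essentially a two-line calculation combining column normalization with the $\ell_1$-vs-$\ell_2$ inequality on sparse vectors.
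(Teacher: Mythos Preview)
Your proof is correct and is essentially identical to the paper's own argument: both write $\widetilde{\mat{X}}\vect{v}$ as a sum over the at most $k$ nonzero columns, apply the triangle inequality and column normalization to get $\|\widetilde{\mat{X}}\vect{v}\|_2 \le \sqrt{m}\,\|\vect{v}\|_1$, and then use $\|\vect{v}\|_1 \le \sqrt{k}\,\|\vect{v}\|_2 = \sqrt{k}$.
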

\begin{proof}
Let $\vect{v}$ be a vector such that $\norm{\vect{v}}_2 = 1$ and $\norm{\vect{v}}_0 \le k$. By the triangle inequality and column normalization, we have
\begin{align*}
    \norm{ \widetilde{\mat{X}} \vect{v}}_2 = \norm*{ \sum_{i \in [n]} v_i \cdot  \col_{i}(\widetilde{\mat{X}}) }_2 \leq   \sum_{i \in [n]} |v_i| \cdot  \norm*{\col_{i}(\widetilde{\mat{X}}) }_2 \leq \sqrt{m} \cdot \norm{\vect{v}}_1 \leq \sqrt{km},
\end{align*}
where the last inequality holds because $\norm{\vect{v}}_1 \leq \sqrt{k} \norm{\vect{v}}_2 = \sqrt{k}$.
\end{proof}

To analyze the performance Lasso algorithm on the instances of $k$-$\SLR$ we obtain from our reduction, we slightly modify the analysis of~\cite{negahban2012unified} to allow for a more flexible definition of the \emph{restricted eigenvalue} (RE) constant of a matrix. First, we define the following cone. 
\begin{definition}
Let $S \subseteq [\slrcols]$ and $\REfac > 0$. We define $\cone_{\REfac}(S)$, the $\REfac$-cone for $S \subseteq [\slrcols]$, as follows:
\[\mathbb{C}_{\REfac}(S) := \{ \Delta \in \R^{\slrcols} : \|\Delta_{\bar{S}}\|_1 \le (1 + \REfac) \|\Delta_S\|_1 \}.\]
The notation $\bar{S}$ denotes the complement of $S$, namely $[\slrcols] \setminus S$, and the notation $\Delta_{I} \in \R^{|I|}$ denotes the restriction of $\Delta$ to coordinates in $I \subseteq [\slrcols]$.
\end{definition}
A standard definition (e.g., as in~\cite{negahban2012unified}) sets $\REfac = 2$, but we will use the flexibility of setting $\REfac$ close to $0$. Now, we define the restricted eigenvalue condition.

\begin{definition}\label{def:re-constant}
Suppose $\widetilde{\mat{X}} \in \R^{\slrrows \times \slrcols}$ is column-normalized. Then, we say $\widetilde{\mat{X}}$ satisfies the $(\REfac, \RE)$-\emph{restricted eigenvalue (RE) condition} for $S \subseteq [\slrcols]$ if
\[ \frac{\norm{\mat{\widetilde{X}} \vect{\theta}}_2^2}{\slrrows \cdot  \norm{\vect{\theta}}_2^2} \geq \RE\]
for all $\vect{\theta} \in \cone_{\REfac}(S)$.
\end{definition}
Note that this restricted eigenvalue condition corresponds to a restricted form of a strong convexity condition over the loss function
\[ \mathcal{L}(\vect{\theta}) := \frac{1}{2\slrrows} \norm{\vect{y} - \widetilde{\mat{X}} \vect{\theta}}_2^2. \]
Also note that the $\ell_1$-regularization-based Lasso estimator does not output a $k$-sparse solution. To obtain a $k$-sparse solution, we truncate $\vect{\widehat{\theta}}_\lambda$ to the $k$ entries of largest absolute value, to obtain the \textit{thresholded Lasso} estimate $\vect{\widehat{\theta}}_{\TL}$. Lemma~9 of \cite{zhang2014lower} shows that $\norm{\vect{\widehat{\theta}}_{\TL} - \vect{\theta}^*}_2 \le 5 \norm{\vect{\widehat{\theta}}_{\lambda} - \vect{\theta}^*}_2$.

Now, we state a modified version of Corollary 1 of~\cite{negahban2012unified}, which tells us an error bound on the thresholded Lasso estimator, given our modified definition of the restricted eigenvalue condition.

\begin{theorem}[Modified Version of Corollary 1 of \cite{negahban2012unified}]\label{thm:general-lasso-bound}
Let $\epsilon \in (0, 2]$, let $\widetilde{\mat{X}} \in \R^{\slrrows \times \slrcols}$ be column-normalized, and let $\widetilde{\vect{y}} = \widetilde{\mat{X}} \vect{\theta}^* + \widetilde{\vect{w}}$ for some $k$-sparse vector $\vect{\theta}^*$ supported on $S$ for $S \subseteq [\slrcols], |S| = k$. Suppose $\widetilde{\mat{X}}$ satisfies the $(\REfac, \RE)$-restricted eigenvalue condition for $S$. As long as 
\[ \lambda \geq \frac{2 + \REfac}{\REfac} \cdot \norm*{ \frac{1}{\slrrows} \widetilde{\mat{X}}^\top \widetilde{\vect{w}}}_{\infty}, \]
and $\lambda > 0$, then any Lasso solution $\vect{\widehat{\theta}}_{\lambda}$ with regularization parameter $\lambda$ satisfies the bound
\begin{align*}
    \norm{\vect{\widehat{\theta}}_{\lambda} - \vect{\theta}^*}^2_2 &\leq O \left( \frac{\lambda^2 k}{\RE^2 } \right).
\end{align*}
\end{theorem}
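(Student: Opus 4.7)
The plan is to carry out the standard Lasso analysis (a la Negahban et al.), tracking where the flexibility of the parameter $\epsilon$ enters. Write $\Delta := \widehat{\vect{\theta}}_{\lambda} - \vect{\theta}^*$. The starting point is the \emph{basic inequality}: since $\widehat{\vect{\theta}}_{\lambda}$ is a minimizer of the Lasso objective $\frac{1}{2\slrrows}\|\widetilde{\vect{y}} - \widetilde{\mat{X}}\vect{\theta}\|_2^2 + \lambda \|\vect{\theta}\|_1$, evaluating at $\vect{\theta} = \vect{\theta}^*$ yields
\[
\frac{1}{2\slrrows}\| \widetilde{\mat{X}}\Delta \|_2^2 \;\le\; \frac{1}{\slrrows}\langle \widetilde{\vect{w}}, \widetilde{\mat{X}}\Delta\rangle + \lambda\bigl(\|\vect{\theta}^*\|_1 - \|\widehat{\vect{\theta}}_\lambda\|_1\bigr).
\]
By Hölder, the inner-product term is at most $\|\tfrac{1}{\slrrows}\widetilde{\mat{X}}^\top \widetilde{\vect{w}}\|_\infty \cdot \|\Delta\|_1 \le \tfrac{\epsilon}{2+\epsilon}\lambda \|\Delta\|_1$ by the assumption on $\lambda$. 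Using that $\vect{\theta}^*$ is supported on $S$ and the reverse triangle inequality, $\|\vect{\theta}^*\|_1 - \|\widehat{\vect{\theta}}_\lambda\|_1 \le \|\Delta_S\|_1 - \|\Delta_{\bar S}\|_1$.

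Second, I will use nonnegativity of $\tfrac{1}{2\slrrows}\|\widetilde{\mat{X}}\Delta\|_2^2$ to derive cone membership. Combining the two bounds above gives
\[
0 \;\le\; \tfrac{\epsilon}{2+\epsilon}\,\lambda\,(\|\Delta_S\|_1 + \|\Delta_{\bar S}\|_1) + \lambda\,(\|\Delta_S\|_1 - \|\Delta_{\bar S}\|_1),
\]
and rearranging (multiplying through by $(2+\epsilon)/\lambda$) produces $\|\Delta_{\bar S}\|_1 \le (1+\epsilon)\|\Delta_S\|_1$, which is exactly the statement that $\Delta \in \cone_\epsilon(S)$. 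This is the step where the modified threshold $\tfrac{2+\epsilon}{\epsilon}$ on $\lambda$ is used, and matches the classical $\epsilon = 2$ version (giving the usual factor $3$) while remaining valid for small $\epsilon$.

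Third, since $\Delta \in \cone_\epsilon(S)$, the restricted eigenvalue hypothesis gives $\tfrac{1}{\slrrows}\|\widetilde{\mat{X}}\Delta\|_2^2 \ge \RE \|\Delta\|_2^2$. Plugging this back into the basic inequality together with the Hölder bound yields
\[
\tfrac{\RE}{2}\|\Delta\|_2^2 \;\le\; \tfrac{\epsilon}{2+\epsilon}\,\lambda\,\|\Delta\|_1 + \lambda\,\|\Delta_S\|_1.
\]
Bounding $\|\Delta_S\|_1 \le \sqrt{k}\|\Delta\|_2$ by Cauchy–Schwarz and $\|\Delta\|_1 = \|\Delta_S\|_1 + \|\Delta_{\bar S}\|_1 \le (2+\epsilon)\|\Delta_S\|_1 \le (2+\epsilon)\sqrt{k}\|\Delta\|_2$ (using cone membership) gives $\tfrac{\RE}{2}\|\Delta\|_2^2 \le \lambda\sqrt{k}\|\Delta\|_2(1+\epsilon)$, hence $\|\Delta\|_2 \le \tfrac{2(1+\epsilon)\lambda\sqrt{k}}{\RE}$. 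Squaring and using $\epsilon \le 2$ absorbs constants into the $O(\cdot)$, yielding the claimed bound $\|\Delta\|_2^2 \le O(\lambda^2 k/\RE^2)$.

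There is no substantial obstacle; the only nonstandard piece is the cone parameter $\epsilon$, which propagates transparently through the Hölder step and the cone-membership derivation. The main point is simply to rerun the Negahban et al. argument and verify that choosing $\lambda \ge \tfrac{2+\epsilon}{\epsilon}\|\tfrac{1}{\slrrows}\widetilde{\mat{X}}^\top\widetilde{\vect{w}}\|_\infty$ is the correct generalization of the standard ``$\lambda \ge 2\|\tfrac{1}{\slrrows}\widetilde{\mat{X}}^\top\widetilde{\vect{w}}\|_\infty$'' hypothesis (which one recovers at $\epsilon=2$), and that the final constants remain $O(1)$ for the range $\epsilon \in (0,2]$.
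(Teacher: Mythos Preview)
Your proof is correct and follows precisely the approach the paper indicates: it reruns the Negahban et al.\ argument, with the only modification being in the cone-membership step (their Lemma~1), where the adjusted threshold $\lambda \ge \tfrac{2+\epsilon}{\epsilon}\|\tfrac{1}{m}\widetilde{\mat{X}}^\top\widetilde{\vect{w}}\|_\infty$ yields $\Delta \in \cone_\epsilon(S)$ rather than $\cone_2(S)$. The paper does not spell out these steps and simply cites the reference; your write-up fills in exactly those details.
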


While the bounds in the theorem statement above do not depend on column normalization of $\widetilde{\mat{X}}$ (or corresponding normalization of $\widetilde{\vect{w}}$), our definition of the restricted eigenvalue condition needs the matrix to be column-normalized (in particular, so that the RE constant is scale invariant with respect to $\mat{B}$). The $\REfac$ dependence in $\lambda$ comes from modifying~\cite[Lemma 1]{negahban2012unified}, which needs the bound on $\lambda$ as above to guarantee that the optimal error $\widehat{\Delta} := \vect{\widehat{\theta}}_{\lambda} - \vect{\theta}^*$ satisfies $\widehat{\Delta} \in \cone_{\REfac}(S)$.

\begin{theorem}\label{thm:lasso-performance-in-terms-of-noise}
Let $\epsilon \in (0, 2]$, let $\widetilde{\mat{X}} \in \R^{\slrrows \times \slrcols}$ be column-normalized, and let $\widetilde{\vect{y}} = \widetilde{\mat{X}} \vect{\theta}^* + \widetilde{\vect{w}}$ for some $k$-sparse vector $\vect{\theta}^*$ supported on $S$ for $S \subseteq [\slrcols], |S| = k$. Suppose $\widetilde{\mat{X}}$ satisfies the $(\REfac, \RE)$-restricted eigenvalue condition for $S$. For an optimally chosen positive regularization parameter,
the thresholded Lasso solution $\vect{\widehat{\theta}}_{\TL}$ with satisfies the prediction error bound
\begin{align*}
    \frac{1}{m} \| \widetilde{\mat{X}} \widehat{\vect{\theta}}_{\mathsf{TL}} - \widetilde{\mat{X}} \vect{\theta}^*\|_2^2 \le O\left( \frac{\norm{\widetilde{\vect{w}}}_2^2 \cdot k^2}{\RE^2 \cdot  \epsilon^2 \cdot m} \right).
\end{align*}
\end{theorem}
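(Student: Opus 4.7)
The plan is to chain together Theorem~\ref{thm:general-lasso-bound}, the thresholding bound from \cite{zhang2014lower}, and the sparse operator norm bound of Lemma~\ref{lem:max-sparse-X-tilde}. The only new ingredient is choosing the regularization parameter $\lambda$ and passing from an $\ell_2$ parameter recovery bound to a mean squared prediction error bound.

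First, I would bound $\norm{\frac{1}{m} \widetilde{\mat{X}}^\top \widetilde{\vect{w}}}_\infty$, which is the quantity that drives the required lower bound on $\lambda$ in Theorem~\ref{thm:general-lasso-bound}. The $i$-th coordinate equals $\frac{1}{m} \langle \col_i(\widetilde{\mat{X}}), \widetilde{\vect{w}}\rangle$, so by Cauchy--Schwarz and column-normalization (Definition~\ref{def:column-normalization}), it is bounded by $\frac{1}{m} \cdot \sqrt{m} \cdot \norm{\widetilde{\vect{w}}}_2 = \norm{\widetilde{\vect{w}}}_2/\sqrt{m}$. Since $\epsilon \in (0,2]$, we have $\frac{2+\epsilon}{\epsilon} = O(1/\epsilon)$, so setting $\lambda := \frac{2+\epsilon}{\epsilon} \cdot \frac{\norm{\widetilde{\vect{w}}}_2}{\sqrt{m}}$ satisfies the hypothesis of Theorem~\ref{thm:general-lasso-bound} and satisfies $\lambda^2 = O\!\left(\norm{\widetilde{\vect{w}}}_2^2/(\epsilon^2 m)\right)$.

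Plugging this choice of $\lambda$ into Theorem~\ref{thm:general-lasso-bound} yields
\[ \norm{\vect{\widehat{\theta}}_{\lambda} - \vect{\theta}^*}_2^2 \le O\!\left( \frac{\norm{\widetilde{\vect{w}}}_2^2 \cdot k}{\epsilon^2 \cdot m \cdot \RE^2} \right). \]
Applying Lemma~9 of \cite{zhang2014lower} (recalled in the excerpt) gives $\norm{\vect{\widehat{\theta}}_{\TL} - \vect{\theta}^*}_2^2 \le 25 \norm{\vect{\widehat{\theta}}_{\lambda} - \vect{\theta}^*}_2^2$, so the same bound holds for the thresholded estimator up to an absorbed constant.

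Finally, to convert this $\ell_2$ parameter error bound into the desired prediction error, note that $\vect{\widehat{\theta}}_{\TL} - \vect{\theta}^*$ is the difference of two $k$-sparse vectors and hence has at most $2k$ nonzero entries. Applying Lemma~\ref{lem:max-sparse-X-tilde} with sparsity parameter $2k$ (i.e., $\norm{\widetilde{\mat{X}} \vect{v}}_2^2 \le 2km \cdot \norm{\vect{v}}_2^2$ whenever $\norm{\vect{v}}_0 \le 2k$) gives
\[ \frac{1}{m} \norm{\widetilde{\mat{X}} \vect{\widehat{\theta}}_{\TL} - \widetilde{\mat{X}} \vect{\theta}^*}_2^2 \le 2k \cdot \norm{\vect{\widehat{\theta}}_{\TL} - \vect{\theta}^*}_2^2 \le O\!\left( \frac{\norm{\widetilde{\vect{w}}}_2^2 \cdot k^2}{\epsilon^2 \cdot m \cdot \RE^2} \right), \]
which is the claimed bound. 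There is no real obstacle here: the only subtlety is correctly tracking the $\epsilon$-dependence introduced by modifying the RE cone parameter (which shows up as a $1/\epsilon^2$ factor via the lower bound on $\lambda$) and remembering to use the $2k$-sparse version of Lemma~\ref{lem:max-sparse-X-tilde} rather than the $k$-sparse version, since this is what produces the final $k^2$ (rather than $k$) factor in the prediction error.
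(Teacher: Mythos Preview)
Your proposal is correct and follows essentially the same approach as the paper's proof: both bound $\norm{\frac{1}{m}\widetilde{\mat{X}}^\top\widetilde{\vect{w}}}_\infty$ via Cauchy--Schwarz and column normalization, plug the minimal admissible $\lambda$ into Theorem~\ref{thm:general-lasso-bound}, apply the thresholding lemma of \cite{zhang2014lower}, and then use Lemma~\ref{lem:max-sparse-X-tilde} with sparsity $2k$ to pass from $\ell_2$ parameter error to prediction error. The only difference is the order in which you present the steps.
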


\begin{proof}
    From Theorem~\ref{thm:general-lasso-bound}, we know that for
    \[ \lambda \geq \frac{2 + \REfac}{\REfac} \cdot \norm*{ \frac{1}{\slrrows} \widetilde{\mat{X}}^\top \widetilde{\vect{w}}}_{\infty}, \]
    we have
    \begin{align*}
    \norm{\vect{\widehat{\theta}}_{\lambda} - \vect{\theta}^*}_2 &\leq O \left( \frac{\lambda \sqrt{k}}{\RE } \right).
\end{align*}
By \cite[Lemma 9]{zhang2014lower}, we therefore have
\begin{align*}
    \norm{\vect{\widehat{\theta}}_{\TL} - \vect{\theta}^*}_2 &\leq O \left( \frac{\lambda \sqrt{k}}{\RE } \right).
\end{align*}
Plugging in the optimal choice of $\lambda$ yields
    \begin{align*}
    \norm{\vect{\widehat{\theta}}_{\TL} - \vect{\theta}^*}_2 &\leq O \left( \frac{\sqrt{k}}{\epsilon \cdot \RE } \cdot \norm*{ \frac{1}{\slrrows} \widetilde{\mat{X}}^\top \widetilde{\vect{w}}}_{\infty} \right).
\end{align*}
Since $\widetilde{\matX}$ is column-normalized, we know
\begin{align*}
    \norm*{ \frac{1}{\slrrows} \widetilde{\mat{X}}^\top \widetilde{\vect{w}}}_{\infty} = \frac{1}{m} \max_{i \in [n]} \left| \inner*{\col_i(\widetilde{\matX}), \widetilde{\vect{w}}} \right| \leq \frac{1}{m} \norm*{\col_i(\widetilde{\matX})}_2 \norm*{\widetilde{\vect{w}}}_2 \leq \frac{\norm*{\widetilde{\vect{w}}}_2}{\sqrt{m}},
\end{align*}
where the first inequality is due to Cauchy-Schwarz, and the second inequality is due to column normalization. Combining the two above inequalities, we have
    \begin{align*}
    \norm{\vect{\widehat{\theta}}_{\TL} - \vect{\theta}^*}_2 &\leq O \left( \frac{\sqrt{k} \cdot \norm*{\widetilde{\vect{w}}}_2}{\epsilon \cdot \RE \cdot \sqrt{m}} \right).
\end{align*}
Since $\vect{\widehat{\theta}}_{\TL} - \vect{\theta}^*$ is $2k$-sparse, we can apply Lemma~\ref{lem:max-sparse-X-tilde} to get
    \begin{align*}
    \norm{\widetilde{\matX} \vect{\widehat{\theta}}_{\TL} - \widetilde{\matX} \vect{\theta}^*}_2 &\leq O \left( \frac{k \cdot \norm*{\widetilde{\vect{w}}}_2}{\epsilon \cdot \RE} \right).
\end{align*}
Squaring and dividing by $m$ gives the desired result.
\end{proof}

\begin{remark}
We remark that Theorems~\ref{thm:general-lasso-bound} and \ref{thm:lasso-performance-in-terms-of-noise} imply that in the noiseless setting of $k$-$\SLR$, that is, when $\vect{y} = \mat{X} \thetastar$, Lasso not only achieves prediction error $0$ but also recovers the ground truth $\thetastar$ as long as the restricted eigenvalue constant is strictly positive. In particular, this means that any reduction showing the $\mathbf{NP}$-hardness of noiseless sparse linear regression must produce instances with design matrices with restricted eigenvalue $0$ (unless $\mathbf{P} = \mathbf{NP}$).
\end{remark}

However, there is no reason information theoretically that the prediction error should have dependence on the RE constant $\RE$. We now state a bound on the prediction error of the optimal $\ell_0$ predictor, which need not be computationally efficient. 

\begin{proposition}\label{prop:worst-case-info-theoretic-bound}
Let $\mat{X} \in \R^{m \times n}$, and let $\vect{y} = \mat{X} \vect{\theta}^* + \vect{w}$ for some $k$-sparse $\vect{\theta}^* \in \R^n$. The $\ell_0$-predictor
\[\vect{\widehat{\theta}} \in \argmin_{\norm{\vect{\theta}}_0 \leq k}  \norm{\mat{X} \vect{\theta} - \vect{y}}_2^2\]
satisfies the prediction error bound
\[  \frac{\norm{ \mat{X} \vect{\widehat{\theta}} - \mat{X} \vect{\theta}^*}^2_2}{m} \leq \frac{4 \norm{\vect{w}}_2^2}{m}. \]
\end{proposition}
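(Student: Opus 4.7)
The plan is a two-line argument based on the optimality of $\vect{\widehat{\theta}}$ together with the triangle inequality. The key observation is that since $\vect{\theta}^*$ itself is $k$-sparse, it is a feasible point for the optimization problem defining $\vect{\widehat{\theta}}$, and so $\vect{\widehat{\theta}}$ achieves an objective value no worse than $\vect{\theta}^*$.

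Concretely, I would first note that by the feasibility of $\vect{\theta}^*$ and the optimality of $\vect{\widehat{\theta}}$,
\[ \norm{\mat{X} \vect{\widehat{\theta}} - \vect{y}}_2 \;\leq\; \norm{\mat{X} \vect{\theta}^* - \vect{y}}_2 \;=\; \norm{\vect{w}}_2. \]
Then by the triangle inequality, applied to the decomposition $\mat{X}\vect{\widehat{\theta}} - \mat{X}\vect{\theta}^* = (\mat{X}\vect{\widehat{\theta}} - \vect{y}) + (\vect{y} - \mat{X}\vect{\theta}^*)$, we have
\[ \norm{\mat{X} \vect{\widehat{\theta}} - \mat{X} \vect{\theta}^*}_2 \;\leq\; \norm{\mat{X} \vect{\widehat{\theta}} - \vect{y}}_2 + \norm{\vect{y} - \mat{X} \vect{\theta}^*}_2 \;\leq\; 2\norm{\vect{w}}_2. \]
Squaring and dividing by $m$ yields the claimed bound.

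There is no real obstacle here: the statement holds for \emph{any} matrix $\mat{X}$ with no conditioning assumption precisely because the argument is purely combinatorial/metric and does not attempt to recover $\vect{\theta}^*$ itself, only to match its prediction. The factor of $4$ arises from squaring the factor of $2$ in the triangle inequality, and it cannot be improved by this argument, which is consistent with the informal bound quoted in the introduction.
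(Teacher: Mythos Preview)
Your proof is correct and matches the paper's own argument essentially line for line: both use the feasibility of $\vect{\theta}^*$ to bound $\norm{\mat{X}\vect{\widehat{\theta}} - \vect{y}}_2 \leq \norm{\vect{w}}_2$, then apply the triangle inequality and square.
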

\begin{proof}
By definition of $\vect{\widehat{\theta}}$ and since $\vect{\theta}^*$ is $k$-sparse, we know
\[ \norm{\mat{X}\vect{\widehat{\theta}} - \vect{y}}_2 \leq  \norm{\mat{X}\vect{\theta}^* - \vect{y}}_2 = \norm{\vect{w}}_2. \]
By the triangle inequality,
\[  \norm{\mat{X}\vect{\widehat{\theta}} - \mat{X}\vect{\theta}^*}_2 \leq \norm{\mat{X}\vect{\widehat{\theta}} - \vect{y}}_2 + \norm{\vect{y} - \mat{X}\vect{\theta}^*}_2 \leq 2 \norm{\vect{w}}_2.\]
The bound immediately follows by squaring and dividing by $m$.
\end{proof}

Comparing Theorem~\ref{thm:lasso-performance-in-terms-of-noise} and Proposition~\ref{prop:worst-case-info-theoretic-bound} highlights a critical computational-statistical gap: the prediction error for the Lasso algorithm depends quantitatively on the restricted eigenvalue condition of the design matrix, whereas the information-theoretic minimizer has prediction error that is completely independent of the restricted eigenvalue condition. 

\section{Reduction from Bounded Distance Decoding}\label{sec:main-reduction}

Our main result in this section is a reduction from $\BinBDD_{d, \alpha}$ to $k$-$\SLR$:

\begin{theorem}\label{thm:main-bin-bdd-to-slr}
Let $\bdddim, \slrrows, \slrcols, k$ be integers such that $k$ divides $\bdddim$, $\slrrows \ge 17\bdddim$ and $\slrcols = k \cdot 2^{\bdddim/k}$. Then, there is a $\poly(\slrcols, \slrrows)$-time reduction from $\BinBDD_{d, \alpha}$ in dimension $\bdddim$ with parameter $\alpha \leq 1/10$ to $k$-$\SLR$ in dimension $\slrcols$ with $\slrrows$ samples. The reduction is randomized and succeeds with probability $1 - e^{-\Omega(\slrrows)}$. Moreover, the reduction maps a $\BinBDD_{d, \alpha}$ instance $(\mat{B}, \cdot)$ to $k$-$\SLR$ instances with design matrices $\matX\in \R^{\slrrows \times \slrcols}$ and parameter $\delta = \Theta(\lambda_{1, \bin}(\matB))$ such that $\mat{X}$ that can be decomposed as
\[ \mat{X} = \begin{pmatrix}
\mat{X}_1\\
\mat{X}_2
\end{pmatrix}, \]
where the distribution of each row of $\mat{X}_1 \in \R^{(\slrrows-k) \times \slrcols}$ is distributed as $\iid$ $\N(\zero, \mat{G}_{\sparse}^\top \mat{B}^\top \mat{B} \mat{G}_{\sparse})$ for a fixed, instance-independent matrix $\mat{G}_{\sparse} \in \R^{d \times n}$, and $\mat{X}_2 \in \R^{k \times \slrcols}$ is proportional to a fixed, instance-independent matrix that depends only on $\slrcols$ and $k$. 
\end{theorem}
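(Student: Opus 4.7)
The guiding idea is to convert the integrality constraint $\vect{z} \in \{-1,+1\}^d$ of binary BDD into a $k$-sparsity constraint via a combinatorial gadget $\mat{G}_{\sparse}$, and to induce the desired Gaussian design by left-multiplying the BDD equation $\vect{t} = \mat{B}\vect{z} + \vect{e}$ by a fresh random Gaussian matrix.

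The gadget $\mat{G}_{\sparse} \in \R^{d \times n}$ is built by partitioning the $d$ coordinates into $k$ equal blocks $B_1,\dots,B_k$ of size $d/k$ each; for every block $i$ I enumerate the $2^{d/k}$ vectors of $\{-1,+1\}^{d/k}$ and place each as a column of $\mat{G}_{\sparse}$ supported on rows $B_i$, yielding $n = k \cdot 2^{d/k}$ columns in total. Then every $\vect{z} \in \{-1,+1\}^d$ admits a canonical \emph{partite} encoding $\vect{\theta}^* \in \R^n$ with exactly one entry equal to $1$ per block, satisfying $\mat{G}_{\sparse}\vect{\theta}^* = \vect{z}$. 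To force $k$-sparse solutions to have this partite form, I take the $i$-th row of $\mat{X}_2$ to be $c$ times the indicator of the $2^{d/k}$ columns of $\mat{G}_{\sparse}$ supported on block $i$, and set the corresponding entry of $\vect{y}_2$ to $c$; combined with $k$-sparsity, the $k$ equations $\mat{X}_2\vect{\theta} = \vect{y}_2$ pin down the block structure. These rows depend only on $n$ and $k$ up to the scalar $c$.

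The top block is produced by sampling $\mat{R} \in \R^{(m-k) \times d}$ with i.i.d.\ $\N(0,1)$ entries and setting $\mat{X}_1 := \mat{R}\mat{B}\mat{G}_{\sparse}$ and $\vect{y}_1 := \mat{R}\vect{t}$. Each row of $\mat{X}_1$ equals $(\mat{G}_{\sparse}^\top \mat{B}^\top \vect{r}_i)^\top$ with $\vect{r}_i \sim \N(\zero, I_{d \times d})$, so rows are i.i.d.\ $\N(\zero, \mat{G}_{\sparse}^\top \mat{B}^\top \mat{B} \mat{G}_{\sparse})$ as demanded, and $\vect{y}_1 = \mat{X}_1\vect{\theta}^* + \mat{R}\vect{e}$. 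Since $m-k \geq 16d$, Lemma~\ref{lemma:gaussian-singular-values} gives $\sigma_{\max}(\mat{R}) \leq \tfrac{3}{2}\sqrt{m}$ except with probability $e^{-\Omega(m)}$; thus the effective noise $\vect{w}$ (namely $\mat{R}\vect{e}$ on the top block, zero on the bottom block) satisfies $\|\vect{w}\|_2 \leq \tfrac{3}{2}\alpha\sqrt{m}\,\lambdabin(\mat{B})$, and $\vect{\theta}^*$ achieves the claimed prediction error $\delta = \Theta(\lambdabin(\mat{B}))$.

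The main obstacle is the extraction step together with the calibration of the scalar $c$. Given any candidate $k$-sparse $\vect{\widehat\theta}$ with prediction error at most $\delta^2$, I want $\mat{G}_{\sparse}\vect{\widehat\theta}$ to lie in $\{-1,+1\}^d$ and to equal the planted $\vect{z}$. Choosing $c$ large enough ensures that any violation of $\mat{X}_2 \vect{\widehat\theta} = \vect{y}_2$ inflates the prediction error beyond $\delta^2$, forcing $\vect{\widehat\theta}$ to be (almost exactly) partite; at the same time $c$ does not contribute to the error at $\vect{\theta}^*$ since $\mat{X}_2\vect{\theta}^* = \vect{y}_2$ exactly. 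Once $\vect{\widehat z} := \mat{G}_{\sparse}\vect{\widehat\theta} \in \{-1,+1\}^d$, the residual $\|\mat{R}\mat{B}(\vect{\widehat z} - \vect{z})\|_2$ is bounded by $O(\sqrt{m}\,\alpha\,\lambdabin(\mat{B}))$ via the prediction-error bound on the top block, and the spectral lower bound $\sigma_{\min}(\mat{R}) \geq \tfrac{1}{2}\sqrt{m}$ from Lemma~\ref{lemma:gaussian-singular-values} yields $\|\mat{B}(\vect{\widehat z} - \vect{z})\|_2 = O(\alpha\,\lambdabin(\mat{B})) < \lambdabin(\mat{B})$ by $\alpha \leq 1/10$, forcing $\vect{\widehat z} = \vect{z}$. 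Making this robust-partiteness argument quantitative (in particular, choosing $c$ as a function of $n, k$ so that an $\eta$-approximate bottom-block constraint implies an $O(\eta)$-approximate partite structure) is the main technical subtlety.
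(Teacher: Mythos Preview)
Your construction and overall strategy match the paper's essentially exactly: the same block gadget $\mat{G}_{\sparse}$, the same partiteness-enforcing bottom block (the paper calls it $\gamma\,\mat{G}_{\partite}$), the same randomization $\mat{X}_1=\mat{R}\mat{B}\mat{G}_{\sparse}$, and the same completeness/soundness skeleton. However, there are two genuine quantitative gaps in your soundness argument.

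First, the extraction step must round. The $k$-$\SLR$ solution $\vect{\widehat\theta}$ is only \emph{approximately} partite, so $\mat{G}_{\sparse}\vect{\widehat\theta}$ will not lie in $\{-1,+1\}^d$; the paper instead outputs $\mat{G}_{\sparse}\round(\vect{\widehat\theta})$ and controls the propagated rounding error $\|\mat{B}\mat{G}_{\sparse}(\vect{\widehat\theta}-\round(\vect{\widehat\theta}))\|_2 \le \sigma_{\max}(\mat{B})\sqrt{d/k}\cdot \delta\sqrt{m}/c$. To make this small relative to $\lambdabin(\mat{B})$ one is forced to take $c=\gamma=\Theta\bigl(\sigma_{\max}(\mat{B})\sqrt{dm/k}\bigr)$, which depends on $\mat{B}$, contrary to your claim that $c$ is a function of $n,k$ alone. (The theorem only asserts $\mat{X}_2$ is \emph{proportional} to a fixed matrix; the scalar is allowed to be instance-dependent.)

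Second, your bound $\|\mat{R}\mat{B}(\vect{\widehat z}-\vect{z})\|_2=O(\sqrt{m}\,\alpha\,\lambdabin(\mat{B}))$ is incorrect. The $k$-$\SLR$ oracle only guarantees MSE $\le \delta^2$ with $\delta=\Theta(\lambdabin(\mat{B}))$ as stated in the theorem, so the top-block residual is at most $\delta\sqrt{m}=\Theta(\sqrt{m}\,\lambdabin(\mat{B}))$, \emph{without} a factor of $\alpha$. After dividing by $\sigma_{\min}(\mat{R})\ge\sqrt{m_1}/2$ and adding back $\|\vect{e}\|_2\le\alpha\lambdabin(\mat{B})$, you only get $\|\mat{B}(\vect{\widehat z}-\vect{z})\|_2=O(\lambdabin(\mat{B}))$, which does not by itself contradict $\vect{\widehat z}\neq\vect{z}$. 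The paper closes this by fixing $\delta$ with a specific small leading constant, $\delta^2=\tfrac{3m_1}{100m}\widehat{\lambda}_1^2$, and then tracking all constants explicitly so that the final bound falls strictly below $\lambdabin(\mat{B})$; the hypothesis $\alpha\le 1/10$ enters only to control the additive $\|\vect{e}\|_2$ term, not to shrink the $\delta$ contribution.
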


\begin{remark}\label{remark:fine-grained}
This reduction also gives us a \emph{fine-grained} hardness result. Suppose there is an algorithm for $k$-$\SLR$ in $n$ dimensions with $m$ samples that runs in time $\poly(m, k) \cdot (n/k)^{k^{1-\epsilon}}$. Then, the above reduction gives an algorithm for $\BinBDD_{d, \alpha}$ in $d$ dimensions that runs in time $\poly(d) \cdot 2^{d/k^\epsilon}$.

\cite{gupte2021finehardness} also prove a fine-grained hardness result for $k$-$\SLR$ from a lattice problem, specifically the closest vector problem. However, the key difference is that their result is in the unidentifiable regime, whereas ours is in the identifiable regime, i.e. the $k$-SLR instances produced by the reduction from binary BDD have a unique solution. Further, their fine-grained hardness results are for worst-case covariates, whereas our hardness results produces covariates drawn from a Gaussian (with a covariance matrix determined by the binary-BDD basis).
\end{remark}

\subsection{Interpretations}\label{sec:removing-worst-case-gadget}
We now interpret the distribution of the design matrices $\mat{X}$ in the reduction in Theorem~\ref{thm:main-bin-bdd-to-slr}. As stated, the sub-matrix $\mat{X}_2$ is phrased as a worst-case design matrix, while $\mat{X}_1$ has i.i.d. $\N(\zero, \mat{G}_{\sparse}^\top \mat{B}^\top \mat{B} \mat{G}_{\sparse})$ rows, where the covariance matrix is tightly related to the $\BinBDD_{d, \alpha}$ instance $\mat{B}$. (The definition of the fixed, instance-independent gadget matrix $\mat{G}_{\sparse}$ is given in Section~\ref{sec:proof-main-reduction-and-gadgets}.)

There are two natural ways to remove the ``worst-case'' nature of $\mat{X}_2$ in our instances:

\begin{enumerate}
    \item Our reduction still holds if the $\mat{X}_2$ part of our design matrix is removed, and instead, one enforced an exact linear constraint on $\vect{\theta}$ (specifically, $\mat{G}_{\partite} \vect{\theta} = \ones$, where $\mat{G}_{\partite}$ is defined in Section~\ref{sec:proof-main-reduction-and-gadgets}). This corresponds to constraining the $k$-sparse regression vector $\vect{\theta} \in \R^{\slrcols}$ to an affine subspace. Since many of the techniques for sparse linear regression involve convex optimization, intersecting the solution landscape with this affine subspace will preserve convexity, so these algorithms would still work if the design matrix were just $\mat{X}_1$ on its own, without the worst-case $\mat{X}_2$ sub-matrix.
    \item Alternatively, one can interpret each row of $\mat{X}_2$ as a mean of a Gaussian distribution with a covariance matrix $\sigma^2 I_{n \times n}$ for very small $\sigma > 0$. By reordering the rows of $\mat{X}$ and entries of the target $\vect{y}$, all rows of $\mat{X}$ can now be drawn i.i.d. from a \emph{mixture} of $k+1$ Gaussians, with weight $(m-k)/m$ on $\N(\zero, \mat{G}_{\sparse}^\top \mat{B}^\top \mat{B} \mat{G}_{\sparse})$ and weight $1/m$ on $\N(\vect{v}_i, \sigma^2 I_{n \times n})$ for all $i \in [k]$, where $\vect{v}_i$ is the $i$th row of $\mat{X}_2$ and $\sigma > 0$ is very small. Our reduction will go through (with modified parameters) as long as we increase the number of samples $m$ to roughly $O(m \log(k))$ so that each row of $\mat{X}_2$ is hit by a coupon collector argument.
\end{enumerate}

\subsection{Proof of Theorem~\ref{thm:main-bin-bdd-to-slr}}\label{sec:proof-main-reduction-and-gadgets}

Before proving Theorem~\ref{thm:main-bin-bdd-to-slr}, we introduce some notation and gadgets that will be useful for us. For $\slrcols, k \in \mathbb{N}$ where $\slrcols$ is a multiple of $k$, let $\S_{\slrcols, k}$ denote the set of $k$-sparse, $k$-partite binary vectors $\vect{v} \in \{0,1\}^{\slrcols}$. That is, for each $i \in \{0, \cdots, k-1\}$, there exists a unique $j \in [\slrcols/k]$ such that $v_{i\slrcols/k + j} = 1$, and all other entries of $\vect{v}$ are $0$.

Let $\mat{G}_{\sparse} \in \R^{\bdddim \times \slrcols}$ be the following (rectangular) block diagonal matrix, which will allow us to map binary vectors to binary sparse partite vectors. Each of the $k$ blocks $\mat{H} \in \R^{\bdddim/k \times \slrcols/k}$ are identical, with $\slrcols/k = 2^{\bdddim/k}$, and has columns consisting of all vectors $\{-1,1\}^{\bdddim/k}$, in some fixed order. 

\begin{lemma}\label{lemma:G-is-invertible}
The matrix $\mat{G}_{\sparse}$ invertibly maps $\S_{\slrcols, k}$ to $\{-1,1\}^{\bdddim}$.
\end{lemma}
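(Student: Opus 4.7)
The plan is to verify two things: first, that $\mat{G}_{\sparse}$ sends every element of $\S_{\slrcols, k}$ into $\{-1,1\}^{\bdddim}$, and second, that the resulting map is a bijection. Since the cardinalities match, $|\S_{\slrcols, k}| = (\slrcols/k)^k = (2^{\bdddim/k})^k = 2^{\bdddim} = |\{-1,1\}^{\bdddim}|$, injectivity and surjectivity are equivalent, so it suffices to check one of them.

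For the first step, I would exploit the block-diagonal structure of $\mat{G}_{\sparse}$. Writing $\vect{v} \in \S_{\slrcols, k}$ as $\vect{v} = (\vect{v}^{(1)}; \ldots; \vect{v}^{(k)})$ with each block $\vect{v}^{(i)} \in \{0,1\}^{\slrcols/k}$ having exactly one nonzero entry, say in position $j_i \in [\slrcols/k]$, the block structure gives $\mat{G}_{\sparse}\vect{v} = (\mat{H}\vect{v}^{(1)}; \ldots; \mat{H}\vect{v}^{(k)})$, and $\mat{H} \vect{v}^{(i)}$ equals the $j_i$-th column of $\mat{H}$. By construction every column of $\mat{H}$ lies in $\{-1,1\}^{\bdddim/k}$, so the concatenation lies in $\{-1,1\}^{\bdddim}$, proving $\mat{G}_{\sparse}$ maps into the target set.

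For the bijection, I would argue injectivity directly using the definition of $\mat{H}$. Since the columns of $\mat{H}$ are precisely the $2^{\bdddim/k}$ distinct vectors of $\{-1,1\}^{\bdddim/k}$ in some fixed order, the map $j \mapsto \mat{H} \vect{e}_j$ (where $\vect{e}_j$ is the $j$-th standard basis vector in $\R^{\slrcols/k}$) is a bijection between $[\slrcols/k]$ and $\{-1,1\}^{\bdddim/k}$. Consequently, given any $\vect{w} = (\vect{w}^{(1)}; \ldots; \vect{w}^{(k)}) \in \{-1,1\}^{\bdddim}$, each block $\vect{w}^{(i)}$ determines a unique column index $j_i$ with $\mat{H}\vect{e}_{j_i} = \vect{w}^{(i)}$, and hence a unique preimage $\vect{v} \in \S_{\slrcols, k}$ with $\mat{G}_{\sparse}\vect{v} = \vect{w}$.

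There is no real obstacle; the lemma is essentially a restatement that $\mat{H}$ implements the natural bijection between the one-hot vectors in $\R^{\slrcols/k}$ and $\{-1,1\}^{\bdddim/k}$, lifted block-wise to $k$ copies. The only mild care needed is to confirm that the hypothesis $\slrcols/k = 2^{\bdddim/k}$ (inherited from Theorem~\ref{thm:main-bin-bdd-to-slr}) is what makes $\mat{H}$ contain all of $\{-1,1\}^{\bdddim/k}$ exactly once as columns, and that $k \mid \bdddim$ ensures the block structure is well-defined.
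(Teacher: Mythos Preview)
Your proposal is correct and follows essentially the same approach as the paper: both observe the cardinality equality $|\S_{\slrcols,k}| = 2^{\bdddim}$ and then use the block-diagonal structure together with the fact that the columns of $\mat{H}$ enumerate $\{-1,1\}^{\bdddim/k}$ to exhibit the (unique) preimage of any $\vect{z} \in \{-1,1\}^{\bdddim}$. You are slightly more explicit in first checking that the image lands in $\{-1,1\}^{\bdddim}$, but otherwise the arguments coincide.
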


\begin{proof}
Observe that $|\S_{\slrcols,k}| = (\slrcols/k)^k = 2^{\bdddim} = |\{-1,1\}^{\bdddim}|$, so it suffices to show that $\mat{G}_{\sparse}$ is surjective. Let $\vect{z} \in \{-1,1\}^{\bdddim}$. Breaking $\vect{z}$ up into $k$ contiguous blocks of $\bdddim/k$ coordinates, we can identify each block of $\vect{z}$ in $\{-1,1\}^{\bdddim/k}$ with a unique column in $\mat{H}$. Let $\vect{v} \in \S_{\slrcols, k}$ be defined so that for the $i$th block, $v_{ik + j} = 1$ if and only if the $j$th column of $\mat{H}$ is equal to the $i$th part of $\vect{z}$. It then follows that $\mat{G}_{\sparse} \vect{v} = \vect{z}$, as desired.
\end{proof}

We now define a different gadget matrix, $\mat{G}_{\partite} \in \R^{k \times \slrcols}$, that will help enforce the regression vector to be partite and binary. The matrix $\mat{G}_{\partite}$ is once again block diagonal with $k$ identical blocks, where each block is now the vector $\ones^{\top} \in \R^{1 \times \slrcols/k}$.

\begin{lemma}\label{lemma:G-forces-partite-binary}
For $\vect{v} \in \R^{\slrcols}$, we have $\vect{v} \in \S_{\slrcols,k}$ if and only if $\mat{G}_{\partite} \vect{v} = \ones$ and $\vect{v}$ is $k$-sparse.
\end{lemma}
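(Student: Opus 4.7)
The statement is a clean combinatorial equivalence, so I would prove both directions of the biconditional separately, with the forward direction being essentially definitional and the reverse direction reducing to a short pigeonhole argument.

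For the forward direction, I would start with $\vect{v} \in \S_{\slrcols, k}$ and unfold the definition: $\vect{v}$ has exactly one coordinate equal to $1$ in each of the $k$ contiguous blocks of size $\slrcols/k$ and all other coordinates are $0$. This immediately gives $\norm{\vect{v}}_0 = k$, so $\vect{v}$ is $k$-sparse. The product $\mat{G}_{\partite} \vect{v}$ computes, in its $i$-th coordinate, the sum of the entries of $\vect{v}$ in the $i$-th block (since the $i$-th row of $\mat{G}_{\partite}$ is $\ones^\top$ on the $i$-th block and zero elsewhere). Each such block-sum equals $1$ by the structure of $\vect{v}$, so $\mat{G}_{\partite} \vect{v} = \ones$.

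For the reverse direction, suppose $\mat{G}_{\partite} \vect{v} = \ones$ and $\norm{\vect{v}}_0 \le k$. The first condition states that each of the $k$ blocks of $\vect{v}$ sums to exactly $1$, which in particular forces each block to contain at least one nonzero entry. But $\vect{v}$ has at most $k$ nonzero entries total and there are $k$ blocks, so by pigeonhole each block contains \emph{exactly} one nonzero entry. Since that single nonzero entry must equal the block sum of $1$, every nonzero coordinate of $\vect{v}$ is $1$, and the support pattern matches the definition of $\S_{\slrcols, k}$ exactly.

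I do not anticipate any real obstacle here; the entire content is the pigeonhole step in the reverse direction, and the main care is just in correctly parsing what $\mat{G}_{\partite} \vect{v}$ computes given the block-diagonal structure with blocks $\ones^\top \in \R^{1 \times \slrcols/k}$. No nontrivial calculation is needed, and the proof should fit in a short paragraph.
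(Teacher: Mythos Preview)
Your proof is correct and matches the paper's argument essentially step for step: both directions are handled the same way, with the pigeonhole observation (each of the $k$ blocks must contain a nonzero entry because it sums to $1$, hence exactly one nonzero entry by sparsity, hence that entry equals $1$) being the only non-trivial content.
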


\begin{proof}
For the ``only if'' direction, observe that $\vect{v} \in \S_{\slrcols, k}$ directly implies that $\vect{v}$ is $k$-sparse and $\mat{G}_{\partite} \vect{v} = \ones$, as each part of $\vect{v}$ sums to $1$.

For the ``if'' direction, suppose $\mat{G}_{\partite} \vect{v} = \ones$ and $\vect{v}$ is $k$-sparse. We need to show that $\vect{v}$ is both \emph{partite} and \emph{binary}. To see that $\vect{v}$ is partite, observe that the condition $\mat{G}_{\partite} \vect{v} = \ones$ enforces the sum of the entrices of $\vect{v}$ in each of the $k$ blocks to be $1$. Therefore, each block has at least one non-zero entry, as otherwise that block would sum to $0$. Since each of the $k$ blocks has at least one non-zero entry, by sparsity, each block has exactly one non-zero entry, as otherwise, $\vect{v}$ would not be $k$-sparse. Since each block has exactly one non-zero entry that sums to $1$, that entry must be $1$. Therefore, $\vect{v} \in \S_{\slrcols, k}$.
\end{proof}

Now, we prove the main theorem of this section, Theorem~\ref{thm:main-bin-bdd-to-slr}.
\begin{proof}[Proof of Theorem~\ref{thm:main-bin-bdd-to-slr}]
Let $(\mat{B}, \vect{t} := \mat{B}\vect{z} + \vect{e})$ be our $\BinBDD_{d, \alpha}$ instance, where $\mat{B} \in \R^{\bdddim \times \bdddim}$, $\vect{z} \in \{-1,1\}^{\bdddim}$ and $\norm{\vect{e}}_2 \leq \alpha \cdot  \lambdabin(\mat{B})$. Let $\mat{R} \in \R^{\slrrows_1 \times \bdddim}$ be a random matrix where each entry is drawn $\iid$ from $\N(0,1)$. We assume we know a value $\widehat{\lambda}_{1}$ such that
$\widehat{\lambda}_{1} \in [\lambdabin(\mat{B}), 2 \lambdabin(\mat{B}))$. (By Lemma~\ref{lemma:lambda-1-bound}, doubling search over $\widehat{\lambda}_1$ and taking the best $\BinBDD$ solution will take at most polynomial time.) 

Our design matrix $\mat{X}$ and target vector $\vect{y}$ for $k$-$\SLR$ will be defined as follows:
\[ \mat{X} = \begin{pmatrix}
\mat{X}_1\\
\mat{X}_2
\end{pmatrix} = \begin{pmatrix}
\mat{R} \mat{B} \mat{G}_{\sparse} \\
\gamma \mat{G}_{\partite}
\end{pmatrix} \in \R^{(\slrrows_1 + k) \times \slrcols},\ \ \ \ \vect{y} = \begin{pmatrix}
\mat{R} \vect{t}\\
\gamma \ones
\end{pmatrix} \in \R^{\slrrows_1 + k}.\]
Here, $\gamma \in \R$ is a scalar that will be set later. We define $\slrrows_1$ so that $\slrrows_1 + k = \slrrows$, which implies that $\slrrows_1 = \slrrows - k \geq \slrrows - \bdddim \geq 16\bdddim$. We define the scalar $\delta$ for the $k$-$\SLR$ instance so that
\begin{equation}\label{eqn:setting-delta}
\delta^2 = \frac{3 \slrrows_1 \cdot \widehat{\lambda}_1^2}{100\slrrows} = \Theta \left(\lambdabin(\mat{B})^2 \right).
\end{equation}
Invoking the $k$-$\SLR$ solver on the instance $(\mat{X}, \vect{y})$ with parameter $\delta$, our reduction will get a solution $\vect{\widehat{\theta}}$ such that
\begin{equation}\label{eq:squared-loss} \frac{\norm{\mat{X} \vect{\widehat{\theta}} - \vect{y}}_2^2}{\slrrows} \leq \delta^2.
\end{equation}
Next, we round $\vect{\widehat{\theta}}$ to the nearest integer entrywise to get $\round(\vect{\widehat{\theta}}) \in \Z^{\slrcols}$, and we output $\mat{G}_{\sparse} \round(\vect{\widehat{\theta}}) \in \Z^{\bdddim}$ as the $\BinBDD_{d, \alpha}$ solution.

First, we show completeness, in the sense that there exists $k$-sparse $\vect{\widehat{\theta}} \in \R^{\slrcols}$ such that~\eqref{eq:squared-loss} is satisfied. Recall from Lemma~\ref{lemma:G-is-invertible} that there exists a unique $\vect{\theta}^* \in \S_{\slrcols, k}$ such that $\mat{G}_{\sparse} \vect{\theta}^* = \vect{z}$. We will show that setting $\vect{\widehat{\theta}} = \vect{\theta}^*$ satisfies~\eqref{eq:squared-loss}. We have
\[\mat{X} \vect{\theta}^* - \vect{y} =   \begin{pmatrix}
\mat{R} \mat{B} \mat{G}_{\sparse} \\
\gamma \mat{G}_{\partite}
\end{pmatrix} \vect{\theta}^* - \begin{pmatrix}
\mat{R} \vect{t}\\
\gamma \ones
\end{pmatrix} = \begin{pmatrix}
\mat{R} \mat{B} \mat{G}_{\sparse} \vect{\theta}^* - \mat{R} \vect{t} \\
\gamma \mat{G}_{\partite} \vect{\theta}^* - \gamma \ones 
\end{pmatrix} = \begin{pmatrix}
\mat{R} \mat{B} \vect{z} - \mat{R} \vect{t} \\
\zero
\end{pmatrix},\]
where the last equality holds by Lemma~\ref{lemma:G-forces-partite-binary} and definition of $\vect{\theta}^*$. Continuing the equality, we have
\[\mat{X} \vect{\theta}^* - \vect{y} = \begin{pmatrix}
\mat{R} \mat{B} \vect{z} - \mat{R} (\mat{B} \vect{z} + \vect{e}) \\
\zero
\end{pmatrix} = \begin{pmatrix}
- \mat{R} \vect{e}\\
\zero
\end{pmatrix},\]
which implies $\norm{\mat{X} \vect{\widehat{\theta}} - \vect{y}}_2^2 = \norm{\mat{R} \vect{e}}_2^2$. Since $\mat{R}$ was sampled independently of $\vect{e}$, observe that $\norm{\mat{R} \vect{e}}_2^2 \sim \norm{\vect{e}}_2^2 \cdot \chi^2(\slrrows_1)$. By Lemma~\ref{lemma:chi-squared-tail-bound}, it follows that
\[ \Pr\left[\norm{\mat{R} \vect{e}}_2^2 \geq 5\slrrows_1/2 \cdot \norm{\vect{e}}_2^2 \right] \leq e^{-\slrrows_1/4}. \]
Therefore, with probability at least $1 - e^{-\slrrows_1/4}$, we have
\[ \frac{\norm{\mat{X} \vect{\widehat{\theta}} - \vect{y}}_2^2}{\slrrows} \leq \frac{5 \slrrows_1 \norm{\vect{e}}_2^2}{2\slrrows} \leq \frac{5 \slrrows_1 \alpha^2 \cdot \widehat{\lambda}_1^2}{2\slrrows} < \delta^2, \]
since $\alpha \leq 1/10$, as desired, showing that~\eqref{eq:squared-loss} is satisfied.

We now show soundness, which we will prove by contradiction. Suppose that $\mat{G}_{\sparse} \round(\vect{\widehat{\theta}}) = \vect{z}'$ for some $\vect{z}' \neq \vect{z}$ and~\eqref{eq:squared-loss} holds. Since~\eqref{eq:squared-loss} holds, in particular, we know $\norm{\gamma \mat{G}_{\partite} \vect{\widehat{\theta}} - \gamma \ones}_2^2 \leq \slrrows \delta^2$, or equivalently,
\[
\norm{\mat{G}_{\partite} \vect{\widehat{\theta}} - \ones}_2 \leq \sqrt{\slrrows} \delta/\gamma.
\]
Throughout, assume we set $\gamma$ so that $\sqrt{\slrrows} \delta / \gamma < 1/2$. Each part of $\vect{\widehat{\theta}}$ must have exactly one non-zero entry, as $\vect{\widehat{\theta}}$ is $k$-sparse and cannot have any part with all zero entries. Moreover, we know this non-zero entry must be in $[1 - \sqrt{\slrrows} \delta/\gamma, 1 + \sqrt{\slrrows} \delta/\gamma]$, so by a choice of $\gamma$ such that $\sqrt{\slrrows} \delta / \gamma < 1/2$, this entry must round to $1$. Therefore, $\round(\vect{\widehat{\theta}}) \in \S_{\slrcols, k}$, and $\round(\vect{\widehat{\theta}})$ and $\vect{\widehat{\theta}}$ have the same support. This implies that $\vect{z}' = \mat{G}_{\sparse} \round(\vect{\widehat{\theta}}) \in \{-1,1\}^{\bdddim}$ and $\mat{G}_{\partite} \round(\vect{\widehat{\theta}}) = \ones$ by Lemma~\ref{lemma:G-forces-partite-binary}. Moreover, since we can restrict to the support of $\vect{\widehat{\theta}}$, we have
\[
\norm*{\vect{\widehat{\theta}} - \round(\vect{\widehat{\theta}})}_2 = \norm*{\mat{G}_{\partite} \vect{\widehat{\theta}} - \mat{G}_{\partite} \round(\vect{\widehat{\theta}})}_2 = \norm*{\mat{G}_{\partite} \vect{\widehat{\theta}} - \ones}_2 \leq \sqrt{\slrrows} \delta / \gamma.
\]
Since $\vect{\widehat{\theta}} - \round(\vect{\widehat{\theta}})$ is also $k$-sparse and partite, it follows that 
\[\norm*{\mat{G}_{\sparse} \left( \vect{\widehat{\theta}} - \round(\vect{\widehat{\theta}})\right)}_2 \leq \frac{\delta \sqrt{\bdddim \slrrows} }{\gamma \sqrt{k}},\]
as each column of any block of $\mat{G}_{\sparse}$ has $\ell_2$ norm exactly $\sqrt{\bdddim/k}$. Multiplying on the left by $\mat{B}$, we have
\begin{equation}\label{eq:B-theta-hat-upper-bound}
    \norm*{\mat{B} \mat{G}_{\sparse} \vect{\widehat{\theta}} - \mat{B} \vect{z}'}_2 \leq \sigma_{\max}(\mat{B}) \cdot \norm*{\mat{G}_{\sparse} \left( \vect{\widehat{\theta}} - \round(\vect{\widehat{\theta}})\right)}_2 \leq \frac{\sigma_{\max}(\mat{B}) \cdot \delta \sqrt{\bdddim \slrrows}}{\gamma \sqrt{k}}.
\end{equation}
On the other hand, by definition of $\lambdabin(\mat{B})$, since $\vect{z} \neq \vect{z}' \in \{-1,1\}^{\bdddim}$, we know
\begin{equation}\label{eq:lambda-1-lower-bound}
    \norm{\mat{B} \vect{z} - \mat{B} \vect{z}'}_2 \geq \lambdabin(\mat{B}).
\end{equation}
By combining~\eqref{eq:B-theta-hat-upper-bound} and~\eqref{eq:lambda-1-lower-bound} by the triangle inequality, we have
\[\norm*{\mat{B} \mat{G}_{\sparse} \vect{\widehat{\theta}} - \mat{B} \vect{z} }_2 \geq \lambdabin(\mat{B}) - \frac{\sigma_{\max}(\mat{B}) \cdot \delta \sqrt{\bdddim \slrrows}}{\gamma \sqrt{k}}. \]
We now set
\begin{equation}\label{eqn:setting-gamma}
\gamma = \max\left(3 \delta \sqrt{\slrrows}, \frac{100 \cdot \sigma_{\max}(\mat{B}) \cdot \delta \sqrt{\bdddim \slrrows}}{\sqrt{k} \cdot \widehat{\lambda}_1} \right) = \Theta\left( \frac{\sigma_{\max}(\mat{B}) \cdot \sqrt{\bdddim \slrrows} }{\sqrt{k}} \right),
\end{equation}
so that 
\[\norm*{\mat{B} \mat{G}_{\sparse} \vect{\widehat{\theta}} - \mat{B} \vect{z} }_2 \geq \lambdabin(\mat{B}) - \frac{\widehat{\lambda}_1}{100} \geq \frac{49}{50} \lambdabin(\mat{B}). \]
(We remark that $\gamma$ is efficiently computable because $\sigma_{\max}(\mat{B})$ is efficiently computable and we assume $\widehat{\lambda}_1$ is known.)
By incorporating the error $\vect{e}$ and multiplication on the left by $\mat{R}$, we have
\[ \norm{\mat{X} \vect{\widehat{\theta}} - \vect{y}}_2 \geq \norm*{\mat{R} \mat{B} \mat{G}_{\sparse} \vect{\widehat{\theta}} - \mat{R} \left( \mat{B} \vect{z} + \vect{e} \right)}_2 \geq \sigma_{\min}(\mat{R}) \left(\frac{49}{50} \lambdabin(\mat{B}) - \norm{\vect{e}}_2 \right). \]
By combining this inequality with~\eqref{eq:squared-loss}, we get a contradiction as long as
\begin{equation}\label{eq:soundness}
    \delta \sqrt{\slrrows} \leq \sigma_{\min}(\mat{R}) \left(\frac{49}{50} \lambdabin(\mat{B}) - \norm{\vect{e}}_2 \right).
\end{equation} 
By Lemma~\ref{lemma:gaussian-singular-values}, we know $\sigma_{\min}(\mat{R}) \geq \sqrt{\slrrows_1}/2$ with probability at least $1 - 2e^{-\slrrows_1/32}$, and since we also know $\norm{\vect{e}}_2 \leq \alpha \lambdabin(\mat{B})$, we have 
\begin{align*}
    \sigma_{\min}(\mat{R}) \left(\frac{49}{50} \lambdabin(\mat{B}) - \norm{\vect{e}}_2 \right) &\geq \frac{\sqrt{\slrrows_1}}{2} \left(\frac{49}{50} \lambdabin(\mat{B}) - \alpha \lambdabin(\mat{B}) \right)\\
    &= \lambdabin(\mat{B}) \sqrt{\slrrows_1} \left( \frac{49}{100} - \frac{\alpha}{2} \right). 
\end{align*}
On the other hand, we know
\[ \delta \sqrt{\slrrows} = \frac{\widehat{\lambda}_1 \sqrt{3 \slrrows_1}}{10\sqrt{\slrrows}} \cdot \sqrt{\slrrows} \leq \frac{1}{5} \cdot \lambdabin(\mat{B}) \sqrt{3 \slrrows_1}. \]
Combining these inequalities, we get a contradiction if
\[ \frac{\sqrt{3}}{5} \leq \frac{49}{100} - \frac{\alpha}{2}, \]
which indeed holds if $\alpha \leq 1/10$.
\end{proof}

\section{Performance of Lasso on Our \texorpdfstring{$k$-$\SLR$}{k-SLR} Instances}\label{sec:performance-of-lasso-appendix}
In this section, we analyze the performance of the $\BinBDD$ algorithm obtained by applying Lasso to the $k$-$\SLR$ instances obtained by our reduction in Section~\ref{sec:main-reduction}. Towards this goal, we first analyze the performance of Lasso on our instances of $k$-$\SLR$.

\subsection{Normalizing Our \texorpdfstring{$k$-$\SLR$}{k-SLR} Instances}
First, we follow the convention of Negahban et al.~\cite{negahban2012unified} and normalize our design matrices according to Definition~\ref{def:column-normalization}.
\begin{lemma}\label{lemma:Z-and-spectral-norm}
Let $\mat{X} \in \mathbb{R}^{m\times n}$ denote the design matrix we obtain in Theorem~\ref{thm:main-bin-bdd-to-slr}. Let $Z = \Theta(\sigma_{\max}(\matB) \cdot \sqrt{d/k})$ be a real number. Then, with probability at least $1 - e^{-\Omega(m_1)}$, $\widetilde{\mat{X}} := \frac{1}{Z} \cdot \mat{X}$
is column-normalized, and so
\begin{equation}\label{eqn:sigma-max-X-bound}
\max_{\substack{\norm{\vect{v}}_2 = 1,\\ \norm{\vect{v}}_0 \le 2k}} \norm{\mat{X}\vect{v}}_2 \le \Theta \left(\sigma_{\max}(\mat{B}) \sqrt{dm} \right)
\end{equation}
\end{lemma}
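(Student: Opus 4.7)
The plan is to bound the column norms of $\matX$ separately for the two blocks $\matX_1 = \matR\matB\matG_{\sparse}$ and $\matX_2 = \gamma \matG_{\partite}$, use this to choose $Z$ so that $\widetilde{\matX} = \matX/Z$ is column-normalized, and then invoke Lemma~\ref{lem:max-sparse-X-tilde} (with $k$ replaced by $2k$) to obtain the sparse spectral bound.

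For $\matX_2$, each column of $\matG_{\partite}$ is a standard basis vector in $\R^k$, so each column of $\matX_2$ has $\ell_2$ norm exactly $\gamma$. By the setting~\eqref{eqn:setting-gamma}, we have $\gamma = \Theta(\sigma_{\max}(\matB) \sqrt{dm/k})$. For $\matX_1$, the $i$th column of $\matG_{\sparse}$ is a $\pm1$ vector supported on a single block of size $d/k$, so $\|\col_i(\matG_{\sparse})\|_2 = \sqrt{d/k}$ and hence $\|\matB \col_i(\matG_{\sparse})\|_2 \leq \sigma_{\max}(\matB) \sqrt{d/k}$. Since the entries of $\matR$ are i.i.d.\ $\N(0,1)$ and independent of $\matB \matG_{\sparse}$, the random variable $\|\col_i(\matX_1)\|_2^2$ is distributed as $\|\matB \col_i(\matG_{\sparse})\|_2^2$ times a $\chi^2(m_1)$ variable. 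Applying Lemma~\ref{lemma:chi-squared-tail-bound} with $t = m_1/4$ gives $\|\col_i(\matX_1)\|_2 \leq O(\sigma_{\max}(\matB) \sqrt{m_1 d/k})$ with probability at least $1 - e^{-m_1/4}$.

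Union bounding over all $n = k \cdot 2^{d/k}$ columns incurs a failure probability of at most $n \cdot e^{-m_1/4}$. Using $m \geq 17d$ (which gives $m_1 = m - k \geq 16d$ since $k \leq d$), the exponent $\ln n - m_1/4 \leq \ln k + (d/k)\ln 2 - 4d$ is bounded by $-\Omega(m_1)$, so the overall failure probability is $e^{-\Omega(m_1)}$ as claimed. This is the step requiring the most care with parameters, but is not a serious obstacle. On this good event, combining the two blocks gives $\|\col_i(\matX)\|_2 \leq O(\sigma_{\max}(\matB) \sqrt{dm/k})$ for every $i \in [n]$.

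Choosing $Z = C\, \sigma_{\max}(\matB) \sqrt{d/k}$ for a sufficiently large absolute constant $C$ then ensures $\|\col_i(\widetilde{\matX})\|_2 \leq \sqrt{m}$, i.e., $\widetilde{\matX}$ is column-normalized in the sense of Definition~\ref{def:column-normalization}. The spectral inequality~\eqref{eqn:sigma-max-X-bound} follows immediately: applying the proof of Lemma~\ref{lem:max-sparse-X-tilde} with $k$ replaced by $2k$ to $\widetilde{\matX}$ gives $\|\widetilde{\matX}\vect{v}\|_2 \leq \sqrt{2km}$ for every unit $2k$-sparse $\vect{v}$, and multiplying through by $Z$ yields $\|\matX \vect{v}\|_2 \leq Z\sqrt{2km} = O(\sigma_{\max}(\matB)\sqrt{dm})$, where the factor of $k$ cancels neatly.
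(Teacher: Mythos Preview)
Your proof is correct, but it takes a somewhat different route from the paper for bounding the columns of $\matX_1$. The paper controls all column norms of $\matX_1$ simultaneously via a single spectral event: by Lemma~\ref{lemma:gaussian-singular-values}, with probability $1 - e^{-\Omega(m_1)}$ one has $\sigma_{\max}(\matR) \leq \tfrac{3}{2}\sqrt{m_1}$, and on that event every column satisfies
\[
\norm{\col_i(\matX_1)}_2 = \norm{\matR \matB \col_i(\matG_{\sparse})}_2 \leq \sigma_{\max}(\matR)\,\sigma_{\max}(\matB)\,\norm{\col_i(\matG_{\sparse})}_2 = O\bigl(\sqrt{m_1}\,\sigma_{\max}(\matB)\,\sqrt{d/k}\bigr),
\]
with no union bound over columns needed. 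You instead bound each column norm individually as a scaled $\chi^2(m_1)$ variable via Lemma~\ref{lemma:chi-squared-tail-bound} and then union bound over the $n = k\cdot 2^{d/k}$ columns, which forces you to check that $\ln n \leq c\, m_1$ for a suitable constant; this does go through thanks to $m_1 \geq 16d$. Both approaches yield the same conclusion and the same probability guarantee. The paper's argument is slightly cleaner because the singular-value bound handles all columns at once and sidesteps the parameter calculation you flagged as ``the step requiring the most care''; your approach, on the other hand, is more elementary in that it only uses one-dimensional Gaussian tails rather than a matrix concentration result. The remainder of your proof (the $\matX_2$ block, the choice of $Z$, and the application of Lemma~\ref{lem:max-sparse-X-tilde} with $2k$ in place of $k$) matches the paper.
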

\begin{proof}
For our instance $\mat{X}$, we have
\[ \norm*{\col_i(\mat{X})}_2^2 = \norm*{\col_i(\mat{X}_1)}_2^2 + \norm*{\col_i(\mat{X}_2)}_2^2 = \norm*{\col_i(\mat{X}_1)}_2^2 + \gamma^2. \]
Moreover,
\begin{align*}
     \norm*{\col_i(\mat{X}_1)}_2 =  \norm*{\col_i(\mat{R} \mat{B} \mat{G}_{\sparse})}_2 &= \norm*{\mat{R} \mat{B} \col_i(\mat{G}_{\sparse})}_2
    \\&\leq \sigma_{\max}(\mat{R}) \cdot  \sigma_{\max}(\mat{B}) \cdot \norm*{\col_i(\mat{G}_{\sparse})}_2
    \\&= O \left(\sqrt{\slrrows_1} \cdot  \sigma_{\max}(\mat{B}) \cdot \sqrt{\bdddim/k} \right),
\end{align*}
with probability at least $1 - e^{-\Omega(\slrrows_1)}$ over $\mat{R}$ by Lemma~\ref{lemma:gaussian-singular-values}. Since our setting of $\gamma$ (Equation~\eqref{eqn:setting-gamma}) matches this up to constant factors, we have
\[
 \norm*{\col_i(\mat{X})}_2 = O \left(\sqrt{\slrrows} \cdot  \sigma_{\max}(\mat{B}) \cdot \sqrt{\bdddim/k} \right).
\]
Therefore, we can set a normalization factor $Z = \Theta(\sigma_{\max}(\mat{B}) \cdot \sqrt{\bdddim/k})$ such that
\[ \widetilde{\mat{X}} := \frac{1}{Z} \cdot \mat{X} \]
is column-normalized. In particular, note that $\widetilde{\mat{X}}$ is scale invariant with respect to $\mat{B}$. By Lemma~\ref{lem:max-sparse-X-tilde}, we get that
\begin{align*}
\max_{\substack{\norm{\vect{v}}_2 = 1,\\ \norm{\vect{v}}_0 \le 2k}} \norm{\mat{X}\vect{v}}_2 = Z \cdot \max_{\substack{\norm{\vect{v}}_2 = 1,\\ \norm{\vect{v}}_0 \le 2k}} \norm{\widetilde{\mat{X}}\vect{v}}_2 \leq Z \cdot \sqrt{2km} = \Theta \left(\sigma_{\max}(\mat{B}) \sqrt{dm} \right)
\end{align*}
\end{proof}

\subsection{Restricted Eigenvalue Condition}\label{sec:proof-of-re-constant-appendix}

Now, we prove the lower bound on the restricted eigenvalues of the (column-normalized) design matrices of our $k$-$\SLR$ instances.

\begin{lemma}\label{lemma:re-constant-lower-bound}
Suppose $d \geq 3k$. For a setting of $\REfac = \Theta(k/d)$, $\widetilde{\mat{X}}$ satisfies the $(\RE, \REfac)$-restricted eigenvalue condition for all $k$-sparse, $k$-partite $S \subseteq [\slrcols]$ for some
\begin{align*}
    \RE &= \Theta\left( \frac{k}{\bdddim^4 \cdot \cond(\mat{B})^2} \right),
\end{align*}
with probability at least $1 - e^{-\Omega(m)}$.
\end{lemma}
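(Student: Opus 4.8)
The plan is to strip the random Gaussian and lattice parts off $\widetilde{\mat{X}}$, reducing the lemma to a combinatorial lower bound on the two gadget matrices, and then prove that bound block by block.

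\textbf{Step 1 (reduce to the gadgets).} Write $\widetilde{\mat{X}}=\tfrac1Z\mat{X}$ with $Z=\Theta(\sigma_{\max}(\mat{B})\sqrt{d/k})$, and recall $\mat{X}_1=\mat{R}\mat{B}\mat{G}_{\sparse}$, $\mat{X}_2=\gamma\mat{G}_{\partite}$ with $\gamma=\Theta(\sigma_{\max}(\mat{B})\sqrt{dm/k})$. For every $\vect{\theta}$,
\[\|\widetilde{\mat{X}}\vect{\theta}\|_2^2=\tfrac1{Z^2}\bigl(\|\mat{R}\mat{B}\mat{G}_{\sparse}\vect{\theta}\|_2^2+\gamma^2\|\mat{G}_{\partite}\vect{\theta}\|_2^2\bigr)\ge\tfrac{\sigma_{\min}(\mat{R})^2\sigma_{\min}(\mat{B})^2}{Z^2}\|\mat{G}_{\sparse}\vect{\theta}\|_2^2+\tfrac{\gamma^2}{Z^2}\|\mat{G}_{\partite}\vect{\theta}\|_2^2.\]
By Lemma~\ref{lemma:gaussian-singular-values} applied to $\mat{R}\in\R^{m_1\times d}$ with $m_1=m-k\ge16d$, we have $\sigma_{\min}(\mat{R})\ge\sqrt{m_1}/2$ except with probability $e^{-\Omega(m)}$. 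Substituting $Z$ and $\gamma$, the coefficient of $\|\mat{G}_{\sparse}\vect{\theta}\|_2^2$ is $\Theta(mk/(d\,\cond(\mat{B})^2))$ and that of $\|\mat{G}_{\partite}\vect{\theta}\|_2^2$ is $\Theta(m)$; the former is the smaller one since $\cond(\mat{B})\ge1$ and $k\le d$. Hence it suffices to prove a bound of the form $\|\mat{G}_{\sparse}\vect{\theta}\|_2^2+\|\mat{G}_{\partite}\vect{\theta}\|_2^2\ge\Omega(1/\poly(d))\cdot\|\vect{\theta}\|_2^2$ for all $\vect{\theta}\in\cone_{\REfac}(S)$, which then yields $\RE=\Omega(k/(\poly(d)\cdot\cond(\mat{B})^2))$.

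\textbf{Step 2 (per-block dual certificate).} Both gadgets are block-diagonal with $k$ identical blocks ($\mat{H}\in\R^{d/k\times n/k}$, resp.\ $\ones^\top$), so writing $\vect{\theta}^{(j)}$ for the $j$-th block, $\|\mat{G}_{\sparse}\vect{\theta}\|_2^2+\|\mat{G}_{\partite}\vect{\theta}\|_2^2=\sum_{j=1}^k\bigl(\|\mat{H}\vect{\theta}^{(j)}\|_2^2+(\ones^\top\vect{\theta}^{(j)})^2\bigr)$. Fix a block, let $i_0$ be its unique $S$-coordinate, and set $c:=\theta^{(j)}_{i_0}$, $\rho:=\|\vect{\theta}^{(j)}_{-i_0}\|_1$. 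Test the stacked vector $(\mat{H}\vect{\theta}^{(j)};\ones^\top\vect{\theta}^{(j)})\in\R^{d/k+1}$ against the sign certificate $\vect{\phi}:=(\vect{s}_{i_0};1)$, which has $\|\vect{\phi}\|_\infty=1$: one computes $\langle\vect{\phi},(\mat{H}\vect{\theta}^{(j)};\ones^\top\vect{\theta}^{(j)})\rangle=\sum_i(\langle\vect{s}_i,\vect{s}_{i_0}\rangle+1)\,\theta^{(j)}_i$, whose $i_0$-coefficient equals $d/k+1$ while every other coefficient is at most $d/k-1$ in absolute value (distinct $\pm1$ vectors in $\R^{d/k}$ have inner product $\le d/k-2$). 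Since $\|\vect{v}\|_2^2\ge\|\vect{v}\|_1^2/(d/k+1)$ for $\vect{v}\in\R^{d/k+1}$, this gives the per-block bound
\[\|\mat{H}\vect{\theta}^{(j)}\|_2^2+(\ones^\top\vect{\theta}^{(j)})^2\ \ge\ \frac{\bigl((d/k+1)|c|-(d/k-1)\rho\bigr)_+^2}{d/k+1}.\]

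\textbf{Step 3 (exploit the partite cone).} Call block $j$ \emph{good} if $\rho_j\le(1+k/d)|c_j|$, else \emph{bad}; on a good block the bound above is $\ge(1+k/d)^2c_j^2/(d/k+1)\ge\tfrac{k}{2d}c_j^2$ (positive because $d\ge3k$). The bad blocks are controlled by the cone: $\sum_j\rho_j=\|\vect{\theta}_{\bar S}\|_1\le(1+\REfac)\|\vect{\theta}_S\|_1=(1+\REfac)\sum_j|c_j|$ and each bad $j$ satisfies $|c_j|\le\rho_j/(1+k/d)$, so for $\REfac=\Theta(k/d)$ chosen small enough, $\sum_{j\text{ good}}|c_j|\ge\Omega(k/d)\,\|\vect{\theta}_S\|_1$. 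Cauchy--Schwarz over the $\le k$ good blocks gives $\sum_{j\text{ good}}c_j^2\ge\Omega(k/d^2)\|\vect{\theta}_S\|_1^2$, and summing the per-block bounds yields $\|\mat{G}_{\sparse}\vect{\theta}\|_2^2+\|\mat{G}_{\partite}\vect{\theta}\|_2^2\ge\Omega(k^2/d^3)\|\vect{\theta}_S\|_1^2$. Since the cone also gives $\|\vect{\theta}\|_2^2\le\|\vect{\theta}\|_1^2\le(2+\REfac)^2\|\vect{\theta}_S\|_1^2=O(\|\vect{\theta}_S\|_1^2)$, this is the desired $\Omega(1/\poly(d))\|\vect{\theta}\|_2^2$, and combined with Step~1 it proves the lemma (with room to spare relative to the stated $d^4$).

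\textbf{Main obstacle.} The heart of the argument is Step~3. Within one block a cone vector can nearly cancel its $S$-coordinate's contribution \emph{simultaneously} in the $\mat{H}$-image and in the $\ones$-constraint --- precisely the degeneracy that makes $\RE$ polynomially small rather than constant --- and such cancellation costs only an extra $\ell_1$-factor $1+\Theta(k/d)$ per block, because the cube vertex $\vect{s}_{i_0}$ lies at distance just $\Theta(\sqrt{k/d})$ from the convex hull of the remaining $\pm1$ vectors. The delicate point is thus to calibrate $\REfac=\Theta(k/d)$ so that the cone's global $\ell_1$ budget provably forbids cancellation in all but an $O(k/d)$-fraction of the blocks; everything else (the spectral bound on $\mat{R}$, the normalization constants, and the $\ell_1$-to-$\ell_2$ conversions) is bookkeeping.
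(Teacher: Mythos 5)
Your proof is correct, and it takes a genuinely different route from the paper's. The paper argues by contradiction: it normalizes $\norm{\vect{\theta}}_1=1$, splits into cases according to whether $\norm{\mat{G}_{\partite}\vect{\theta}}_2$ is already large, and otherwise uses an averaging lemma to isolate a \emph{single} block on which the on-support coefficient is large and nearly uncancelled, then lower-bounds $\norm{\mat{G}_{\sparse}\vect{\theta}}_1$ from that one block via a sign-flipping argument that reduces to the all-ones column, using the near-kernel condition on $\mat{G}_{\partite}\vect{\theta}$ to control the positive off-support mass. You instead fold both gadgets into a single stacked per-block map and hit it with the dual certificate $(\vect{s}_{i_0};1)$, whose coefficients separate the $S$-column ($d/k+1$) from all others ($\le d/k-1$ in absolute value); this replaces the paper's case split and sign-flipping with one Hölder step, and it lets you \emph{sum} contributions over all good blocks rather than extracting one. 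The payoff is a cleaner argument and a quantitatively stronger conclusion, $\RE=\Omega(k^3/(d^4\cond(\mat{B})^2))$ versus the paper's $\Theta(k/(d^4\cond(\mat{B})^2))$; since the restricted eigenvalue condition is downward-closed in $\RE$ (and a larger $\RE$ only improves every downstream bound), this implies the lemma as stated. I verified the key computations: the coefficient bookkeeping in Step~1 (the $\mat{G}_{\sparse}$ term carries the smaller weight $\Theta(mk/(d\cond(\mat{B})^2))$), the per-block inequality in Step~2 (distinct $\pm1$ columns give inner product at most $d/k-2$, hence certificate coefficients in $[-(d/k)+1,\,d/k-1]$ off support), and the good/bad block accounting in Step~3 (with $\REfac=k/(2d)$ the good blocks retain an $\Omega(k/d)$ fraction of $\norm{\vect{\theta}_S}_1$). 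All check out.
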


\begin{proof}[Proof of Lemma~\ref{lemma:re-constant-lower-bound}]
For simplicity, we will show that the unnormalized design matrix $\mat{X} = Z \cdot  \widetilde{\mat{X}}$ has the property that for all $k$-sparse, $k$-partite $S \subset [n]$, and for all $\vect{\theta} \in \mathbb{C}_\epsilon(S)$,
\begin{align*}
    \frac{\|\mat{X} \vect{\theta}\|_2^2}{m \cdot \|\vect{\theta}\|_2^2} \ge \RE'.
\end{align*}
We will then set $\RE = \RE'/Z^2$ to get the final bound.

Suppose for the sake of contradiction that there exists $\vect{\theta} \in \cone_{\REfac}(S)$ such that
\[ \frac{\norm{\mat{X} \vect{\theta}}_2^2}{m \cdot \norm{\vect{\theta}}_2^2} < \RE'. \]
Without loss of generality, we can scale $\vect{\theta}$ so that $\norm{\vect{\theta}}_1 = 1$. In particular, observe that $\norm{\vect{\theta}}_2^2 \leq \norm{\vect{\theta}}_1^2 = 1$. We have
\begin{align*}
\RE' > \frac{\|\mat{X} \vect{\theta}\|_2^2 }{ m \cdot \| \vect{\theta}\|_2^2} &= \frac{\|\mat{R} \mat{B} \mat{G}_\sparse \vect{\theta}\|_2^2 + \|\mat{G}_\partite \vect{\theta}\|_2^2}{m \cdot \|\vect{\theta}\|_2^2}\\
&\ge \frac{1}{m} \left(\sigma_{\min}(\mat{R} \mat{B})^2 \|\mat{G}_\sparse \vect{\theta}\|_2^2 + \|\mat{G}_\partite \vect{\theta}\|_2^2 \right).
\end{align*}
If $\|\mat{G}_\partite\vect{\theta}\|_2^2 \ge \RE' m$, we have arrived at a contradiction. Otherwise, we will show that $\|\mat{G}_\sparse \vect{\theta}\|_2^2 \ge \RE' m / \sigma_{\min}(\mat{R} \mat{B})^2$, which would also be a contradiction. Indeed, since $\mat{G}_{\sparse} \vect{\theta} \in \R^{d}$, we know $\norm{\mat{G}_{\sparse} \vect{\theta}}_2 \sqrt{d} \geq \norm{\mat{G}_{\sparse} \vect{\theta}}_1$, so it suffices to show that $\|\mat{G}_\sparse \vect{\theta}\|_1 \ge \tau_1$, where we define
\begin{align*}
    \tau_1 := \frac{\sqrt{\RE' d m} }
    {\sigma_{\min}(\mat{R} \mat{B})}.
\end{align*}
Since $\vect{\theta} \in \mathbb{C}_\epsilon(S)$ for some $k$-partite $k$-sparse $S \subseteq [n]$, we can write $\vect{\theta} = \vect{\theta}_S + \vect{\theta}_{\bar{S}}$ such that $\|\vect{\theta}_{\bar{S}}\|_1 \le (1 + \epsilon) \|\vect{\theta}_S\|_1$. Let $\eta^{(i)} \in \R$ be the $i$th nonzero entry of $\vect{\theta}_S$, and let $\vect{\bar{\eta}}^{(i)} \in \R^{n/k}$ be the $i$th part of $\vect{\theta}_{\bar{S}}$. Note that without loss of generality, we can assume that all the entries of $\vect{\theta}_S$ are non-negative, that is, for all $i$, $\eta^{(i)} \ge 0$. If not, suppose $\eta^{(i)} < 0$ for some $i \in [k]$. Because of the block diagonal structure of $\mat{G}_\sparse$, we can simply negate the $i$th part of $\vect{\theta}$ so that none of the norms change.

By applying Lemma~\ref{lemma:averaging-argument} to the $\ell_1$ norm of the parts of $\vect{\theta}_S$ and $\vect{\theta}_{\bar{S}}$, we know that there is some part $i \in [k]$ such that $\eta^{(i)} \ge \epsilon/k$ and $\|\vect{\bar{\eta}}^{(i)}\|_1 \le (1 + 10 \epsilon) \eta^{(i)}$. We will consider this part $i$, and we will write $\eta, \vect{\bar{\eta}}$ instead of $\eta^{(i)}, \vect{\bar{\eta}}^{(i)}$ for simplicity, i.e.,
\begin{align}\label{eqn:avg-rcs}
    \sum_{j} |\bar{\eta}_j| = \|\vect{\bar{\eta}}\|_1 \le (1 + 10\epsilon) \eta.
\end{align}
We will provide a lower bound on $\|\mat{G}_\sparse \vect{\theta}\|_1$ considering only the $i$th part. Since $\|\mat{G}_\partite \vect{\theta}\|_2 \le \sqrt{\RE' m}$,
\begin{align}\label{eqn:g-partite-kernel}
     \eta + \sum_{j} \bar{\eta}_j \le \left| \eta + \sum_{j} \bar{\eta}_j \right| &\le \frac{1}{\gamma}\|\mat{G}_\partite \vect{\theta}\|_2 \le \frac{\sqrt{\RE' m}}{\gamma}.
\end{align}
Adding~\eqref{eqn:avg-rcs} and \eqref{eqn:g-partite-kernel}, we get an upper bound on the sum of the positive entries in $\vect{\bar{\eta}}$:
\begin{align}\label{eqn:small-positive-cancellation}
    \sum_{j} \bar{\eta}_j \cdot \mathbbm{1}[\bar{\eta}_j > 0] \le \frac{\sqrt{\RE' m}}{2 \gamma} + 5 \epsilon \eta.
\end{align}
Let $\mat{H} \in \{-1,1\}^{d/k \times 2^{d/k}}$ be a block in $\mat{G}_{\sparse}$, i.e., whose columns are all of the vectors in $\{-1,1\}^{d/k}$. Let $\vect{h}^{(j)}$ denote the $j$th column of $\mat{H}$. Without loss of generality, we can choose the location of the $\eta$ entry within the block so that it corresponds to the column $\ones \in \R^{d/k}$. To see this, if not, we can flip the sign of the rows of $\mat{H}$ that correspond to $-1$ entries in the column corresponding to $\eta$. This has the effect of permuting the columns of $\mat{H}$ and has no effect on $\norm{\mat{G}_{\sparse} \vect{\theta}}_1$ or $\norm{ \mat{G}_{\partite} \vect{\theta}}_2$ as the entries just flip sign. We will use the convention that first column of $\mat{H}$ is $\ones$. By considering only part $i$, we have the inequality
\begin{equation}\label{eqn:decomposing-sparse-gadget-contributions}
 \norm{\mat{G}_{\sparse} \vect{\theta}}_1 \geq \norm*{\eta \ones + \sum_{j = 2}^{2^{d/k}} \bar{\eta}_j \vect{h}^{(j)}}_1 \geq \sum_{\ell = 1}^{d/k} \left( \eta + \sum_{j = 2}^{2^{d/k}} \bar{\eta}_j h^{(j)}_{\ell} \right) = \frac{\eta d}{k} + \sum_{j = 2}^{2^{d/k}} \bar{\eta}_j \sum_{\ell = 1}^{d/k}  h^{(j)}_{\ell}.
 \end{equation}
Since $\vect{h}^{(j)} \neq \ones$ for all $j \geq 2$, we know for all $j \geq 2$,
\[ \sum_{\ell = 1}^{d/k}  h^{(j)}_{\ell} \leq \frac{d}{k}-2, \]
as at least one entry in $\vect{h}^{(j)}$ is $-1$. This bound is helpful only when $\bar{\eta}_j < 0$, as this will let us lower bound $\norm{\mat{G}_{\sparse} \vect{\theta}}_1$. Thankfully, \eqref{eqn:small-positive-cancellation} gives us a bound on the sum of the positive $\bar{\eta}_j$ contributions, in which case we use the trivial bound
\[ \sum_{\ell = 1}^{d/k}  h^{(j)}_{\ell} \geq -\frac{d}{k}. \]
Let $J_{+}, J_{-} \subseteq [2^{d/k}] \setminus \{1\}$ denote the set of columns $j$ where $\bar{\eta}_j \geq 0$ and $\bar{\eta}_j < 0$, respectively. Combining the above bounds with~\eqref{eqn:decomposing-sparse-gadget-contributions}, we have
\begin{align}
\norm{\mat{G}_{\sparse} \vect{\theta}}_1 \geq \frac{\eta d}{k} + \sum_{j = 2}^{2^{d/k}} \bar{\eta}_j \sum_{\ell = 1}^{d/k} &= \frac{\eta d}{k} + \sum_{j \in J_{+}} \bar{\eta}_j \sum_{\ell = 1}^{d/k} h^{(j)}_{\ell} + \sum_{j \in J_{-}} \bar{\eta}_j \sum_{\ell = 1}^{d/k} h^{(j)}_{\ell}
\\&\geq \frac{\eta d}{k} - \frac{d}{k} \sum_{j \in J_{+}} \bar{\eta}_j - \left(\frac{d}{k} - 2 \right) \left| \sum_{j \in J_{-}} \bar{\eta}_j \right|
\\&\geq \frac{\eta d}{k} - \frac{d}{k} \left( \frac{\sqrt{\RE' m}}{2 \gamma} + 5 \epsilon \eta \right) - \left(\frac{d}{k} - 2 \right) \sum_{j = 2}^{2^{d/k}} \left| \bar{\eta}_j \right|
\\&\geq \frac{\eta d}{k} - \frac{d}{k} \left( \frac{\sqrt{\RE' m}}{2 \gamma} + 5 \epsilon \eta \right) - \left(\frac{d}{k} - 2 \right) (1 + 10\epsilon) \eta\label{eqn:final-Gsparse-bound},
\end{align}
where the penultimate inequality comes from~\eqref{eqn:small-positive-cancellation}, and the last inequality comes from~\eqref{eqn:avg-rcs}. We now bound the remaining terms. Setting
\[ \epsilon := \frac{k}{50d}, \]
and assuming 
\begin{equation}\label{eqn:gamma-assumption} \gamma \geq \frac{5d \sqrt{\RE' m}}{ \eta k},
\end{equation}
we have the guarantee that
\[\frac{\sqrt{\RE' m}}{2 \gamma} + 5 \epsilon \eta \leq \frac{\eta k}{10d} +  \frac{\eta k}{10d} = \frac{\eta k}{5d}. \]
Furthermore, our setting of $\epsilon$ guarantees 
\[ \left(\frac{d}{k} - 2 \right) (1 + 10\epsilon) =  \frac{d}{k} - \frac{9}{5} - \frac{2k}{5d} < \frac{d}{k} - 1.\]
Plugging these into~\eqref{eqn:final-Gsparse-bound}, we have
\begin{align*}
    \norm{\mat{G}_{\sparse} \vect{\theta}}_1 &\geq \frac{\eta d}{k} - \frac{d}{k} \left( \frac{\sqrt{\RE' m}}{2 \gamma} + 5 \epsilon \eta \right) - \left(\frac{d}{k} - 2 \right) (1 + 10\epsilon) \eta
    \\&\geq \frac{\eta d}{k} - \frac{d}{k} \cdot \frac{\eta k}{5d} - \left(\frac{d}{k} - 1 \right) \eta
    \\&= \frac{4}{5} \eta \geq \frac{4 \epsilon}{5k} = \frac{2}{125d},
\end{align*}
where the last inequality comes from our application of Lemma~\ref{lemma:averaging-argument}, which implies $\eta \geq \epsilon/k$. Recall that we get a contradiction if
\[ \|\mat{G}_\sparse \vect{\theta}\|_1 \ge \tau_1 := \frac{\sqrt{\RE' d m} }
    {\sigma_{\min}(\mat{R} \mat{B})}.\]
Therefore, setting $\RE'$ so that
\[ \frac{\sqrt{\RE' d m} }
    {\sigma_{\min}(\mat{R} \mat{B})} < \frac{2}{125d}\]
arrives us at a contradiction. Since $\sigma_{\min}(\mat{R} \mat{B}) \geq \sigma_{\min}(\mat{R}) \sigma_{\min}(\mat{B}) \geq \sqrt{m_1}/2 \cdot \sigma_{\min}(\mat{B})$ with probability at least $1 - e^{-\Omega(m_1)}$ by Lemma~\ref{lemma:gaussian-singular-values}, it suffices to set
\[ \RE' := \frac{1}{2} \cdot \frac{m_1 \sigma_{\min}(\mat{B})^2}{125^2 d^3 m} = \Theta \left( \frac{\sigma_{\min}(\mat{B})^2}{d^3} \right). \]

Now, we justify our assumption on $\gamma$ in~\eqref{eqn:gamma-assumption}. Recall that our reduction sets
\[ \gamma = \Theta \left( \frac{ \sigma_{\max}(\mat{B}) \sqrt{dm}}{\sqrt{k}} \right). \]
By our setting of $\RE'$ and since $\eta \geq \epsilon / k = 1/(50d)$, we have
\[ \gamma \gg \frac{\sigma_{\min}(\mat{B})\sqrt{dm}}{k} \geq \Theta\left(\frac{\sigma_{\min}(\mat{B})\sqrt{m}}{k \eta \sqrt{d}} \right) = \Theta \left( \frac{d \sqrt{\RE' m}}{\eta k} \right), \]
showing that~\eqref{eqn:gamma-assumption} is indeed satisfied.

Finally, scaling by $Z^2$, we have
\[ \RE = \frac{\RE'}{Z^2} = \Theta \left( \frac{k \cdot \sigma_{\min}(\mat{B})^2}{d^4 \cdot \sigma_{\max}(\mat{B})^2} \right),\]
as desired.
\end{proof}

\begin{lemma}\label{lemma:averaging-argument}
    Suppose $x_i, y_i \in \R_{\geq 0}$ for $i \in [k]$, such that $\sum_{i=1}^k x_i + \sum_{i=1}^k y_i = 1$. Further suppose that for some $\epsilon \in (0, 1/20)$, $\sum_{i=1}^k y_i \le (1 + \epsilon) \sum_{i=1}^k x_i$. Then, there is some $i^* \in [k]$ such that $x_{i^*} \ge \epsilon/k$ and $y_{i^*} \le (1 + 10 \epsilon) x_{i^*}$.
\end{lemma}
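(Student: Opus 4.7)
My plan is to proceed by contradiction: assume that no such $i^*$ exists, and derive a contradiction with the hypothesis $\sum y_i \le (1+\epsilon) \sum x_i$. Under this assumption, every index $i \in [k]$ falls into one of two ``bad'' categories: either (type 1) $x_i < \epsilon/k$, or (type 2) $x_i \ge \epsilon/k$ and $y_i > (1+10\epsilon) x_i$. Let $B_1$ and $B_2$ denote these disjoint sets covering $[k]$, and write $X := \sum_i x_i$, $Y := \sum_i y_i$ for brevity.

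The first observation is that indices in $B_1$ contribute very little $x$-mass: $\sum_{i \in B_1} x_i < |B_1| \cdot \epsilon/k \le \epsilon$. Consequently $\sum_{i \in B_2} x_i > X - \epsilon$, and by definition of $B_2$ we obtain the lower bound
\begin{equation*}
Y \;\ge\; \sum_{i \in B_2} y_i \;>\; (1+10\epsilon) \sum_{i \in B_2} x_i \;>\; (1+10\epsilon)(X - \epsilon).
\end{equation*}

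The second ingredient is a lower bound on $X$. From $X + Y = 1$ combined with $Y \le (1+\epsilon)X$, we get $1 \le (2+\epsilon)X$, hence $X \ge 1/(2+\epsilon) \ge 1/2 - O(\epsilon)$ for small $\epsilon$. Plugging into the previous display, the hypothesis $Y \le (1+\epsilon) X$ would force
\begin{equation*}
(1+\epsilon) X \;>\; (1+10\epsilon)(X - \epsilon), \qquad \text{i.e.} \qquad 9\epsilon \cdot X \;<\; (1+10\epsilon)\epsilon.
\end{equation*}
Dividing by $\epsilon$ gives $9X < 1 + 10\epsilon$, which combined with $X \ge 1/(2+\epsilon)$ yields $9/(2+\epsilon) < 1 + 10\epsilon$. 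For $\epsilon \le 1/20$ the left side is at least $9/(2.05) > 4.3$ while the right side is at most $1.5$, a contradiction.

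No step here looks technically difficult; the main thing to get right is the bookkeeping of the constants, in particular making sure that the chosen slack $10\epsilon$ (in the definition of $B_2$) is large enough compared to $\epsilon$ to beat the crude bound $\sum_{B_1} x_i \le \epsilon$. The choice $10\epsilon$ leaves ample room: one really only needs the slack factor to exceed a small constant multiple of $\epsilon$, and the hypothesis $\epsilon \le 1/20$ is far more than enough to close the inequality.
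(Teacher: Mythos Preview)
Your proof is correct and follows essentially the same approach as the paper's: argue by contradiction, partition $[k]$ into the indices with $x_i<\epsilon/k$ and the remaining indices (where necessarily $y_i>(1+10\epsilon)x_i$), bound the $x$-mass of the first set by $\epsilon$, and combine with the lower bound $X\ge 1/(2+\epsilon)$ to force a numerical contradiction. The only cosmetic difference is in the final arithmetic---the paper compares $\sum_{i\in B_2} y_i$ directly to the upper bound $\tfrac12+\epsilon$ on $Y$, whereas you chain $(1+\epsilon)X>(1+10\epsilon)(X-\epsilon)$---but the substance is identical.
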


\begin{proof}
Since $\sum_{i=1}^k x_i + \sum_{i=1}^k y_i = 1$ and $\sum_{i=1}^k y_i \le (1 + \epsilon) \sum_{i=1}^k x_i$, this implies that $\sum_{i=1}^k x_i \ge \frac{1}{2 + \epsilon} \ge \frac{1}{2} - \epsilon$, and $\sum_{i=1}^k y_i \le \frac{1}{2} + \epsilon$.

Suppose for contradiction that for all $i \in [k]$, either $x_i < \epsilon / k$ or $y_i > (1 + 10 \epsilon) x_i$. Let $A$ be the set of all $i \in [k]$ such that $x_i < \epsilon / k$, and let $B = [k] \setminus A$. By assumption, for all $i \in B$, we have that $y_i > (1 + 10 \epsilon) x_i$. Then, since $\sum_{i \in A} x_i \le \epsilon$,
\begin{align*}
    \sum_{i \in B} y_i &> (1 + 10\epsilon) \sum_{i \in B} x_i = (1 + 10\epsilon) \left( \sum_{i \in [k]} x_i - \sum_{i \in A} x_i \right)\\
    &\ge (1 + 10 \epsilon) \left(\frac{1}{2} - 2 \epsilon \right) = \frac{1}{2} + 5 \epsilon - 2 \epsilon - 20 \epsilon^2 \ge \frac{1}{2} + 2 \epsilon.
\end{align*}
This is a contradiction, since we also know that $\sum_{i \in B} y_i \le \sum_{i \in [k]} y_i \le \frac{1}{2} + \epsilon$.
\end{proof}

\subsection{Prediction Error Achieved by Lasso on Our Instances}\label{sec:lasso-on-our-instances-appendix}
We now apply Theorem~\ref{thm:lasso-performance-in-terms-of-noise} to our instance as generated in Theorem~\ref{thm:main-bin-bdd-to-slr}, to obtain a bound on the prediction error achieved by Lasso on our instances. Note that this is a mean squared prediction error bound as opposed to a mean squared error bound. To translate Lasso into an algorithm for $\BinBDD_{d, \alpha}$ via the reduction in Theorem~\ref{thm:main-bin-bdd-to-slr}, we can bound the mean-squared error by the triangle inequality (see Corollary~\ref{cor:lasso-binbdd}).
\begin{lemma}\label{lemma:prediction-error-lasso-on-our-instances}
Consider the instances $(\matX, \vect{y}, \delta = \Theta(\lambda_{1, \bin}(\matB))$ of $k$-$\SLR$ instances obtained in the reduction from a $\BinBDD_{d, \alpha}$ instance $(\matB, \cdot)$ as in Theorem~\ref{thm:main-bin-bdd-to-slr}. On these instances, the thresholded Lasso estimator $\vect{\theta}_{\mathsf{TL}}$ can achieve prediction error bounded as
\begin{align*}
    \frac{1}{m}\norm{ \mat{X}\vect{\widehat{\theta}}_{\TL} - \mat{X} \vect{\theta}^*}^2_2 
    \leq O \left( \frac{\alpha^2 \cdot \lambdabin(\mat{B})^2 \cdot \cond(\mat{B})^4 \cdot d^{10}}{k^2}\right) .
\end{align*}
\end{lemma}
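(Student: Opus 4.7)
The plan is to directly plug the parameters produced by the reduction of Theorem~\ref{thm:main-bin-bdd-to-slr} into the Lasso guarantee of Theorem~\ref{thm:lasso-performance-in-terms-of-noise}, using the restricted-eigenvalue bound from Lemma~\ref{lemma:re-constant-lower-bound} and the column-normalization from Lemma~\ref{lemma:Z-and-spectral-norm}. The first step is to recall from the proof of Theorem~\ref{thm:main-bin-bdd-to-slr} that the constructed instance has the form $\vect{y} = \mat{X}\vect{\theta}^* + \vect{w}$, where $\vect{\theta}^* \in \S_{n,k}$ is the unique $k$-sparse $k$-partite binary vector with $\mat{G}_{\sparse}\vect{\theta}^* = \vect{z}$, and the noise vector is $\vect{w} = (-\mat{R}\vect{e}, \zero)^\top$. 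Since $\vect{e}$ is independent of $\mat{R}$ and $\norm{\vect{e}}_2 \leq \alpha \cdot \lambdabin(\mat{B})$, Lemma~\ref{lemma:chi-squared-tail-bound} gives $\norm{\vect{w}}_2^2 \leq O(m) \cdot \alpha^2 \lambdabin(\mat{B})^2$ with probability $1 - e^{-\Omega(m)}$.

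Next, I would pass to the column-normalized design matrix $\widetilde{\mat{X}} = \mat{X}/Z$ for $Z = \Theta(\sigma_{\max}(\mat{B})\sqrt{d/k})$ from Lemma~\ref{lemma:Z-and-spectral-norm}, so that the same relation becomes $\widetilde{\vect{y}} = \widetilde{\mat{X}} \vect{\theta}^* + \widetilde{\vect{w}}$ with $\widetilde{\vect{w}} = \vect{w}/Z$. Since the support $S$ of $\vect{\theta}^*$ is exactly $k$-sparse and $k$-partite, Lemma~\ref{lemma:re-constant-lower-bound} applies with $\REfac = \Theta(k/d)$ and
\[
\RE = \Theta\!\left(\frac{k}{d^4 \cdot \cond(\mat{B})^2}\right).
\]
Theorem~\ref{thm:lasso-performance-in-terms-of-noise} then yields, for an optimal choice of regularization parameter,
\[
\frac{1}{m}\,\norm{\widetilde{\mat{X}} \vect{\widehat{\theta}}_{\TL} - \widetilde{\mat{X}}\vect{\theta}^*}_2^2 \leq O\!\left(\frac{\norm{\widetilde{\vect{w}}}_2^2 \cdot k^2}{\RE^2 \cdot \REfac^2 \cdot m}\right).
\]

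Multiplying both sides by $Z^2$ and substituting $Z \cdot \widetilde{\vect{w}} = \vect{w}$ reverses the normalization on the left-hand side and replaces $\norm{\widetilde{\vect{w}}}_2$ by $\norm{\vect{w}}_2$ on the right-hand side, so the $Z$-factors cancel entirely and we get
\[
\frac{1}{m}\,\norm{\mat{X} \vect{\widehat{\theta}}_{\TL} - \mat{X} \vect{\theta}^*}_2^2 \leq O\!\left(\frac{\norm{\vect{w}}_2^2 \cdot k^2}{\RE^2 \cdot \REfac^2 \cdot m}\right).
\]
Plugging in $\norm{\vect{w}}_2^2 = O(m \alpha^2 \lambdabin(\mat{B})^2)$, $\RE^2 = \Theta(k^2 / (d^8 \cond(\mat{B})^4))$ and $\REfac^2 = \Theta(k^2/d^2)$ gives, after cancellation of the $k^2$ factors and of $m$, the claimed bound
\[
\frac{1}{m}\,\norm{\mat{X} \vect{\widehat{\theta}}_{\TL} - \mat{X}\vect{\theta}^*}_2^2 \leq O\!\left(\frac{\alpha^2 \cdot \lambdabin(\mat{B})^2 \cdot \cond(\mat{B})^4 \cdot d^{10}}{k^2}\right).
\]

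There is no real obstacle beyond bookkeeping: the only thing to be careful about is that the Lasso guarantee of Theorem~\ref{thm:lasso-performance-in-terms-of-noise} is stated for column-normalized matrices, so one must normalize, apply the theorem, and rescale, verifying that $Z$ cancels between $\norm{\widetilde{\vect{w}}}_2^2 = \norm{\vect{w}}_2^2/Z^2$ and the multiplication by $Z^2$ on the prediction-error side. The factor $\REfac^{-2} = \Theta(d^2/k^2)$ arising from the (unusually small) flexibility parameter needed in Lemma~\ref{lemma:re-constant-lower-bound} is what degrades the $d$-exponent from $8$ to $10$.
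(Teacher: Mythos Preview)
Your proof is correct and follows essentially the same approach as the paper: normalize by $Z$, apply the Lasso guarantee of Theorem~\ref{thm:lasso-performance-in-terms-of-noise} using the $(\RE,\REfac)$-values from Lemma~\ref{lemma:re-constant-lower-bound}, and then undo the normalization. The only cosmetic difference is that the paper keeps the bound in the normalized scale until the last step and then multiplies by $Z^2 = \Theta(\sigma_{\max}(\mat{B})^2 d/k)$, whereas you observe (correctly) that the $Z$-factors cancel between the two sides so one can work directly with the unnormalized $\norm{\vect{w}}_2^2$.
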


\begin{proof}
    With the goal of applying Theorem~\ref{thm:lasso-performance-in-terms-of-noise}, we first bound $ \norm*{\widetilde{\vect{w}}}_2$. On our instances, we have
    \[ \widetilde{\vect{w}} = \widetilde{\vect{y}} - \widetilde{\mat{X}} \vect{\theta}^* =  \frac{1}{Z} \begin{pmatrix}
\mat{R} \vect{e}\\
\zero
\end{pmatrix}.\]
Therefore,
    \[ \norm*{\widetilde{\vect{w}}}_2 = \frac{1}{Z} \norm{\mat{R} \vect{e}}_2 \leq \frac{1}{Z} \sigma_{\max}(\mat{R}) \norm{\vect{e}}_2 \leq O \left( \frac{ \alpha \cdot  \lambdabin(\mat{B}) \sqrt{\slrrows}}{Z} \right)\]
with probability at least $1 - e^{-\Omega(m)}$ over $\mat{R}$ by Lemma~\ref{lemma:gaussian-singular-values}. Since $Z = \Theta(\sigma_{\max}(\mat{B}) \cdot \sqrt{\bdddim/k})$, we have
    \[ \norm*{\widetilde{\vect{w}}}_2 \leq  O \left( \frac{ \alpha \cdot  \lambdabin(\mat{B}) \sqrt{mk}}{\sigma_{\max}(\matB) \sqrt{d}} \right).\]
With this bound on $ \norm*{\widetilde{\vect{w}}}_2$, we can plug in Lemma~\ref{lemma:re-constant-lower-bound}, which says that for $\epsilon = \Theta(k/d)$, $\widetilde{\mat{X}}$ satisfies the $(\RE, \epsilon)$-restricted eigenvalue condition for all $k$-sparse, $k$-partite $S \subseteq [n]$ for 
\[ \RE = \Theta\left( \frac{k}{\bdddim^4 \cdot \cond(\mat{B})^2} \right).\]
Plugging in $ \norm*{\widetilde{\vect{w}}}_2$, $\epsilon$, and $\RE$ into Theorem~\ref{thm:lasso-performance-in-terms-of-noise}, we get
    \[\frac{1}{m} \| \widetilde{\mat{X}} \widehat{\vect{\theta}}_{\mathsf{TL}} - \widetilde{\mat{X}} \vect{\theta}^*\|_2^2 \le O\left( \frac{\norm{\widetilde{\vect{w}}}_2^2 \cdot k^2}{\RE^2 \cdot  \epsilon^2 \cdot m} \right) \leq O \left( \frac{\alpha^2 \cdot \lambdabin(\matB)^2 \cdot \cond(\matB)^4 \cdot d^9}{k \cdot \sigma_{\max}(\matB)^2} \right). \]
Scaling up by $Z^2 = \Theta(\sigma_{\max}(\matB)^2 \cdot d/k)$ to convert back to the unnormalized version, we have
    \[\frac{1}{m} \| \mat{X} \widehat{\vect{\theta}}_{\mathsf{TL}} - \mat{X} \vect{\theta}^*\|_2^2 \leq O \left( \frac{\alpha^2 \cdot \lambdabin(\matB)^2 \cdot \cond(\matB)^4 \cdot d^{10}}{k^2} \right), \]
as desired.

\end{proof}

\begin{corollary}[Lasso-based Algorithm for $\BinBDD_{d, \alpha}$]\label{cor:lasso-binbdd}
Let $d, k, m$ be integers such that $k$ divides $d$ and $m\ge \Omega(d)$. Let $\alpha$ be a real number such that
\begin{align*}
    \alpha \leq C \cdot \frac{k}{d^5 \cdot \cond(\mat{B})^2}
\end{align*}
for some universal constant $C>0$. Then, running the reduction in Theorem~\ref{thm:main-bin-bdd-to-slr} and reducing to Lasso as the $k$-$\SLR$ solver gives a $\poly(m, k \cdot 2^{d/k})$-time reduction from $\BinBDD_{d, \alpha}$ that succeeds with probability at least $1 - e^{-\Omega(m)}$. Setting $k$ and $m$ such that $d = 10k$ and $m = 100d$, this becomes a $\poly(d)$ time reduction from $\BinBDD_{d, \alpha}$ to Lasso that succeeds with probability at least $1 - e^{-\Omega(d)}$ as long as
\begin{align*}
    \alpha \le C' \cdot \frac{1}{d^4 \cdot \cond(\mat{B})^2},
\end{align*}
for some universal constant $C'>0$.
\end{corollary}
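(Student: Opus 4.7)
The plan is to chain together the reduction from Theorem~\ref{thm:main-bin-bdd-to-slr} with the Lasso prediction-error bound from Lemma~\ref{lemma:prediction-error-lasso-on-our-instances}, converting prediction error to the mean-squared error quantity required by the reduction, and then solving for the parameter regime in which this composition succeeds.

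Concretely, I would proceed as follows. Given a $\BinBDD_{d,\alpha}$ instance $(\mat{B}, \vect{t})$, apply Theorem~\ref{thm:main-bin-bdd-to-slr} to produce a $k$-$\SLR$ instance $(\mat{X}, \vect{y})$ with parameter $\delta = \Theta(\lambdabin(\mat{B}))$. Run the thresholded Lasso estimator $\widehat{\vect{\theta}}_{\TL}$ on $(\mat{X}, \vect{y})$. Lemma~\ref{lemma:prediction-error-lasso-on-our-instances} bounds the mean squared \emph{prediction} error by $O\!\left(\alpha^2 \lambdabin(\mat{B})^2 \cond(\mat{B})^4 d^{10}/k^2\right)$, but the reduction in Theorem~\ref{thm:main-bin-bdd-to-slr} requires a bound on the mean squared error $\tfrac{1}{m}\|\mat{X}\widehat{\vect{\theta}}_{\TL} - \vect{y}\|_2^2$. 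To bridge these two, I would use the triangle inequality: since $\vect{y} = \mat{X}\vect{\theta}^* + \vect{w}$ where $\vect{w} = (-\mat{R}\vect{e}; \zero)$, we have
\begin{align*}
\tfrac{1}{m}\|\mat{X}\widehat{\vect{\theta}}_{\TL} - \vect{y}\|_2^2 \;\le\; \tfrac{2}{m}\|\mat{X}\widehat{\vect{\theta}}_{\TL} - \mat{X}\vect{\theta}^*\|_2^2 + \tfrac{2}{m}\|\vect{w}\|_2^2 .
\end{align*}
The first term is controlled by Lemma~\ref{lemma:prediction-error-lasso-on-our-instances}. The second term is bounded using $\|\vect{w}\|_2 = \|\mat{R}\vect{e}\|_2 \le \sigma_{\max}(\mat{R})\|\vect{e}\|_2 \le O(\alpha \lambdabin(\mat{B})\sqrt{m})$ with probability $1-e^{-\Omega(m)}$ via Lemma~\ref{lemma:gaussian-singular-values}, which yields a term of order $\alpha^2 \lambdabin(\mat{B})^2$. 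The prediction-error term dominates (it has an extra $\cond(\mat{B})^4 d^{10}/k^2$ factor).

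To apply the soundness guarantee of Theorem~\ref{thm:main-bin-bdd-to-slr}, I need this total mean squared error to be at most $\delta^2 = \Theta(\lambdabin(\mat{B})^2)$. Setting
\begin{align*}
\frac{\alpha^2 \lambdabin(\mat{B})^2 \cond(\mat{B})^4 d^{10}}{k^2} \;\le\; \Theta(\lambdabin(\mat{B})^2)
\end{align*}
and solving for $\alpha$ yields $\alpha \le C \cdot k/(d^5 \cond(\mat{B})^2)$ for a universal constant $C > 0$, exactly matching the stated hypothesis. For the specialization $d = 10k$, $m = 100d$, the ratio $k/d^5$ becomes $\Theta(1/d^4)$, producing the stated bound $\alpha \le C'/(d^4 \cond(\mat{B})^2)$.

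The remaining bookkeeping is routine: running the reduction takes $\poly(m, k\cdot 2^{d/k})$ time, Lasso itself runs in polynomial time in its input size, and the success probability is dominated by a union bound over (i) the event that the $\BinBDD$ instance maps to a valid $k$-$\SLR$ instance (from Theorem~\ref{thm:main-bin-bdd-to-slr}, failure probability $e^{-\Omega(m)}$), (ii) the spectral bounds on $\mat{R}$ (Lemma~\ref{lemma:gaussian-singular-values}, failure $e^{-\Omega(m)}$), (iii) the RE-constant bound (Lemma~\ref{lemma:re-constant-lower-bound}, failure $e^{-\Omega(m)}$), and (iv) the noise norm bound above. All four failure probabilities are $e^{-\Omega(m)}$, so the net success probability is $1 - e^{-\Omega(m)}$. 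Since the input to all three lemmas comes from the same draw of $\mat{R}$, no subtlety arises in combining them. The one step that deserves care is simply verifying that the constants and polynomial factors line up; there is no conceptual obstacle, since the heavy lifting is done by Theorem~\ref{thm:main-bin-bdd-to-slr} and Lemma~\ref{lemma:prediction-error-lasso-on-our-instances}.
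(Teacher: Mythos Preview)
Your proposal is correct and follows essentially the same approach as the paper: apply Lemma~\ref{lemma:prediction-error-lasso-on-our-instances} for the prediction-error term, bound the noise term $\tfrac{1}{m}\|\vect{w}\|_2^2 = \tfrac{1}{m}\|\mat{R}\vect{e}\|_2^2 = O(\alpha^2\lambdabin(\mat{B})^2)$, combine via the triangle inequality to control the mean squared error, and solve for $\alpha$ so that the total is at most $\delta^2 = \Theta(\lambdabin(\mat{B})^2)$. Your bookkeeping on the union bound and runtime is, if anything, slightly more explicit than the paper's.
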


\begin{proof}
Let $(\mat{X}, \vect{y} = \mat{X} \vect{\theta}^* + \vect{w}, \delta)$ be the $k$-$\SLR$ instance obtained in the reduction in Theorem~\ref{thm:main-bin-bdd-to-slr}. Then, if $\widehat{\vect{\theta}}_{\mathsf{TL}}$ is the solution obtained via thresholded Lasso, since $\|\mat{X}\vect{\theta}^* - \vect{y}\|^2_2/m = \|\mat{R} \vect{e}\|^2_2/m \le O(\alpha^2 \cdot \lambda_{1,\bin}(\mat{B}))$ with probability at least $1 - e^{-\Omega(m)}$, by triangle inequality, we have that
\begin{align*}
    \frac{1}{m} \|\mat{X} \widehat{\vect{\theta}}_{\mathsf{TL}} - \vect{y}\|^2_2 
    &\le O \left(\frac{1}{m} \|\mat{X} 
    \widehat{\vect{\theta}}_{\mathsf{TL}} - \mat{X} \vect{\theta}^* \|^2_2 + \frac{1}{m} 
    \|\mat{X}\vect{\theta}^* - \vect{y}\|^2_2\right)\\
    &\le O\left(\alpha^2 \cdot \lambda_{1, \bin}(\mat{B})^2 \cdot \cond(\mat{B})^4 \cdot \frac{d^{10}}{k^2}\right) + O(\alpha^2 \cdot \lambda_{1, \bin}(\mat{B})^2).
\end{align*}
Theorem~\ref{thm:main-bin-bdd-to-slr} needs a $k$-$\SLR$ solver with $\delta = \Theta(\lambda_{1, \bin}(\mat{B}))$ by~\eqref{eqn:setting-delta}, so setting
\begin{align*}
    \alpha \leq C \cdot \frac{k}{d^5 \cdot \cond(\mat{B})^2}
\end{align*}
for some universal constant $C > 0$ suffices for the correctness of the reduction. This gives us a Lasso-based algorithm for $\BinBDD_{d, \alpha}$ that runs in time $\poly(m, k \cdot 2^{\bdddim/k})$.
\end{proof}

\begin{remark}
We briefly remark that the information-theoretic $k$-$\SLR$ solver (i.e., by the bound in Proposition~\ref{prop:worst-case-info-theoretic-bound}) would solve our BDD instance for sufficiently small \emph{constant} $\alpha$. In particular, if allowed oracle calls to an information-theoretic $k$-$\SLR$ solver, there would be a $\poly(d)$-time algorithm for $\BinBDD_{d, \alpha}$ for sufficiently small constant $\alpha$.
\end{remark}

Finally, we state how a hypothetical improvement to Lasso gives a corresponding parameter improvement to $\BinBDD_{d, \alpha}$.

\begin{theorem}\label{thm:better-lasso-gives-better-bdd}
Let $d, k, m, n$ be integers such that $k$ divides $d$, $m \ge \Omega(d)$ and $n = k \cdot 2^{d/k}$.
Suppose there is a polynomial-time algorithm that takes $k$-$\SLR$ instances $(\widetilde{\mat{X}}, \widetilde{\vect{y}} = \widetilde{\mat{X}} \vect{\theta}^* + \widetilde{\vect{w}})$ with design matrices $\widetilde{\mat{X}} \in \R^{m \times n}$ that satisfy the $(\RE, \REfac)$-restricted eigenvalue condition for $\support(\vect{\theta}^*)$, and outputs a $k$-sparse vector $\widehat{\vect{\theta}}$ such that the prediction error is bounded as
\begin{align*}
    \frac{1}{m} \|\widetilde{\mat{X}} \widehat{\vect{\theta}} - \widetilde{\mat{X}} \vect{\theta}^* \|_2^2 \le O \left( \frac{\|\widetilde{\vect{w}}\|_2^2 \cdot k^2}{\RE^{2 - \beta} \cdot \epsilon^2 \cdot m }\right)
\end{align*}
for some constant $\beta \in (0, 2]$. Then there is an algorithm that runs in time $\poly(m, k \cdot 2^{d/k})$ that solves $\BinBDD_{d, \alpha}$ with probability $1 - e^{- \Omega(m)}$ on instances $(\mat{B}, \cdot)$ as long as
\begin{align*}
    \alpha \le C \cdot \frac{k^{1 - \beta/2}}{d^{5 - 2 \beta} \cdot \cond(\mat{B})^{2 - \beta}},
\end{align*}
for some universal constant $C > 0$. 
\end{theorem}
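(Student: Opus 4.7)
The plan is to follow the same template as Corollary~\ref{cor:lasso-binbdd}, but with the strengthened prediction-error bound in place of the plain Lasso guarantee (Theorem~\ref{thm:lasso-performance-in-terms-of-noise}). Concretely, I would start by invoking the reduction of Theorem~\ref{thm:main-bin-bdd-to-slr} on input $(\mat{B}, \vect{t})$ to produce a $k$-$\SLR$ instance $(\mat{X}, \vect{y}, \delta)$ with $\delta = \Theta(\lambdabin(\mat{B}))$; then rescale by $Z = \Theta(\sigma_{\max}(\mat{B}) \cdot \sqrt{d/k})$ as in Lemma~\ref{lemma:Z-and-spectral-norm} to obtain the column-normalized $\widetilde{\mat{X}} = \mat{X}/Z$ and noise $\widetilde{\vect{w}} = \vect{w}/Z$, so that the hypothetical algorithm may be applied as a black box.

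Next, I would feed the normalized instance into the hypothesized algorithm, using the $(\RE, \REfac)$-restricted eigenvalue bound from Lemma~\ref{lemma:re-constant-lower-bound} with $\REfac = \Theta(k/d)$ and $\RE = \Theta(k/(d^{4} \cdot \cond(\mat{B})^{2}))$, which applies because the ground truth support is $k$-sparse and $k$-partite by construction. I would then plug in the noise bound $\|\widetilde{\vect{w}}\|_2 = O(\alpha \cdot \lambdabin(\mat{B}) \cdot \sqrt{mk}/(\sigma_{\max}(\mat{B})\sqrt{d}))$ from the proof of Lemma~\ref{lemma:prediction-error-lasso-on-our-instances} into the hypothesized prediction error bound of order $\|\widetilde{\vect{w}}\|_2^{2} \cdot k^{2}/(\RE^{2-\beta} \cdot \REfac^{2} \cdot m)$, yielding a bound on $\frac{1}{m}\|\widetilde{\mat{X}}\widehat{\vect{\theta}} - \widetilde{\mat{X}}\vect{\theta}^{*}\|_{2}^{2}$ purely in terms of $\alpha$, $\lambdabin(\mat{B})$, $\cond(\mat{B})$, $d$, $k$, and $\beta$. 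Rescaling by $Z^{2} = \Theta(\sigma_{\max}(\mat{B})^{2}\cdot d/k)$ converts this into a bound on $\frac{1}{m}\|\mat{X}\widehat{\vect{\theta}} - \mat{X}\vect{\theta}^{*}\|_{2}^{2}$; a careful accounting of exponents gives
\[ \frac{1}{m}\|\mat{X}\widehat{\vect{\theta}} - \mat{X}\vect{\theta}^{*}\|_{2}^{2} \le O\!\left( \alpha^{2} \cdot \lambdabin(\mat{B})^{2} \cdot d^{10-4\beta} \cdot \cond(\mat{B})^{4-2\beta} \cdot k^{\beta-2} \right). \]

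From here I would combine with $\frac{1}{m}\|\mat{X}\vect{\theta}^{*} - \vect{y}\|_{2}^{2} = \frac{1}{m}\|\mat{R}\vect{e}\|_{2}^{2} \le O(\alpha^{2}\lambdabin(\mat{B})^{2})$ (which holds with probability $1 - e^{-\Omega(m)}$ via Lemma~\ref{lemma:chi-squared-tail-bound}) using the triangle inequality to bound $\frac{1}{m}\|\mat{X}\widehat{\vect{\theta}} - \vect{y}\|_{2}^{2}$, exactly mirroring the step in Corollary~\ref{cor:lasso-binbdd}. For the reduction of Theorem~\ref{thm:main-bin-bdd-to-slr} to succeed, this must be at most $\delta^{2} = \Theta(\lambdabin(\mat{B})^{2})$; dividing through by $\lambdabin(\mat{B})^{2}$ and solving for $\alpha$ yields the claimed threshold
\[ \alpha \le C \cdot \frac{k^{1-\beta/2}}{d^{5-2\beta}\cdot \cond(\mat{B})^{2-\beta}}. \]
Finally, the soundness guarantee of Theorem~\ref{thm:main-bin-bdd-to-slr} then recovers $\vect{z}$ from $\widehat{\vect{\theta}}$ via rounding and multiplication by $\mat{G}_{\sparse}$.

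The main obstacle is purely bookkeeping: tracking the $\beta$-dependent exponents consistently through the normalization by $Z$ and through the RE lower bound of $\RE = \Theta(k/(d^{4}\cond(\mat{B})^{2}))$, and making sure that the assumptions needed in the proof of Lemma~\ref{lemma:re-constant-lower-bound} (most notably the lower bound on $\gamma$ in~\eqref{eqn:gamma-assumption}) are not tightened by the new $\RE^{2-\beta}$ scaling. Since $\beta \in (0,2]$ only weakens the denominator in the prediction-error bound, nothing in that derivation requires revisiting, so the main work is verifying the arithmetic that converts a $\beta$-factor saving on $\RE$ into a corresponding $(2-\beta)$-factor saving on $\cond(\mat{B})$ in the BDD approximation factor, together with the auxiliary $k^{1-\beta/2}/d^{5-2\beta}$ adjustment.
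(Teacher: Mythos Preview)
Your proposal is correct and follows essentially the same approach as the paper, which simply says ``the proof follows directly by modifying the exponent on $\RE$ in Lemma~\ref{lemma:prediction-error-lasso-on-our-instances} and Corollary~\ref{cor:lasso-binbdd}.'' You have in fact gone further than the paper by carrying out the bookkeeping explicitly and verifying that the $\beta$-modified exponents propagate to the stated threshold $\alpha \le C \cdot k^{1-\beta/2}/(d^{5-2\beta}\cond(\mat{B})^{2-\beta})$, and by noting that the $\gamma$ assumption in~\eqref{eqn:gamma-assumption} is independent of $\beta$.
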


\begin{proof}[Proof of Theorem~\ref{thm:better-lasso-gives-better-bdd}]
    The proof follows directly by modifying the exponent on $\RE$ in Lemma~\ref{lemma:prediction-error-lasso-on-our-instances} and Corollary~\ref{cor:lasso-binbdd}.
\end{proof}

\section{Reduction from CLWE}\label{sec:reduction-from-clwe}
We now define the continuous learning with errors ($\CLWE$) assumption as defined by~\cite{BRST20}, which holds assuming the quantum~\cite{BRST20} or classical~\cite{gupte2022continuous} hardness of worst-case lattice problems for various parameter regimes. We define $\sphere^{m-1}$ to be the unit sphere in $\R^m$, i.e., the set of all unit vectors in $\R^m$ according to the $\ell_2$ norm. We write $\vect{v} \sim \sphere^{m-1}$ to denote sampling $\vect{v}$ from the uniform distribution over the sphere. Furthermore, for a vector $\vect{v} \in \R^n$, by $\vect{v} \pmod{1}$, we mean the representative $\widetilde{\vect{v}} \in [-1/2, 1/2)^n$ such that $\vect{v} - \widetilde{\vect{v}} \in \Z^n$. Moreover, by this choice of representative, note that we have $| a \pmod{1}| \leq |a|$ for all $a \in \R$.

\begin{definition}[CLWE Assumption~\cite{BRST20}]\label{def:clwe} For a secret vector $\vect{s} \sim \gamclwe \cdot \sphere^{m-1}$ and error vector $\vect{e} \sim \N(0, \beta^2 I_{n \times n})$, we have the computational indistinguishability
\begin{align*} &\left( \mat{A} \sim \N(0, 1)^{m \times n}, \vect{b}^\top := \vect{s}^\top \mat{A} + \vect{e}^\top \pmod{1} \right)
\\\approx&\left( \mat{A} \sim \N(0, 1)^{m \times n}, \vect{b}^\top \sim [-1/2, 1/2)^{m} \right),
\end{align*}
up to constant advantage for algorithms running in time $T$.
\end{definition}
This assumption is parameterized by $m, n, \gamclwe, \beta$ and $T$. For example, for the parameter setting $\gamclwe = 2 \sqrt{m}$ and $\beta = 1/\poly(m)$, there are no algorithms known that run in time $T = 2^{o(m)}$ with $n = 2^{o(m)}$ samples.

\begin{theorem}\label{thm:reduction-from-clwe}
Suppose $k = m^{\epsilon}$ for some $\epsilon \in (0,1)$. There is a $\poly(n, m)$-time reduction from $\CLWE$ in dimension $m$ with $n$ samples and parameters $\gamclwe = 2 \sqrt{m}, \beta = o(1/\sqrt{k})$ to $k$-$\SLR$, for $n = k \cdot 2^{\Theta( m/k \cdot  \log m)}$. If the $k$-$\SLR$ solver runs in time $T$, the distinguisher for $\CLWE$ runs in time $T + \poly(n,m)$. Moreover, the reduction generates design matrices $\mat{X}$ that can be decomposed as
\[ \mat{X} = \begin{pmatrix}
\mat{X}_1\\
\mat{X}_2
\end{pmatrix}, \]
where the distribution of each row of $\mat{X}_1 \in \R^{\slrrows \times \slrcols}$ is $\iid$ $\N(\zero, I_{n \times n})$, and $\mat{X}_2 \in \R^{k \times \slrcols}$ is proportional to a fixed, instance-independent matrix that depends only on $\slrcols$ and $k$. 
\end{theorem}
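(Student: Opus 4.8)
The plan is to follow the template of the proof of Theorem~\ref{thm:main-bin-bdd-to-slr}, with the lattice basis $\mat{B}$ replaced by the $\CLWE$ matrix $\mat{A}$ and the $\pm1$ secret replaced by a discretization of the continuous secret $\vect{s}$. The reduction starts from the decision version of $\CLWE$: given $(\mat{A},\vect{b})$ with $\mat{A}\sim\N(0,1)^{m\times n}$, it builds a $k$-$\SLR$ instance $(\mat{X},\vect{y},\delta)$, runs the solver, rounds the returned $k$-sparse vector to a candidate secret, and outputs ``structured'' iff that candidate approximately satisfies $\mat{A}^\top\vect{s}\equiv\vect{b}\pmod 1$. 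Because $\mat{A}$ is an honest spherical Gaussian matrix in \emph{both} cases of $\CLWE$, using (a padding of) $\mat{A}$ for the Gaussian block $\mat{X}_1$ is exactly what makes the rows of $\mat{X}_1$ distributed as $\iid$ $\N(\zero,I_{n\times n})$, while $\mat{X}_2=\gamma\mat{G}_{\partite}$ is carried over verbatim and depends only on $\slrcols,k$.

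The first substantive step is discretization. Since $\norm{\vect{s}}=\gamclwe=\Theta(\sqrt{m})$ and $\beta$ is polynomially small, replacing $\vect{s}$ by its rounding to a fixed net of granularity $1/\poly(m)$ on the ball of radius $\gamclwe$ perturbs each $\inner{\vect{a}_i}{\vect{s}}$ by only a $\poly$-small amount (using $\norm{\vect{a}_i}=\widetilde O(\sqrt{m})$ with high probability over $\mat{A}$), which is absorbed into the $\CLWE$ error budget. Partitioning the $m$ secret coordinates into $k$ blocks of size $m/k$, the number of net points in a single block is $\poly(m)^{m/k}=2^{\Theta((m/k)\log m)}=\slrcols/k$, exactly the block width needed by the sparsification gadget; so I would reuse the gadgets of Section~\ref{sec:proof-main-reduction-and-gadgets}: a block-diagonal $\mat{G}_{\sparse}\in\R^{m\times\slrcols}$ whose per-block columns enumerate the single-block net (so, as in Lemma~\ref{lemma:G-is-invertible}, $\mat{G}_{\sparse}$ bijects $\S_{\slrcols,k}$ onto the net over $\R^m$) and $\mat{G}_{\partite}$ (so that, by Lemma~\ref{lemma:G-forces-partite-binary}, $\mat{G}_{\partite}\vect{\theta}=\ones$ together with $k$-sparsity forces $\vect{\theta}\in\S_{\slrcols,k}$). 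The target $\vect{y}$ is assembled from $\vect{b}$ so that the integer wrap-arounds created by the $\pmod 1$ reduction cancel against the term produced by $\mat{A}$ acting on the encoded secret --- the same cancellation that, in the proof of Theorem~\ref{thm:main-bin-bdd-to-slr}, eliminated the lattice point $\mat{B}\vect{z}$ because it appeared inside both $\mat{X}_1\vect{\theta}^*$ and the target. With $\delta$ and $\gamma$ set as there (appropriately rescaled), the planted $\vect{\theta}^*$ encoding the rounded secret has residual equal to the (rounding-perturbed) $\CLWE$ error $\vect{e}$; since $\norm{\vect{e}}_2^2$ is $\beta^2$ times a $\chi^2$ variate, Lemma~\ref{lemma:chi-squared-tail-bound} yields completeness with probability $1-e^{-\Omega(\slrrows)}$, and $\gamma$ is taken large enough that any good $\SLR$ solution rounds back into $\S_{\slrcols,k}$.

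For soundness, suppose $\vect{b}$ is uniform on $[-1/2,1/2)^n$ independently of $\mat{A}$. Any $k$-sparse $\widehat{\vect{\theta}}$ with $\SLR$ loss $\le\delta^2$ must, because of the $\mat{X}_2$ block, satisfy $\mat{G}_{\partite}\widehat{\vect{\theta}}\approx\ones$, hence $\round(\widehat{\vect{\theta}})\in\S_{\slrcols,k}$ encodes some net secret $\vect{s}'$; so it is enough to show that with high probability over $\vect{b}$, no net secret makes the $\mat{A}$-block residual small. This is anticoncentration plus a union bound: for a fixed $\vect{s}'$, the event that $\mat{A}^\top\vect{s}'\equiv\vect{b}\pmod 1$ within the affordable $\ell_2$ radius has probability at most the ratio of a Euclidean ball to the cube $[-1/2,1/2)^n$, which is $e^{-\Omega(n)}$, whereas the number of candidates is $\abs{\S_{\slrcols,k}}=(\slrcols/k)^k=2^{\Theta(m\log m)}=2^{o(n)}$. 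I expect the main difficulty to be the parameter bookkeeping making completeness and soundness coexist: the $\CLWE$ error $\vect{e}$, once spread by $\mat{G}_{\sparse}$ across the $k$ blocks (each block column having $\ell_2$ norm $\Theta(\sqrt{m/k})$, so the per-block error scales by $\Theta(\sqrt{k})$) and then compared both to the rounding granularity and to the anticoncentration radius, must remain below the completeness threshold, and carrying this through is what forces $\beta=o(1/\sqrt{k})$. Verifying that the rows of $\mat{X}_1$ have the stated distribution and that the reduction runs in $\poly(\slrcols,\slrrows)$ time is then routine.
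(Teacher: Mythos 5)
Your proposal inverts the direction of the paper's reduction, and the inversion does not work. You try to \emph{plant} a discretization of the $\CLWE$ secret $\vect{s}\in\R^{m}$ as the sparse regression vector. But the secret multiplies $\mat{A}$ on the left ($\vect{b}^\top=\vect{s}^\top\mat{A}+\vect{e}^\top\bmod 1$), so to make the net-encoded secret the unknown of a regression you would have to take as covariates the vectors $\mat{G}_{\sparse}^\top\vect{a}_i$, whose distribution is $\N(\zero,\mat{G}_{\sparse}^\top\mat{G}_{\sparse})$ --- a rank-$m$ degenerate covariance, not $I_{n\times n}$. Using $\mat{A}$ itself as $\mat{X}_1$ (which is what the theorem's ``Moreover'' clause requires, and what you assert) forces the regression unknown to live in $\R^{n}$, the space indexed by $\CLWE$ \emph{samples}, where there is no secret to plant. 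A second, independent obstruction is the $\bmod\ 1$ reduction: with $\norm{\vect{s}}_2=2\sqrt{m}$ and $\norm{\vect{a}_i}_2\approx\sqrt{m}$, the inner products $\inner{\vect{s},\vect{a}_i}$ are typically of magnitude $\Theta(m)$, and the integer parts discarded by the reduction mod $1$ are unknown and secret-dependent; there is no way to ``assemble $\vect{y}$ from $\vect{b}$ so that the wrap-arounds cancel.'' (In the $\BinBDD$ reduction the target $\vect{t}=\mat{B}\vect{z}+\vect{e}$ is given unreduced, which is why that cancellation was available there.) So your completeness argument has no valid planted solution, and your soundness argument certifies the wrong event.

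The paper's actual reduction goes the other way: it sets $\mat{X}_1=\mat{A}$, $\vect{y}_1=\zero$, and a tiny $\delta=\Theta(1/(\gamclwe\sqrt{m+k}))$, so the $k$-$\SLR$ solver is asked to find $\vect{\widehat{\theta}}\in\R^{n}$ that rounds into $\S_{n,k}$ with $\norm{\mat{A}\vect{\widehat{\theta}}}_2$ small --- i.e., a sparse near-null integer combination of the $\CLWE$ samples (a short dual/SIS-type vector), not the secret. The distinguisher then checks whether $|\vect{b}^\top\round(\vect{\widehat{\theta}})\bmod 1|<1/4$: in the structured case this is $\vect{s}^\top\mat{A}\round(\vect{\widehat{\theta}})+\vect{e}^\top\round(\vect{\widehat{\theta}})\bmod 1$, small because $\mat{A}\round(\vect{\widehat{\theta}})$ is short and $\vect{e}^\top\round(\vect{\widehat{\theta}})\sim\N(0,k\beta^2)=o(1)$ (this is where $\beta=o(1/\sqrt{k})$ enters --- not through any per-block error spreading); in the null case it is uniform on $[-1/2,1/2)$. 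The genuinely hard part, which your proposal has no counterpart for, is \emph{completeness}: proving that a random $\mat{A}\in\R^{m\times n}$ with $n=k\cdot 2^{\Theta(m/k\cdot\log m)}$ admits such a near-null vector in $\S_{n,k}$ with probability $1-o(1)$. The paper does this by a second-moment (Chebyshev) argument over $\S_{n,k}$, with a delicate covariance computation split by support overlap (Appendix~\ref{sec:clwe-proof-completeness}); the choice of $n$ is dictated by making the expected number of such vectors large, not by counting points of a secret net.
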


The interpretations of Section~\ref{sec:removing-worst-case-gadget} also apply here as to how one could remove the worst-case sub-matrix $\mat{X}_2$.

\begin{proof}
We first start with the $\CLWE$ assumption (see Definition~\ref{def:clwe}), which says that for secret vector $\vect{s} \sim \gamclwe \cdot \sphere^{m-1}$ and error vector $\vect{e} \sim \N(0, \beta^2 I_{n \times n})$, we have the indistinguishability
\begin{align*} &\left( \mat{A} \sim \N(0, 1)^{m \times n}, \vect{b}^\top := \vect{s}^\top \mat{A} + \vect{e}^\top \pmod{1} \right)
\\\approx&\left( \mat{A} \sim \N(0, 1)^{m \times n}, \vect{b}^\top \sim [-1/2, 1/2)^{m} \right).
\end{align*}
We construct our design matrix $\mat{X} \in \R^{(m+k) \times n}$ and target $\vect{y} \in \R^{m+k}$ as
\begin{align*}
    \mat{X} :=
    \begin{pmatrix}
    \mat{A} \\
    \alpha \mat{G}_{\partite}
    \end{pmatrix}, \quad
    \vect{y} = \begin{pmatrix}
    \mathbf{0}\\
    \alpha \ones
    \end{pmatrix},
\end{align*}
with parameter 
\[ \delta := \frac{1}{100 \gamclwe \sqrt{m+k}} = \Theta \left( \frac{1}{\gamclwe \sqrt{m + k}} \right). \]
For a ($k$-sparse) $k$-$\SLR$ solution $\vect{\widehat{\theta}} \in \R^{n}$ to the instance $(\mat{X}, \vect{y}, \delta)$, we compute $\round(\vect{\widehat{\theta}}) \in \Z^n$ and check if
\[ \left| \vect{b}^\top \round(\vect{\widehat{\theta}}) \pmod{1} \right| < \frac{1}{4}. \]
If so, we output $1$ (indicating $\CLWE$), and otherwise, we output $0$ (indicating null).

\paragraph{Soundness.} To see that we have a distinguisher for $\CLWE$, suppose $\vect{\widehat{\theta}} \in \R^n$ is a $k$-$\SLR$ solution, i.e.,
\begin{equation}\label{eqn:squared-loss-clwe}
    \frac{\norm{\mat{X} \vect{\widehat{\theta}} - \vect{y}}_2^2}{m+k} \leq \delta^2. 
    \end{equation} 
In particular, we know $\norm{\alpha \mat{G}_{\partite} \vect{\widehat{\theta}} - \alpha \ones}_2^2 \leq (\slrrows + k) \delta^2 = \frac{1}{100^2 \gamclwe^2}$, or equivalently,
\[
\norm{\mat{G}_{\partite} \vect{\widehat{\theta}} - \ones}_2 \leq \frac{1}{100 \alpha \gamclwe}.
\]
Throughout, assume we set $\alpha$ so that $ 100 \alpha \gamclwe > 2$. Each part of $\vect{\widehat{\theta}}$ must have exactly one non-zero entry, as $\vect{\widehat{\theta}}$ is $k$-sparse and cannot have any part with all zero entries. Moreover, we know this non-zero entry must be in $[1 - 1/(100 \alpha \gamclwe), 1 + 1/(100 \alpha \gamclwe)]$, so this entry must round to $1$. Therefore, $\round(\vect{\widehat{\theta}}) \in \S_{\slrcols, k}$, and $\round(\vect{\widehat{\theta}})$ and $\vect{\widehat{\theta}}$ have the same support. By considering the support of $\vect{\widehat{\theta}}$, this implies
\[ \norm{\round(\vect{\widehat{\theta}}) - \vect{\widehat{\theta}}}_2 = \norm{\mat{G}_{\partite} \vect{\widehat{\theta}} - \ones}_2 \leq \frac{1}{100 \alpha \gamclwe}. \] 
Moreover, we will set $\alpha = \max(\sqrt{n}, 3/(100 \gamclwe))$ so that
\[ \norm{\round(\vect{\widehat{\theta}}) - \vect{\widehat{\theta}}}_2  < \frac{1}{100 \gamclwe \sqrt{n}}.\] By~\eqref{eqn:squared-loss-clwe}, we also know
\[ \norm{\mat{A} \vect{\widehat{\theta}}}_2 \leq \delta \sqrt{m+k} \leq \frac{1}{100 \gamclwe}. \]
First, suppose we are the non-null case, i.e., $\vect{b}^\top = \vect{s}^\top \mat{A} + \vect{e}^{\top}$. Since $\round(\vect{\widehat{\theta}}) \in \Z^n$, we then have
\begin{align*} \left| \vect{b}^\top \round(\vect{\widehat{\theta}}) \pmod{1} \right| &= \left|\vect{s}^\top \mat{A} \round(\vect{\widehat{\theta}}) + \vect{e}^\top \round(\vect{\widehat{\theta}}) \pmod{1} \right|
\\&\leq \left| \vect{s}^\top \mat{A} (\round(\vect{\widehat{\theta}}) - \vect{\widehat{\theta}}) \right| + \left| \vect{s}^\top \mat{A} \vect{\widehat{\theta}} \right| + \left| \vect{e}^\top \round(\vect{\widehat{\theta}}) \right|
\\&\leq \norm{\vect{s}^\top \mat{A}}_2 \cdot \norm{\round(\vect{\widehat{\theta}}) - \vect{\widehat{\theta}}}_2 + \frac{1}{100} + \left| \vect{e}^\top \round(\vect{\widehat{\theta}}) \right|. 
\end{align*}
Since $\mat{X}$ and $\vect{y}$ are independent of $\vect{e}$, and since $\round(\vect{\widehat{\theta}}) \in \S_{n,k}$, we know $\vect{e}^\top \round(\vect{\widehat{\theta}}) \sim \N(0, k \beta^2)$. By standard tails bounds on the Gaussian, we have $ | \vect{e}^\top \round(\vect{\widehat{\theta}})| \leq O(\beta \sqrt{k})$ with probability at least $99/100$. Similarly, since $\vect{s}$ and $\mat{A}$ are independent, we know $\vect{s}^\top \mat{A} \sim \N(0, \gamclwe I_{n \times n})$, so by Lemma~\ref{lemma:chi-squared-tail-bound}, we know $\norm{\vect{s}^\top \mat{A}}_2 \leq 10 \gamclwe \sqrt{n}$ with probability at least $1 - e^{-\Omega(n)}$. Plugging these into our previous bound, for $\beta = o(1/\sqrt{k})$, 
\begin{align*} \left| \vect{b}^\top \round(\vect{\widehat{\theta}}) \pmod{1} \right| &\leq \norm{\vect{s}^\top \mat{A}}_2 \cdot \norm{\round(\vect{\widehat{\theta}}) - \vect{\widehat{\theta}}}_2 + \frac{1}{100} + \left| \vect{e}^\top \round(\vect{\widehat{\theta}}) \right|
\\&\leq \frac{1}{10} + \frac{1}{100} + o(1) < \frac{1}{4},
\end{align*}
with probability at least $\frac{49}{50}$.

On the other hand, if we are in the null case, then $\vect{b} \sim [-1/2, 1/2)^m$ independently of $\vect{\widehat{\theta}}$, and consequently $\vect{b}^\top \round(\vect{\widehat{\theta}}) \pmod{1} \sim [-1/2, 1/2)$, in which case
$ \Pr\left[ \left| \vect{b}^\top \round(\vect{\widehat{\theta}}) \pmod{1} \right| \geq 1/4 \right] = 1/2$.

Therefore, we have a distinguisher for $\CLWE$ with $\Omega(1)$ advantage.

\paragraph{Completeness.}
As a warm-up, we compute the expected number of solutions $\vect{\theta} \in \S_{n,k}$. In particular, since $\vect{\theta} \in \S_{n,k}$, we have $\mat{G}_{\partite} \vect{\theta} = \ones$ by Lemma~\ref{lemma:G-forces-partite-binary}. It therefore suffices to show
\[ \frac{\norm{\mat{A} \vect{\theta}}_2^2}{m+k} \leq \delta^2 \]
for $\vect{\theta}$ to be a solution. For simplicity, let $\delta' = \delta \sqrt{m+k} = \frac{1}{100 \gamclwe} = \frac{1}{200 \sqrt{m}}$. Fix some $\vect{\theta} \in \S_{n,k}$. The random variable $\mat{A} \vect{\theta} \in \R^m$ (over the randomness of $\mat{A}$) is distributed according to $\N(\zero, k I_{m \times m})$. Therefore, $\norm{\mat{A} \vect{\theta}}_2^2 / k \sim \chi^2(m)$. As a result,
\[p := \Pr_{\mat{A}}\left[\norm{\mat{A} \vect{\theta}}_2^2 \leq (\delta')^2 \right] = \Pr_{Z\sim \chi^2(m)} \left[ Z \leq \frac{(\delta')^2}{k} \right]. \]
Using the CDF of the $\chi^2(m)$ distribution and the lower incomplete gamma function $\gamma(\cdot, \cdot)$, we have
\begin{align*}
p = \Pr_{Z\sim \chi^2(m)} \left[ Z \leq \frac{(\delta')^2}{k} \right] &= \frac{1}{\Gamma(m/2)} \cdot \gamma\left(\frac{m}{2}, \frac{(\delta')^2}{2k} \right)
\\&= \frac{1}{(m/2-1)!} \left(\frac{(\delta')^2}{2k} \right)^{m/2} e^{-(\delta')^2/2k} \cdot \\
& \quad \quad \quad \sum_{j = 0}^{\infty} \frac{\left(\frac{(\delta')^2}{2k} \right)^j}{(m/2)(m/2+1) \cdots (m/2 + j)}
\\&= (1 \pm o(1)) \cdot \frac{(\delta')^m}{(m/2-1)!(2k)^{m/2}} \cdot \frac{2}{m},
\\&= \left( \frac{\delta'}{\sqrt{mk}} \right)^{m(1 \pm o(1))}
\end{align*}
because in our setting, $(\delta')^2/2k = o(1)$, so $e^{-(\delta')^2/2k} = 1 - o(1)$ and the infinite sum is dominated by its first term, and because $\delta' = o(1)$ and $\sqrt{mk} = \omega(1)$. Therefore, by linearity of expectation, the expected number of solutions $X$ in $\S_{n,k}$ satisfies
\[ \E[X] = \left( \frac{n}{k} \right)^k \cdot p . \]
Setting $n$ so that
\[ p = \left( \frac{n}{k} \right)^{-k/10} \]
gives
\[ \E[X] = \left( \frac{n}{k} \right)^{9k/10}, \;\; n = k \cdot 2^{(10 \pm o(1))m/k \cdot \log(\sqrt{mk}/\delta')} = k \cdot 2^{\Theta( m/k \cdot \log(\sqrt{mk} \cdot \gamclwe))}. \]
We turn this expected value bound into a $1-o(1)$ bound on the existence of a sparse solution in %
\ifnum\llncs=0
Appendix~\ref{sec:clwe-proof-completeness}.
\else
the full version~\cite{gupte2024sparse}.
\fi
\footnote{We greatly thank Kiril Bangachev for turning the expected value statement into a high probability statement.}
\end{proof}

\ifnum\llncs=1
\begin{credits}
\subsubsection{\ackname} We greatly thank Kiril Bangachev for the proof of completeness in Theorem~\ref{thm:reduction-from-clwe} (in 
\ifnum\llncs=0
Appendix~\ref{sec:clwe-proof-completeness}
\else
the full version \cite{gupte2024sparse}\fi), and we thank Huck Bennett and Noah Stephens-Davidowitz for answering our questions about the $\BinBDD$ problem. We would also like to thank Stefan Tiegel for useful discussions about the performance of Lasso in the noiseless setting. AG was supported by the Ida M.~Green MIT Office of Graduate Education Fellowship and by the grants of the third author. NV was supported by NSF fellowship DGE-2141064 and by the grants of the third author. VV was supported in part by DARPA under Agreement No. HR00112020023, NSF CNS-2154149, a grant from the MIT-IBM Watson AI, a grant from Analog Devices, a Microsoft Trustworthy AI grant,  a Thornton Family Faculty Research Innovation Fellowship from MIT and a Simons Investigator Award. Any opinions, findings and conclusions or recommendations expressed in this material are those of the author(s) and do not necessarily reflect the views of the United States Government or DARPA.

\subsubsection{\discintname}
The authors have no competing interests to declare that are relevant to the content of this article.
\end{credits}
\else
\paragraph{Acknowledgements.} We greatly thank Kiril Bangachev for the proof of completeness in Theorem~\ref{thm:reduction-from-clwe} (in Appendix~\ref{sec:clwe-proof-completeness}), and we thank Huck Bennett and Noah Stephens-Davidowitz for answering our questions about the $\BinBDD$ problem. We would also like to thank Stefan Tiegel for useful discussions about the performance of Lasso in the noiseless setting.
\fi

\bibliographystyle{alpha}
\bibliography{refs}

\ifnum\llncs=0
\appendix
\newpage
\section{Complexity of Binary Bounded Distance Decoding}

In this section, we describe what is known about the computational complexity of BDD and $\BinBDD_{d, \alpha}$.

\subsection{Babai's Rounding Off Algorithm}
We analyze the ``rounding off'' algorithm given by~\cite{babai1986lovasz} and bound its performance in terms of the condition number $\cond(\matB)$ of the lattice basis $\matB$.\footnote{For the other algorithm of \cite{babai1986lovasz}, the nearest (hyper)plane algorithm, success occurs when $\alpha$ is at most a quantity defined in terms of the norms of the Gram-Schmidt orthogonalization of the basis $\mat{B}$, which cannot be cleanly bounded in terms of $\cond(\mat{B})$.} Given a full-rank lattice basis $\mat{B} \in \R^{d \times d}$ and target $\vect{t} := \mat{B} \vect{z} + \vect{e}$ for $\vect{z} \in \Z^d$ and $\norm{\vect{e}}_2 \leq \alpha \lambda_1(\mat{B})$, the rounding off algorithm simply outputs $\round(\mat{B}^{-1} \vect{t}) \in \Z^{d}$.\footnote{Often, Babai's rounding algorithm is performed on an LLL-reduced (\cite{lenstra1982factoring}) basis, and the resulting analysis of Babai's algorithm depends on the properties of the LLL reduction. Here, we instead analyze how Babai's algorithm would perform on a general basis in terms of the conditioning of the given basis. Note that lattice basis reduction does not preserve $\BinBDD_{d, \alpha}$ solutions as $\lambdabin(\matB)$ is basis-dependent.} To argue correctness, it suffices to show
\[ \norm{\mat{B}^{-1} \vect{t} - \vect{z}}_{\infty} < \frac{1}{2}, \]
as this would guarantee $\round(\mat{B}^{-1} \vect{t}) = \vect{z}$.

\begin{lemma}\label{lemma:rounding-off}
For $\alpha < \sigma_{\min}(\mat{B}) / (2 \lambda_1(\mat{B}))$, Babai's rounding off algorithm solves $\BDD$. The same is true for $\BinBDD_{d, \alpha}$ for $\alpha < \sigma_{\min}(\mat{B}) / (2 \lambdabin(\mat{B}))$.
\end{lemma}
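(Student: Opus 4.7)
The plan is to run the standard analysis of Babai's rounding-off algorithm and see where the condition $\alpha < \sigma_{\min}(\mat{B})/(2\lambda_1(\mat{B}))$ drops out. Given the BDD instance $(\mat{B}, \vect{t})$ with $\vect{t} = \mat{B}\vect{z} + \vect{e}$, applying $\mat{B}^{-1}$ gives $\mat{B}^{-1}\vect{t} = \vect{z} + \mat{B}^{-1}\vect{e}$. Since $\vect{z} \in \mathbb{Z}^d$, the algorithm succeeds if and only if the perturbation $\mat{B}^{-1}\vect{e}$ does not cross a half-integer boundary in any coordinate, i.e., it suffices to show $\|\mat{B}^{-1}\vect{e}\|_\infty < 1/2$.

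The next step is to bound $\|\mat{B}^{-1}\vect{e}\|_\infty$ in terms of $\sigma_{\min}(\mat{B})$ and $\|\vect{e}\|_2$. Using $\|\cdot\|_\infty \leq \|\cdot\|_2$ and the spectral bound, we have
\[ \|\mat{B}^{-1}\vect{e}\|_\infty \leq \|\mat{B}^{-1}\vect{e}\|_2 \leq \sigma_{\max}(\mat{B}^{-1}) \cdot \|\vect{e}\|_2 = \frac{\|\vect{e}\|_2}{\sigma_{\min}(\mat{B})}. \]
Combining this with the promise $\|\vect{e}\|_2 \leq \alpha \cdot \lambda_1(\mat{B})$, the hypothesis $\alpha < \sigma_{\min}(\mat{B})/(2\lambda_1(\mat{B}))$ directly yields $\|\mat{B}^{-1}\vect{e}\|_\infty < 1/2$, so $\round(\mat{B}^{-1}\vect{t}) = \vect{z}$ as desired.

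For the $\BinBDD_{d, \alpha}$ case, the reduction is syntactically identical: the algorithm recovers $\vect{z} \in \{-1,1\}^d \subset \mathbb{Z}^d$ by the same coordinate-wise rounding argument, and the only change is that the promise on $\vect{e}$ now reads $\|\vect{e}\|_2 \leq \alpha \cdot \lambda_{1,\bin}(\mat{B})$, so substituting $\lambda_{1,\bin}(\mat{B})$ for $\lambda_1(\mat{B})$ in the inequality chain gives the analogous threshold $\alpha < \sigma_{\min}(\mat{B})/(2\lambda_{1,\bin}(\mat{B}))$.

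There is essentially no obstacle here — this is a two-line spectral bound followed by a trivial substitution. The only small care needed is to note that the use of $\|\cdot\|_\infty \leq \|\cdot\|_2$ is slightly lossy (a coordinate-wise analysis using the rows of $\mat{B}^{-1}$ would give a refined bound in terms of the Gram-Schmidt orthogonalization rather than $\sigma_{\min}$), but since the lemma is phrased purely in terms of $\sigma_{\min}(\mat{B})$, this clean bound is the correct thing to use.
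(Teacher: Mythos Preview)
Your proof is correct and follows essentially the same approach as the paper: the paper also writes $\mat{B}^{-1}\vect{t} - \vect{z} = \mat{B}^{-1}\vect{e}$, bounds $\|\mat{B}^{-1}\vect{e}\|_\infty \leq \|\mat{B}^{-1}\vect{e}\|_2 \leq \sigma_{\max}(\mat{B}^{-1})\|\vect{e}\|_2 = \|\vect{e}\|_2/\sigma_{\min}(\mat{B}) \leq \alpha\lambda_1(\mat{B})/\sigma_{\min}(\mat{B}) < 1/2$, and then notes the $\BinBDD$ case is identical with $\lambdabin(\mat{B})$ in place of $\lambda_1(\mat{B})$. Your side remark about the Gram--Schmidt refinement also matches a footnote in the paper.
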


\begin{proof}
For the case of $\BDD$, we have
\[ \norm*{\mat{B}^{-1} \vect{t} - \vect{z}}_{\infty} = \norm*{ \mat{B}^{-1} \vect{e}}_{\infty} \leq \norm*{ \mat{B}^{-1} \vect{e}}_2 \leq \sigma_{\max}(\mat{B}^{-1}) \norm{\vect{e}}_2 = \frac{\norm{\vect{e}}_2}{\sigma_{\min}(\mat{B})} \leq \frac{\alpha \lambda_1(\mat{B})}{ \sigma_{\min}(\mat{B})} < \frac{1}{2}, \]
as desired. The proof follows very similarly for $\BinBDD_{d, \alpha}$.
\end{proof}

\begin{corollary}
    For $\alpha < 1/(4 \cond(\matB))$, Babai's rounding off algorithm solves $\BDD$ and $\BinBDD$.
\end{corollary}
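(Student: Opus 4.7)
The plan is to combine Lemma~\ref{lemma:rounding-off} with the spectral bounds on $\lambda_1(\mat{B})$ and $\lambdabin(\mat{B})$ established in Lemma~\ref{lemma:lambda-1-bound}. Both hypotheses of Lemma~\ref{lemma:rounding-off} take the form ``$\alpha$ is bounded by $\sigma_{\min}(\mat{B})$ divided by twice some lattice-length quantity,'' so to get a unified bound in terms of the condition number $\cond(\mat{B}) = \sigma_{\max}(\mat{B}) / \sigma_{\min}(\mat{B})$, I only need an upper bound on those lattice-length quantities in terms of $\sigma_{\max}(\mat{B})$.

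First I would recall from Lemma~\ref{lemma:lambda-1-bound} that $\lambda_1(\mat{B}) \leq \lambdabin(\mat{B}) \leq 2 \sigma_{\max}(\mat{B})$. Substituting the upper bound $\lambda_1(\mat{B}) \leq 2\sigma_{\max}(\mat{B})$ into the threshold $\sigma_{\min}(\mat{B}) / (2\lambda_1(\mat{B}))$ from Lemma~\ref{lemma:rounding-off} yields
\[
\frac{\sigma_{\min}(\mat{B})}{2 \lambda_1(\mat{B})} \geq \frac{\sigma_{\min}(\mat{B})}{4 \sigma_{\max}(\mat{B})} = \frac{1}{4 \cond(\mat{B})},
\]
and identically for $\lambdabin(\mat{B})$ in the $\BinBDD$ case. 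Hence any $\alpha < 1/(4 \cond(\mat{B}))$ is strictly less than both thresholds appearing in Lemma~\ref{lemma:rounding-off}.

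Invoking Lemma~\ref{lemma:rounding-off} twice, once for $\BDD$ and once for $\BinBDD$, then completes the argument: Babai's rounding off algorithm correctly solves both problems under the given bound on $\alpha$. There is no real obstacle here; the only subtlety to flag is that the two cases use different lattice-length quantities ($\lambda_1$ versus $\lambdabin$), but Lemma~\ref{lemma:lambda-1-bound} bounds both by the same $2 \sigma_{\max}(\mat{B})$, so a single threshold $1/(4 \cond(\mat{B}))$ suffices for both.
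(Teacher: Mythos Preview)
Your proposal is correct and follows essentially the same approach as the paper: use Lemma~\ref{lemma:lambda-1-bound} to bound both $\lambda_1(\mat{B})$ and $\lambdabin(\mat{B})$ above by $2\sigma_{\max}(\mat{B})$, deduce that $1/(4\cond(\mat{B}))$ lies below both thresholds in Lemma~\ref{lemma:rounding-off}, and invoke that lemma.
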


\begin{proof}
    Since $\lambda_1(\mat{B}) \leq  \lambdabin(\mat{B}) \leq 2\sigma_{\max}(\mat{B})$ by Lemma~\ref{lemma:lambda-1-bound}, we have 
    \[ \alpha < \frac{1}{4 \cond(\matB)} = \frac{\sigma_{\min}(\matB)}{4 \sigma_{\max}(\matB)} \leq \min\left( \frac{\sigma_{\min}(\mat{B})}{2 \lambda_1(\mat{B})}, \frac{\sigma_{\min}(\mat{B})}{2 \lambdabin(\mat{B})}\right). \]
    Therefore, we can invoke Lemma~\ref{lemma:rounding-off}.
\end{proof}
As a result, this algorithm is a better algorithm for $\BinBDD$ than in Theorem~\ref{thm:better-lasso-gives-better-bdd} for $\beta \leq 1$ but could be worse for $\beta > 1$ (up to $\poly(d)$ factors).

\subsection{Binary BDD Algorithm of \texorpdfstring{\cite{DBLP:conf/crypto/KirchnerF15}}{Kirchner and Fouque}}

\cite{DBLP:conf/crypto/KirchnerF15} introduce a variant of BDD very similar to $\BinBDD$. By improving the BKW algorithm (\cite{DBLP:journals/jacm/BlumKW03}), they show the following:

\begin{theorem}[Corollary 2 of \cite{DBLP:conf/crypto/KirchnerF15}]\label{thm:kirchner-fouque-alg}
    Given $(\matB \in \R^{d \times d}, \vect{t} := \matB \vect{z} + \vect{e})$ with $\vect{z} \in \{-1, 1\}^d$ and $\norm{\vect{e}}_2 \leq \alpha \lambda_1(\matB)$, there is an algorithm outputting $\vect{z}$ in time $2^{O\left(\frac{d}{\log \log (1/\alpha)} \right)}$
    as long as $\alpha = o(1)$ and $\alpha = 1/2^{o(d / \log d)}$.
\end{theorem}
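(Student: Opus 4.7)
The plan is to adapt the BKW-style algorithm of Blum, Kalai, and Wasserman, exploiting both the binary structure of the secret $\vect{z} \in \{-1, 1\}^d$ and the small-noise guarantee $\|\vect{e}\|_2 \leq \alpha \lambda_1(\matB)$. First, I would reformulate the $\BinBDD$ instance as a system of noisy linear equations in $\vect{z}$: rounding coordinates of $\matB^{-1} \vect{t}$ (or, more usefully, applying linear functionals that are small on the lattice) yields relations of the form $\langle \vect{a}_i, \vect{z} \rangle \approx c_i \pmod{1}$ where $\vect{a}_i \in \R^d$ are known and the right-hand side carries Gaussian-like noise of magnitude proportional to $\alpha$. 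Sampling many such relations by taking random combinations of $\vect{t}$ with short dual-lattice directions gives the required ``LPN/LWE-style'' samples with binary secret.

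Next, I would run the BKW reduction. Partition the coordinate set $[d]$ into $a$ blocks of size roughly $b = d/a$. At stage $j$, group samples by their projection onto block $j$ and cancel pairs of samples that agree there; after $a$ stages the surviving samples have only the last block of unknowns active. Each cancellation step doubles the (squared) noise, so after $a$ stages the noise scales by $2^{O(a)}$, which must still be less than the precision threshold, giving $a \lesssim \log(1/\alpha)$. Solving the residual block (of dimension $b$) over the binary cube is then done by a Walsh–Hadamard / FFT-based majority step in time $2^{O(b)}$. Naively balancing $ab = d$ and $a = \log(1/\alpha)$ already gives a $2^{O(d/\log(1/\alpha))}$ algorithm.

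The improvement to $2^{O(d/\log\log(1/\alpha))}$ — the part I expect to be the main technical obstacle — comes from the Kirchner–Fouque trick of combining BKW with a coded / lazy modulus switching step. Rather than requiring exact agreement on each block, one allows near-agreement up to a carefully chosen quantization radius, which effectively enlarges each block's ``fanout'' without multiplying the noise by the full factor of two; equivalently, one uses a good code on each block and performs syndrome decoding during the matching step. This sub-exponentially enlarges the usable block size $b$ at fixed noise budget, and re-optimizing yields $b = \Theta(d/\log\log(1/\alpha))$ while keeping $a = \Theta(\log(1/\alpha)/\log\log(1/\alpha))$ levels. The delicate part of the write-up is bookkeeping the interaction of (i) continuous Gaussian-like noise over $\R/\Z$, (ii) the $\pm 1$ secret structure (needed both to guarantee uniqueness and to power the Walsh–Hadamard final step), and (iii) the quantization error introduced by coded matching, so that after all $a$ stages the signal on the true $\vect{z}$ still exceeds all competing candidates; the parameter regime $\alpha = o(1)$ and $\alpha = 1/2^{o(d/\log d)}$ is exactly what is needed to make that signal-to-noise calculation close.
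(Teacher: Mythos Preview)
The paper does not prove this theorem at all: it is stated as Corollary~2 of \cite{DBLP:conf/crypto/KirchnerF15} and simply cited as an external result, with no accompanying proof or sketch in the present paper. There is therefore nothing to compare your proposal against here.

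That said, your outline is in the right spirit for what Kirchner and Fouque actually do (BKW with a modulus-switching/coding refinement exploiting the bounded secret), so if your goal were to reproduce their argument, the high-level plan is reasonable. But for the purposes of this paper, the correct ``proof'' is a bare citation, and you should not attempt to supply one.
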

Their algorithm also generalizes to $\vect{z} \in \{-B, \cdots, B\}^d$ instead of just binary $\vect{z}$ (with some dependence on $B$). The main difference between their setting and our definition of $\BinBDD$ is that for them, $\alpha$ is defined in terms of $\lambda_1(\matB)$ instead of $\lambdabin(\matB)$. Since $\lambda_1(\matB) \leq \lambdabin(\matB)$, it is not clear whether their algorithm gives a $2^{O\left(\frac{d}{\log \log (1/\alpha)} \right)}$ time algorithm when given $\norm{\vect{e}}_2 \leq \alpha \lambdabin(\matB)$ instead of  $\norm{\vect{e}}_2 \leq \alpha \lambda_1(\matB)$.

\cite{DBLP:conf/crypto/KirchnerF15} also give reductions from subset sum in low densities \cite[Theorem 14]{DBLP:conf/crypto/KirchnerF15} and bounded-norm variants of UniqueSVP \cite[Theorem 9]{DBLP:conf/crypto/KirchnerF15} and GapSVP \cite[Theorem 11]{DBLP:conf/crypto/KirchnerF15} to this bounded-norm variant of BDD. As a result of these reductions and their algorithm for bounded-norm BDD (i.e., Theorem \ref{thm:kirchner-fouque-alg}), they show algorithms for all three of these problems.

\subsection{Reduction from BDD to Binary BDD}\label{sec:reduction-from-bdd-to-binbdd}
We briefly mention the existence of a reduction from (standard) BDD to $\BinBDD$, at the cost of making the basis $\matB$ for the $\BinBDD$ instance have $\sigma_{\min}(\matB) = 0$ and therefore undefined $\cond(\matB)$.
First, we note that there is a direct reduction from $\{0,1\}$-$\BinBDD$ (i.e., where solutions are in $\{0,1\}^d$) to $\BinBDD$ that preserves the basis $\matB$ by mapping the target $\vect{t}$ to $2\vect{t} - \mat{B} \ones$.

Now, to reduce from standard BDD to $\{0,1\}$-$\BinBDD$, we first reduce the target $\vect{t}$ to lie in the fundamental parallelepiped of $\mat{B}$, and then we apply LLL~\cite{lenstra1982factoring} to get a $q := 2^{O(d)}$ upper bound on the size of the resulting coefficients of the reduced solution $\vect{z} \in \{0, 1, \cdots, q-1\}^d$. For simplicity, we can round $q$ up to the nearest power of $2$. Now, we use the gadget matrix
\[\mat{G} := I_d \otimes (1, 2, \cdots, q/2) \in \Z^{d \times d \log_2(q)}\]
used ubiquitously throughout lattice based cryptography (e.g., \cite{DBLP:conf/eurocrypt/MicciancioP12}) to map $\vect{z} \in \{0, \cdots, q-1\}^d$ to solutions $\vect{z}' \in \{0,1\}^{d \log_2(q)}$. Explicitly, for all $\vect{z} \in \{0, \cdots, q-1\}^d$, there exists a unique $\vect{z}' \in \{0,1\}^{d \log_2(q)}$ such that
\[ \vect{z} = \mat{G} \vect{z}'. \]
To see this, note that for all $i \in \{0, \cdots, d-1\}$, the entries of $(z'_{i \cdot \log_2(q) +1}, \cdots, z'_{i \cdot \log_2(q) + \log_2(q)})$ form the binary expansion of $z_i \in \{0, \cdots, q-1\}$. Therefore, we can let $\mat{B}' = \mat{B} \mat{G}$ and write 
\[ \vect{t} = \mat{B} \vect{z} + \vect{e} = \mat{B} \mat{G} \vect{z}' + \vect{e} = \mat{B}' \vect{z}' + \vect{e}, \]
making $(\mat{B}', \vect{t})$ a $\{0,1\}$-$\BinBDD$ instance, preserving the error $\vect{e}$. Furthermore, the new dimension $d'$ satisfies $d' = d \log_2(q) = O(d^2)$.

\section{Proof of Completeness of Theorem~\ref{thm:reduction-from-clwe}}\label{sec:clwe-proof-completeness}

Recall that $X$ is the number of solutions in $\S_{n,k}$. For $\vect{v} \in \S_{n,k}$, let $X_{\vect{v}} = \mathbbm{1}[ \norm{\mat{A} \vect{v}}_2 \leq \delta']$, where we have $p := \E[X_{\vect{v}}]$ and $X = \sum_{\vect{v} \in \S_{n,k}} X_{\vect{v}}$. By Chebyshev's inequality, we have
\[ \Pr[X = 0] \leq \frac{\Var(X)}{\E[X]^2} = \frac{\Var(X)}{p^2 \left( n/k \right)^{2k}} = \frac{\Var(X)}{\left( n/k \right)^{1.8k}}. \]
We will show that
\[ \Var(X) = o\left( \left( \frac{n}{k} \right)^{1.8k} \right), \]
which would imply that $\Pr[X \geq 1]= 1- o(1)$, as desired. For $\vect{u}, \vect{v} \in \S_{n,k}$, let $I(\vect{u}, \vect{v}) := |\support(\vect{u}) \cap \support(\vect{v})|$ denote the number of coordinates where they both take on the value $1$.

To do this, we can expand variance into covariance, and group by $I(\cdot, \cdot)$ between $\vect{u}, \vect{v} \in \S_{n,k}$:
\begin{align*}
    \Var(X) = \Var \left( \sum_{\vect{v} \in \S_{n,k}}X_{\vect{v}} \right) =\sum_{\vect{u}, \vect{v} \in \S_{n,k}} \Cov( X_{\vect{u}}, X_{\vect{v}}) = \sum_{t = 0}^k \sum_{\substack{\vect{u}, \vect{v} \in \S_{n,k}\\ I(\vect{u}, \vect{v}) = t}} \Cov( X_{\vect{u}}, X_{\vect{v}}).
\end{align*}
We can split this into $3$ terms, $Z_0, Z_1, Z_2$, as follows:
\[ \Var(X) = \underbrace{ \sum_{\substack{\vect{u}, \vect{v} \in \S_{n,k}\\ I(\vect{u}, \vect{v}) = 0}} \Cov( X_{\vect{u}}, X_{\vect{v}})}_{Z_0} + \underbrace{\sum_{t = 1}^{k/4} \sum_{\substack{\vect{u}, \vect{v} \in \S_{n,k}\\ I(\vect{u}, \vect{v}) = t}} \Cov( X_{\vect{u}}, X_{\vect{v}})}_{Z_1}  + \underbrace{\sum_{t > k/4} \sum_{\substack{\vect{u}, \vect{v} \in \S_{n,k}\\ I(\vect{u}, \vect{v}) = t}} \Cov( X_{\vect{u}}, X_{\vect{v}})}_{Z_2}. \]
We bound each of these terms separately by $o\left( \left( \frac{n}{k} \right)^{1.8k} \right)$, which gives the desired result. Before doing so, note that for all $t \in \{0, \cdots, k\}$,
\begin{equation}\label{eqn:number-of-pairs-with-t-in-common}
\left|\{(\vect{u}, \vect{v}) \in \S_{n,k} : I(\vect{u}, \vect{v}) = t \}\right| = \binom{k}{t} \left( \frac{n}{k}\right)^t \left( \frac{n}{k}\right)^{k-t} \left( \frac{n}{k} - 1\right)^{k-t} \leq  \binom{k}{t} \left( \frac{n}{k}\right)^{2k-t}, 
\end{equation}
since one needs to specify the $t$ parts in common, the common values on those $t$ parts, and two distinct values on the $k-t$ other parts.
\\\\\noindent\textbf{Bounding $Z_0$}: For $I(\vect{u}, \vect{v}) = 0$, since $\vect{u}$ and $\vect{v}$ do not overlap, they depend on disjoint columns of $\mat{A}$. Therefore, $X_{\vect{u}}$ and $X_{\vect{v}}$ are independent, making $\Cov(X_{\vect{v}}, X_{\vect{u}}) = 0$. Summing over $\vect{u}, \vect{v}$ with $I(\vect{u}, \vect{v}) = 0$ gives $Z_0 = 0$.
\\\\\noindent \textbf{Bounding $Z_1$}: For this case when $t = I(\vect{u}, \vect{v}) \in [1, k/4]$, the analysis becomes significantly more technical. 
Let $\vect{v}_0 \in \{0,1\}^n$ be the ``common'' vector with support $\support(\vect{u}) \cap \support(\vect{v})$, let $\vect{v}_1 \in \{0,1\}^n$ be the vector with support $\support(\vect{v}) \setminus \support(\vect{u})$, and let $\vect{v}_2 \in \{0,1\}^n$ be the vector with support $\support(\vect{u}) \setminus \support(\vect{v})$. Note that $\norm{\vect{v}_0}_0 =t$ and $\norm{\vect{v}_1}_0 = \norm{\vect{v}_2}_0 = k - t$. We have $\vect{v} = \vect{v}_0 + \vect{v}_1$ and $\vect{u} = \vect{v}_0 + \vect{v}_2$.

Now, consider the random variables $\vect{w} = \mat{A} \vect{v}_0$, $\vect{x} = \mat{A} \vect{v}_1$, and $\vect{y} = \mat{A} \vect{v}_2$. Since the supports of $\vect{v}_i$ are disjoint, the random variables $\vect{w}, \vect{x}, \vect{y}$ are independent. Moreover, $\vect{w} \sim \N(\zero, t I_{m \times m})$ and $\vect{x}, \vect{y} \sim \N(\zero, (k-t) I_{m \times m})$. For an arbitrary vector $\vect{z} \in \R^m$ and $\delta > 0$, let $B_{\delta}(\vect{z})$ denote the ball of radius $\delta$ around $\vect{z}$ in $\R^m$. Let $\mu_{\sigma^2}(\vect{z})$ denote the density function of $\N(\zero, \sigma^2 I_{m \times m})$ at the point $\vect{z} \in \R^m$. Explicitly,
\[ \mu_{\sigma^2}(\vect{z}) = \frac{1}{\left(\sqrt{2\pi \sigma^2}\right)^m} \exp \left( - \norm{\vect{z}}_2^2/(2 \sigma^2) \right). \]
We begin by using the bound
\[ \Cov(X_{\vect{u}}, X_{\vect{v}}) = \E[X_{\vect{u}} X_{\vect{v}}] - \E[X_{\vect{u}}] \E[X_{\vect{v}}] \leq \E[X_{\vect{u}} X_{\vect{v}}]. \]
Then,
\begin{align*}
    \E[X_{\vect{u}} X_{\vect{v}}]= \Pr_{\matA}[ \norm{\matA \vect{u}}_2 \leq \delta' \land \norm{\matA \vect{v}}_2 \leq \delta' ]  &= \Pr_{\vect{w}, \vect{x}, \vect{y}}[ \norm{\vect{w} + \vect{x}}_2 \leq \delta' \land \norm{\vect{w} + \vect{y}}_2 \leq \delta' ]
    \\&= \Pr_{\vect{w}, \vect{x}, \vect{y}}[ \vect{x} \in B_{\delta'}(-\vect{w}) \land \vect{y} \in B_{\delta'}(-\vect{w}) ]
    \\&= \int_{\R^m} \left(\Pr_{\vect{x}}[\vect{x} \in B_{\delta'}(-\vect{w})] \right)^2 \mu_t(\vect{w}) d \vect{w}
    \\&=  \int_{\R^m}  \left( \int_{ B_{\delta'}(-\vect{w})} \mu_{k-t}(\vect{x}) d\vect{x} \right)^2 \mu_t(\vect{w}) d \vect{w}.
\end{align*}
We split this integral up into two terms $Y_0$ and $Y_1$, depending on the norm of $\vect{w}$: either $\norm{\vect{w}}_2 \leq k/\delta'$ or not. Explicitly, we have
\begin{align} \Cov(X_{\vect{u}}, X_{\vect{v}}) &\leq \int_{\R^m}  \left( \int_{ B_{\delta'}(-\vect{w})} \mu_{k-t}(\vect{x}) d\vect{x} \right)^2 \mu_t(\vect{w}) d \vect{w}
\\&= \underbrace{\int_{B_{k/\delta'}(\zero)}  \left( \int_{ B_{\delta'}(-\vect{w})} \mu_{k-t}(\vect{x}) d\vect{x} \right)^2 \mu_t(\vect{w}) d \vect{w}}_{Y_0} 
\\&+\underbrace{\int_{\R^m \setminus B_{k/\delta'}(\zero)}  \left( \int_{ B_{\delta'}(-\vect{w})} \mu_{k-t}(\vect{x}) d\vect{x} \right)^2 \mu_t(\vect{w}) d \vect{w}}_{Y_1}.\label{eq:cov-Y0-Y1}
\end{align}

\noindent \textbf{(Subcase 1) Bounding $Y_0$}: For $\vect{w} \in B_{k/\delta'}(\zero)$, we have $\norm{\vect{w}}_2 \leq k/\delta'$. The point is that in this case, the density $\mu_{k-t}(\vect{x})$ is almost uniform for $\vect{x} \in B_{\delta'}(-\vect{w})$. To see this, consider two arbitrary points $-\vect{w} - \vect{a}, -\vect{w} - \vect{b} \in B_{\delta'}(-\vect{w})$, where $\norm{\vect{a}}, \norm{\vect{b}} \leq \delta'$. The ratio of the densities is given by
\begin{align*}
\frac{\mu_{k-t}(-\vect{w} - \vect{a})}{\mu_{k-t}(- \vect{w} - \vect{b})} &= \frac{\exp \left( - \norm{\vect{w} + \vect{a}}_2^2/(2 (k-t)) \right)}{\exp \left( - \norm{ \vect{w} + \vect{b}}_2^2/(2 (k-t)) \right)}
\\&= \exp \left(  \frac{-\inner{\vect{w} + \vect{a}, \vect{w} + \vect{a}} +\inner{\vect{w} + \vect{b}, \vect{w} + \vect{b}}}{2(k-t)} \right)
\\&= \exp \left(  \frac{- \norm{\vect{a}}_2^2 + \norm{\vect{b}}_2^2 + 2 \inner{\vect{w}, \vect{b} - \vect{a}}}{2(k-t)} \right).
\end{align*}
By Cauchy-Schwartz and the triangle inequality, we have 
\[|2\inner{\vect{w}, \vect{b} - \vect{a}}| \leq 2 \norm{\vect{w}} \norm{\vect{b} - \vect{a}} \leq 2 \norm{\vect{w}} (\norm{\vect{b}} + \norm{\vect{a}}) \leq 4 \frac{k}{\delta'} \delta' = 4 k.\]
For the other two terms in the numerator, we have $\norm{\vect{a}}^2,\norm{\vect{b}}^2 \leq (\delta')^2 = o(1)$. Therefore, the absolute value of the numerator in the exponent is at most $O(k)$. Since $t \leq k/4$, we know $k-t \geq 3k/4$, making the exponent at most $O(k)/\Omega(k) = O(1)$. Therefore, there is a constant $C = O(1)$ such that
\[ \frac{1}{C} \leq \frac{\mu_{k-t}(-\vect{w} - \vect{a})}{\mu_{k-t}(- \vect{w} - \vect{b})} \leq C.\]
Therefore, we get
\begin{align*}
    Y_0 &= \int_{B_{k/\delta'}(\zero)}  \left( \int_{ B_{\delta'}(-\vect{w})} \mu_{k-t}(\vect{x}) d\vect{x} \right)^2 \mu_t(\vect{w}) d \vect{w}
    \\&\leq C^2 \cdot \int_{B_{k/\delta'}(\zero)}  \left( \int_{ B_{\delta'}(-\vect{w})} \mu_{k-t}(-\vect{w}) d\vect{x} \right)^2 \mu_t(\vect{w}) d \vect{w}
    \\&= C^2 \int_{B_{k/\delta'}(\zero)}  \Vol( B_{\delta'}(-\vect{w}))^2  \cdot  \mu_{k-t}(-\vect{w})^2 \mu_t(\vect{w}) d \vect{w}
    \\&= C^2 \Vol( B_{\delta'}(\zero))^2 \int_{B_{k/\delta'}(\zero)}   \mu_{k-t}(-\vect{w})^2 \mu_t(\vect{w}) d \vect{w}
    \\&\leq C^2 \Vol( B_{\delta'}(\zero))^2 \int_{\R^m}   \mu_{k-t}(-\vect{w})^2 \mu_t(\vect{w}) d \vect{w}.
\end{align*}
Continuing the inequality, we have
\begin{align}
    Y_0 &\leq C^2 \Vol( B_{\delta'}(\zero))^2 \int_{\R^m}   \mu_{k-t}(-\vect{w})^2 \mu_t(\vect{w}) d \vect{w}
    \\&= C^2 \Vol( B_{\delta'}(\zero))^2 \frac{1}{ \left(\sqrt{2 \pi (k-t)} \right)^{2m} \left(\sqrt{2 \pi t} \right)^{m} } \int_{\R^m} \exp \left( - \frac{2\norm{\vect{w}}^2}{2(k-t)} - \frac{\norm{\vect{w}}^2}{2t} \right)  d \vect{w}
    \\&= C^2 \Vol( B_{\delta'}(\zero))^2 \frac{1}{ \left(\sqrt{2 \pi (k-t)} \right)^{2m} \left(\sqrt{2 \pi t} \right)^{m} } \int_{\R^m} \exp \left( - \frac{k+t}{2t(k-t)} \norm{\vect{w}}^2 \right)  d \vect{w}
    \\&= C^2 \Vol( B_{\delta'}(\zero))^2 \frac{ \left( \sqrt{2 \pi \cdot \frac{t(k-t)}{k+t} }\right)^m }{ \left(\sqrt{2 \pi (k-t)} \right)^{2m} \left(\sqrt{2 \pi t} \right)^{m} }
    \\&= C^2 \Vol( B_{\delta'}(\zero))^2 \left( (2 \pi)^2 (k-t)(k+t) \right)^{-m/2}
    \\&\leq C^2 \Vol( B_{\delta'}(\zero))^2 \left( (2 \pi)^2 k^2\right)^{-m/2} \left( 1 + \frac{2t^2}{k^2}\right)^{m/2}
    \\&\leq C^2 \Vol( B_{\delta'}(\zero))^2 \left( (2 \pi)^2 k^2\right)^{-m/2} \exp \left(t^2 m/k^2 \right)\label{eqn:Y0-bound},
\end{align}
where the penultimate inequality uses the fact that $t \leq k/4$. We can similarly lower bound $p$ as follows:
\begin{align}
    p = \E[X_{\vect{v}}] &= \int_{\R^m} \int_{B_{\delta'}(-\vect{w})} \mu_{k-t}(\vect{x}) d \vect{x} \mu_t(\vect{w}) d\vect{w}
    \\&\geq \int_{B_{k/\delta'}(\zero)} \int_{B_{\delta'}(-\vect{w})} \mu_{k-t}(\vect{x}) d \vect{x} \mu_t(\vect{w}) d\vect{w}
    \\&\geq \frac{1}{C} \int_{B_{k/\delta'}(\zero)} \int_{B_{\delta'}(-\vect{w})} \mu_{k-t}(-\vect{w}) d \vect{x} \mu_t(\vect{w}) d\vect{w}
    \\&= \frac{1}{C} \Vol( B_{\delta'}(\zero))  \int_{B_{k/\delta'}(\zero)} \mu_{k-t}(-\vect{w})\mu_t(\vect{w}) d\vect{w}
    \\&= \frac{1}{C} \Vol( B_{\delta'}(\zero))  \left( \int_{\R^m} \mu_{k-t}(-\vect{w})\mu_t(\vect{w}) d\vect{w} - \int_{\R^m \setminus B_{k/\delta'}(\zero)} \mu_{k-t}(-\vect{w})\mu_t(\vect{w}) d\vect{w} \right).\label{eq:p-bound-integrals}
\end{align}
For $\vect{w} \in \R^m \setminus B_{k/\delta'}(\zero)$, we have $\norm{\vect{w}} \geq k/\delta'$. This implies (as a weak bound)
\begin{align*}
\mu_{k-t}(-\vect{w}) &= \frac{1}{\left(\sqrt{2\pi (k-t)}\right)^m} \exp \left( - \norm{\vect{w}}_2^2/(2 (k-t))\right)
\\&\leq \exp(-k/(2(\delta')^2) = \exp(-m^{1 + \Omega(1)}) = o(p).
\end{align*}
Since $ \Vol( B_{\delta'}(\zero)) < 1$ for $\delta' = o(1)$, it follows that
\begin{align*} \frac{1}{C} \Vol( B_{\delta'}(\zero))  \int_{\R^m \setminus B_{k/\delta'}(\zero)} \mu_{k-t}(-\vect{w})\mu_t(\vect{w}) d\vect{w} \leq o(p) \int_{\R^m \setminus B_{k/\delta'}(\zero)} \mu_t(\vect{w}) d\vect{w} \leq o(p).
\end{align*}
Therefore, by adding to~\eqref{eq:p-bound-integrals} and dividing by $1 + o(1)$,
\begin{align*}
    p &\geq \frac{1}{2C} \Vol( B_{\delta'}(\zero)) \int_{\R^m} \mu_{k-t}(-\vect{w})\mu_t(\vect{w}) d\vect{w}
    \\&= \frac{1}{2C} \Vol( B_{\delta'}(\zero))  \frac{1}{ \left(\sqrt{2 \pi (k-t)} \right)^{m} \left(\sqrt{2 \pi t} \right)^{m} } \int_{\R^m} \exp \left( \frac{-\norm{\vect{w}}^2}{2(k-t)} - \frac{\norm{\vect{w}}^2}{2t} \right) d\vect{w}
    \\&= \frac{1}{2C} \Vol( B_{\delta'}(\zero))  \frac{1}{ \left(\sqrt{2 \pi (k-t)} \right)^{m} \left(\sqrt{2 \pi t} \right)^{m} } \int_{\R^m} \exp \left(- \frac{k}{2t(k-t)} \norm{\vect{w}}^2 \right) d\vect{w} 
    \\&= \frac{1}{2C} \Vol( B_{\delta'}(\zero)) (2 \pi k)^{-m/2}.
\end{align*}
Using this lower bound on $p$, we can use~\eqref{eqn:Y0-bound} to see that
\[ Y_0 \leq C^2 \Vol( B_{\delta'}(\zero))^2 \left( (2 \pi)^2 k^2\right)^{-m/2} \exp \left(t^2 m/k^2 \right) \leq C_1 p^2 \exp \left(t^2 m/k^2 \right), \]
for some constant $C_1 = O(1)$.
\\\\\noindent \textbf{(Subcase 2) Bounding $Y_1$}: For $\vect{w} \in \R^m \setminus B_{k/\delta'}(\zero)$, we have $\norm{\vect{w}}_2 \geq k/\delta'$. For any $\vect{x} \in B_{\delta'}(-\vect{w})$, it follows by the triangle inequality that 
\[ \norm{\vect{x}} \geq \norm{-\vect{w}} - \delta' \geq k/\delta' - \delta' \geq k/(2 \delta'), \]
since $\delta' = o(1)$. Therefore, since $k-t \leq k$, for any $\vect{x} \in B_{\delta'}(-\vect{w})$, we have
\begin{align*} \mu_{k-t}(\vect{x}) &= \frac{1}{\left(\sqrt{2\pi (k-t)}\right)^m} \exp \left( - \norm{\vect{x}}_2^2/(2 (k-t)) \right) 
\\&\leq  \frac{1}{\left(\sqrt{2\pi (k-t)}\right)^m} \exp \left( - k/(8 (\delta')^2) \right)
\\&\leq \exp \left( - m^{1 + \Omega(1)} \right).
\end{align*}
As a weak bound, since $\Vol(B_{\delta'}(-\vect{w})) \leq 1$ (as $\delta' = o(1)$), this implies for all $\vect{x} \in B_{\delta'}(-\vect{w})$,
\[ \left( \int_{ B_{\delta'}(-\vect{w})} \mu_{k-t}(\vect{x}) d\vect{x} \right)^2 \leq \exp \left( - m^{1 + \Omega(1)} \right), \]
and therefore,
\begin{align*}
Y_1 &= \int_{\R^m \setminus B_{k/\delta'}(\zero)}  \left( \int_{ B_{\delta'}(-\vect{w})} \mu_{k-t}(\vect{x}) d\vect{x} \right)^2 \mu_t(\vect{w}) d \vect{w}
\\&\leq   \exp \left( - m^{1 + \Omega(1)} \right) \int_{\R^m \setminus B_{k/\delta'}(\zero)} \mu_t(\vect{w}) d \vect{w} \leq \exp \left( - m^{1 + \Omega(1)} \right).
\end{align*}
Since $p^2 = \exp \left(-\Theta(m \log m) \right) = \omega\left( \exp \left( - m^{1 + \Omega(1)} \right) \right)$, it follows that
\[ Y_1 = o(p^2). \]
\\\\\noindent Combining the bounds on $Y_0$ and $Y_1$ via Eq.~\eqref{eq:cov-Y0-Y1}, there is a constant $C_2 = O(1)$ for which
\[ \Cov(X_{\vect{u}}, X_{\vect{v}}) \leq Y_0 + Y_1 \leq C_1 p^2 \exp \left(t^2 m/k^2 \right) + o(p^2) \leq C_2 p^2 \exp \left(t^2 m/k^2 \right). \]
Using the definition of $Z_1$, \eqref{eqn:number-of-pairs-with-t-in-common}, and $t \leq k/4$, we have
\begin{align*}
    Z_1 = \sum_{t = 1}^{k/4} \sum_{\substack{\vect{u}, \vect{v} \in \S_{n,k}\\ I(\vect{u}, \vect{v}) = t}} \Cov( X_{\vect{u}}, X_{\vect{v}}) &\leq  \sum_{t = 1}^{k/4} \sum_{\substack{\vect{u}, \vect{v} \in \S_{n,k}\\ I(\vect{u}, \vect{v}) = t}} C_2 p^2 \exp \left(t^2 m/k^2 \right)
    \\&\leq C_2 p^2  \sum_{t = 1}^{k/4}  \binom{k}{t} \left( \frac{n}{k} \right)^{2k-t}\exp \left(t^2 m/k^2 \right)
    \\&\leq C_2 p^2\left( \frac{n}{k} \right)^{2k} \sum_{t = 1}^{k/4}  \binom{k}{t} \left( \frac{n}{k} \right)^{-t}\exp \left(t m/(4k) \right)
    \\&\leq C_2 p^2\left( \frac{n}{k} \right)^{2k} \sum_{t = 1}^{k/4} \left( \frac{k^2 \exp(m/(4k))}{n} \right)^t
    \\&= C_2 \left( \frac{n}{k} \right)^{1.8k} \sum_{t = 1}^{k/4} \left( \frac{k^2 \exp(m/(4k))}{n} \right)^t.
\end{align*}
Since this geometric sum starts at index $t=1$, to show $Z_1 = o((n/k)^{1.8k})$, it suffices to show that the base, $\frac{k^2\exp(m/(4k))}{n}$, is $o(1)$. To see this, recall that
\[n = k \cdot \exp \left( \Theta(m/k \cdot \log m) \right),\]
and $k = m^{\epsilon}$ for $\epsilon \in (0,1)$, making
\begin{align*}
\frac{k^2\exp(m/(4k))}{n} &= \frac{k \exp(m/(4k))}{\exp \left( \Theta(m/k \cdot \log m) \right)}
\\&\leq m \exp \left( -\Theta(m/k \cdot \log m) \right) 
\\&\leq m \exp \left( -m^{\Omega(1)} \log m\right)
\\& = o(1),
\end{align*}
as desired.

\noindent \textbf{Bounding $Z_2$}: For this case when $t > k/4$, we can use the (very) weak bound that
\[ \Cov(X_{\vect{u}}, X_{\vect{v}}) = \E[X_{\vect{u}} X_{\vect{v}}] - \E[X_{\vect{u}}] \E[X_{\vect{v}}] \leq \E[X_{\vect{u}} X_{\vect{v}}] \leq 1. \]
Using~\eqref{eqn:number-of-pairs-with-t-in-common}, we get
\begin{align*}
    Z_2 = \sum_{t > k/4} \sum_{\substack{\vect{u}, \vect{v} \in \S_{n,k}\\ I(\vect{u}, \vect{v}) = t}} \Cov( X_{\vect{u}}, X_{\vect{v}}) &\leq \sum_{t > k/4} \binom{k}{t} \left( \frac{n}{k}\right)^{2k-t}
    \\&\leq \sum_{t > k/4} \binom{k}{t} \left( \frac{n}{k}\right)^{1.75k}
    \\&\leq \left( \frac{n}{k}\right)^{1.75k} \cdot 2^k
    \\&= o\left(\left( \frac{n}{k}\right)^{1.8 k}  \right),
\end{align*}
where the last line uses the fact that $n/k = \omega(1)$.

\section{Reduction to \texorpdfstring{$k$-$\SLR$}{k-SLR} with Independent Noise}\label{section:independent-noise}

The noise $\vect{w} = \vect{y} - \mat{X} \vect{\theta}^*$ in our instance as described in Theorem~\ref{thm:main-bin-bdd-to-slr} depends on $\mat{R} \vect{e}$, whose entries are marginally Gaussian with variance $\norm{\vect{e}}_2^2$, but not independent from the samples in $\mat{X}$. By adding sufficient flooding noise $\vect{\xi}$ to the vector $\vect{y}$, and by starting with $\BinBDD_{d, \alpha}$ instances for which $\alpha = 1/\poly(m)$, we can obtain a distribution over the noise that is at most $1/2$-away in total variation distance from being independent Gaussian.

\begin{lemma}\label{lemma:independent-noise}
Let $d, m, n, k, m_1$ be integers such that $k$ divides $d$, $m \ge \Omega(d)$ and $n = k \cdot 2^{d/k}$ and $m_1 = m-k$. Let $\alpha = o(1/(m \sqrt{\log m}))$ be a real number. Then, there is a $\poly(n, m)$-time reduction from $\BinBDD_{d, \alpha}$ to $k$-$\SLR$ in dimension $n$ with $m$ samples that succeeds with probability at
least $9/10$. If the $\BinBDD_{d, \alpha}$ instance has lattice basis $\matB$, the resulting $k$-$\SLR$ instance $(\matX, \vecy)$ has parameter $\delta = \Theta(\lambda_{1, \bin}(\matB))$ and design matrix $\mat{X}$ as structured in Theorem~\ref{thm:main-bin-bdd-to-slr}. Further, there is a $k$-sparse vector $\vect{\theta}^*$ such that
\begin{align*}
    \matX \vect{\theta}^* + \vect{w} = \vect{y}
\end{align*}
such that noise $\vect{w}$ is drawn from a distribution that is at most $1/2$-away from $\N(0, \sigma^2)^{\otimes m}$ in total variation distance, where
\begin{align*}
    \sigma = \Theta\left(\alpha \cdot \lambda_{1, \bin}(\matB) \cdot m \sqrt{\log m}\right).
\end{align*}
\end{lemma}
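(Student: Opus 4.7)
The plan is to invoke the reduction from Theorem~\ref{thm:main-bin-bdd-to-slr} and then flood the target with independent Gaussian noise. Concretely, given a $\BinBDD_{d, \alpha}$ instance $(\matB, \vect{t})$, first run the reduction of Theorem~\ref{thm:main-bin-bdd-to-slr} to obtain $(\matX, \vect{y}_0, \delta_0)$ with $\delta_0 = \Theta(\lambdabin(\matB))$; by the calculation in that proof, the planted $k$-sparse $\vect{\theta}^*$ satisfies $\vect{y}_0 - \matX\vect{\theta}^* = \vect{w}_0 := \begin{pmatrix} -\matR \vect{e} \\ \zero \end{pmatrix}$. Then sample $\vect{\xi} \sim \N(\zero, \sigma^2 I_m)$ independently with $\sigma$ as in the lemma, set $\vect{y} := \vect{y}_0 + \vect{\xi}$, and output the $k$-$\SLR$ instance $(\matX, \vect{y})$ with parameter $\delta = \Theta(\lambdabin(\matB))$. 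The new noise is $\vect{w} = \vect{w}_0 + \vect{\xi}$.

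For the TV-distance claim, the plan is to condition on $\matX$, which determines $\matR$ (since $\matB \matG_{\sparse}$ has full row rank $d$ because $\matB$ is invertible and each block of $\matG_{\sparse}$ spans $\R^{d/k}$, so $\matR$ can be recovered from $\matX_1$ by right-pseudoinversion). Conditional on $\matR$, the vector $\vect{w}$ is distributed as $\N(\vect{w}_0, \sigma^2 I_m)$, and I will bound $\TV(\N(\vect{w}_0, \sigma^2 I_m), \N(\zero, \sigma^2 I_m))$ coordinate-wise via the union bound and the one-dimensional Gaussian shift inequality, giving a bound of $O(\|\vect{w}_0\|_1 / \sigma)$. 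Since each entry of $\matR \vect{e}$ is marginally $\N(0, \|\vect{e}\|_2^2)$, a standard Gaussian tail bound and a union bound over the $m_1$ nonzero coordinates give $\|\vect{w}_0\|_\infty = O(\|\vect{e}\|_2 \sqrt{\log m})$ with probability at least $1 - 1/\poly(m)$, hence $\|\vect{w}_0\|_1 \le O(\alpha \cdot \lambdabin(\matB) \cdot m \sqrt{\log m})$. Plugging in the chosen $\sigma$ yields TV at most $1/2$, and marginalizing over $\matR$ gives the claim that the joint distribution of $(\matX, \vect{w})$ is $1/2$-close to the ``ideal'' joint distribution where $\vect{w}$ is i.i.d.\ $\N(0, \sigma^2)^{\otimes m}$, independent of $\matX$.

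For correctness, completeness holds because $\|\matX \vect{\theta}^* - \vect{y}\|_2 \le \|\vect{w}_0\|_2 + \|\vect{\xi}\|_2$, which with probability at least $1 - e^{-\Omega(m)}$ is $O(\alpha \lambdabin(\matB) \sqrt{m} + \sigma\sqrt{m}) = o(\lambdabin(\matB) \sqrt{m})$, using $\alpha = o(1/(m \sqrt{\log m}))$, so $\delta = \Theta(\lambdabin(\matB))$ accommodates the extra noise. For soundness, if the solver returns $\vect{\widehat{\theta}}$ with $\|\matX\vect{\widehat{\theta}} - \vect{y}\|_2 \le \delta \sqrt{m}$, then by the triangle inequality $\|\matX \vect{\widehat{\theta}} - \vect{y}_0\|_2 \le \delta\sqrt{m} + \|\vect{\xi}\|_2 \le \delta\sqrt{m} + O(\sigma\sqrt{m})$, which is still $O(\lambdabin(\matB) \sqrt{m})$, so the soundness argument in Theorem~\ref{thm:main-bin-bdd-to-slr} (culminating in its inequality on $\delta \sqrt{m}$) goes through with a slightly adjusted constant to recover the correct $\vect{z}$ from $\vect{\widehat{\theta}}$.

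The main obstacle I anticipate is carefully tracking the joint TV bound and arguing independence of $\vect{w}$ and $\matX$ rather than just the marginal of $\vect{w}$; this is why recoverability of $\matR$ from $\matX$ via the full-row-rank property of $\matB \matG_{\sparse}$ is crucial. Beyond that, the argument is a routine application of Gaussian smoothing combined with the book-keeping of Theorem~\ref{thm:main-bin-bdd-to-slr}'s completeness and soundness, and the $\alpha = o(1/(m\sqrt{\log m}))$ assumption is exactly what is needed to simultaneously make the flooding effective ($\sigma \gg \|\vect{w}_0\|_\infty \cdot \sqrt{\log m}$) and keep it small enough ($\sigma \ll \lambdabin(\matB)$) for the reduction to preserve the threshold $\delta$.
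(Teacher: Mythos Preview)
Your proposal is correct and follows essentially the same approach as the paper: both add i.i.d.\ Gaussian flooding noise $\vect{\xi}\sim\N(\zero,\sigma^2 I_m)$ to the target, bound the TV distance via the coordinate-wise Gaussian shift inequality together with $\|\matR\vect{e}\|_\infty = O(\|\vect{e}\|_2\sqrt{\log m})$, and then recheck completeness and soundness with the extra $O(\sigma\sqrt{m})$ slack (which is $o(\delta\sqrt{m})$ by the assumption on $\alpha$). The only cosmetic differences are that the paper rewrites the construction and re-derives soundness from scratch (including an adjusted $\gamma$, though the adjustment is asymptotically vacuous since $\sigma\sqrt{k}=o(\delta\sqrt{m})$), whereas you reuse Theorem~\ref{thm:main-bin-bdd-to-slr}'s parameters and triangle-inequality back to $\vect{y}_0$; and you make explicit the conditioning-on-$\matR$ step (via the full row rank of $\matB\matG_{\sparse}$) to justify the joint TV bound, which the paper's sketch leaves implicit.
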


\begin{proof}[Sketch.] Let $(\mat{B}, \vect{t} := \mat{B} \vect{z} + \vect{e})$ be our $\BinBDD_{d, \alpha}$ instance, where $\mat{B} \in \R^{d \times d}$, $\vect{z} \in \{-1, 1\}^d$ and $\|\vect{e}\|_2 \le \alpha \cdot \lambda_{1, \bin}(\mat{B})$. Let $\mat{R} \in \R^{m_1 \times d}$ be a random matrix where each entry is drawn i.i.d.~ from $\mathcal{N}(0, 1)$. We define the design matrix $\mat{X}$ and target vector $\vect{y}$ for $k$-$\SLR$ similar to the definition in Theorem~\ref{thm:main-bin-bdd-to-slr}. The main difference is that to achieve Gaussian noise that is independent of the samples $\mat{X}$, we add noise vector $\vect{\xi} = \begin{pmatrix} \vect{\xi}_1\\ \vect{\xi}_2 \end{pmatrix}$ to the vector $\vect{y}$, to get the instance $(\mat{X}, \vect{y})$ where
\begin{align*}
    \mat{X} =
    \begin{pmatrix}
        \mat{X}_1\\
        \mat{X}_2
    \end{pmatrix} = 
    \begin{pmatrix}
        \mat{R} \mat{B} \mat{G}_{\sparse}\\
        \gamma \mat{G}_{\partite}
    \end{pmatrix}, \quad \text{ and } \quad
    \vect{y} = 
    \begin{pmatrix}
        \vect{y}_1\\
        \vect{y}_2
    \end{pmatrix}
    =
    \begin{pmatrix}
        \mat{R} \vect{t} + \vect{\xi}_1\\
        \gamma \vect{1} + \vect{\xi}_2
    \end{pmatrix},
\end{align*}
where each entry in $\vect{\xi}_1, \vect{\xi}_2$ is sampled i.i.d from $\N(0, \sigma^2)$. 
As before in Theorem~\ref{thm:main-bin-bdd-to-slr}, we get an approximation $\widehat{\lambda}_1$ for $\lambda_{1, \bin}(\mat{B})$. (Iterating over different guesses will take at most polynomial time in the size of the problem description.) In the remainder of the proof, we will assume we know a value $\widehat{\lambda}_1$ such that $\widehat{\lambda}_1 \in [\lambda_{1, \bin}(\mat{B}), 2 \lambda_{1, \bin}(\mat{B}))$.

We set the parameters $\delta, \sigma, \gamma$ such that
\begin{align*}
    \delta^2 &= \frac{3 m_1 \cdot \widehat{\lambda}_1^2}{100m} = \Theta(\lambda_{1, \bin}(\mat{B})^2),\\ \sigma &= \Theta(\alpha \cdot \widehat{\lambda}_1 \cdot m_1 \sqrt{\log (m_1)}), \text{ and}\\
    \gamma &= \Omega\left( \frac{\sigma_{\max}(\mat{B})}{\widehat{\lambda}_1} \cdot \sqrt{\frac{d}{k}} \cdot (\delta \sqrt{m} + \sigma \sqrt{k}) \right).
\end{align*}
Suppose there is an algorithm that returns a $k$-sparse $\widehat{\vect{\theta}}$ when run on instance $(\mat{X}, \vect{y})$ with parameter $\delta$ such that
\begin{align}\label{equation:k-slr-solution-independent-noise}
    \frac{\|\mat{X} \widehat{\vect{\theta}} - \vect{y} \|_2^2}{m} \le \delta^2.
\end{align}
Then, we round $\widehat{\vect{\theta}}$ to the nearest integer entrywise to get $\mathsf{round}(\widehat{\vect{\theta}}) \in \Z^n$, and we output $\mat{G}_{\sparse} \mathsf{round}(\widehat{\vect{\theta}}) \in \Z^d$ as the $\BinBDD_{d, \alpha}$ solution.

First, we show that this reduction is complete. For completeness to hold, it suffices to choose $\delta, \sigma$ such that the following holds.
\begin{align*}
    \frac{1}{m} \norm{\mat{X} \vect{\theta}^* - \vect{y}}_2^2 &= \frac{1}{m} \left( \norm{\mat{R} \vect{e} + \vect{\xi}_1}_2^2 + \norm{\vect{\xi}_2}_2^2 \right) 
    \le  O \left( \frac{\norm{\mat{R} \vect{e}}_2^2 + \norm{\vect{\xi}_1}_2^2 + \norm{\vect{\xi}_2}_2^2}{m} \right) \le O \left(\frac{m_1 \norm{\vect{e}}_2^2 + m \sigma^2}{m}\right)\\
    &\le O(\norm{\vect{e}}_2^2 + \sigma^2) \le \delta^2.
\end{align*}

Now, we proceed to prove that the reduction is sound. For the sake of contradiction, suppose that $\mat{G}_\sparse \mathsf{round}(\widehat{\vect{\theta}}) = \vect{z}'$ for some $\vect{z}' \neq \vect{z}$ such that Equation~\eqref{equation:k-slr-solution-independent-noise} holds. Since Equation~\eqref{equation:k-slr-solution-independent-noise} holes, in particular, we know that $\|\mat{X}_2 - \vect{y}_2\|_2^2 = \norm{\gamma \mat{G}_\partite \thetahat  - \gamma \vect{1} - \vect{\xi}_2}_2^2 \le m \delta^2$. Since $\|\vect{\xi}_2\|_2 \le O(\sigma \sqrt{k})$ with high probability, we set $\gamma$ such that
\begin{align*}
    \norm{\mat{G}_\partite \thetahat - \vect{1}}_2 \le \frac{\delta \sqrt{m} + \|\vect{\xi}_2\|_2}{\gamma} < 1/2.
\end{align*}
Since $\thetahat$ is $k$-sparse, this means that it cannot have any part with all zero entries. So each part of $\thetahat$ must have exactly one non-zero entry. By the choice of $\gamma$, each of these non-zero entries must be in the interval $(1/2, 3/2)$, and must round to $1$. Therefore $\mathsf{round}(\thetahat) \in \mathcal{S}_{n,k}$ and $\mathsf{round}(\thetahat)$ and $\thetahat$ have the same support. This implies that $\vect{z}' = \mat{G}_\sparse \mathsf{round}(\thetahat) \in \{-1, +1\}^d$ and $\mat{G}_\partite \mathsf{round}(\thetahat) = \mathbf{1}$ by Lemma~\ref{lemma:G-forces-partite-binary}. Since we can restrict to the support of $\thetahat$, we have
\begin{align*}
    \norm{\thetahat} - \round(\thetahat)_2 = \norm{\mat{G}_\partite \thetahat - \mat{G}_\partite \round(\mat{G})} = \norm{\matG_\partite \thetahat - \mathbf{1}}_2 \le \frac{\delta \sqrt{m} + \norm{\vect{\xi}_2}_2}{\gamma}.
\end{align*}
Since $\thetahat - \round(\thetahat)$ is also $k$-sparse and partite, it follows that
\begin{align*}
    \norm{\matG_\sparse(\thetahat - \round(\thetahat))}_2 \le \sqrt{\frac{d}{k}} \cdot \frac{\delta \sqrt{m} + \norm{\vect{\xi}_2}_2}{\gamma},
\end{align*}
as each column of any block of $\mat{G}_\sparse$ has $\ell_2$ norm exactly $\sqrt{d/k}$. Multiplying on the left by $\matB$, we have
\begin{align*}
\norm{\mat{B} \mat{G}_\sparse (\thetahat - \round(\thetahat))}_2 &= \norm{\mat{B} \mat{G}_{\sparse} - \mat{B} \vect{z}'}_2\\
&\le \sigma_{\max}(\mat{B}) \cdot \sqrt{\frac{d}{k}} \cdot \frac{\delta \sqrt{m} + \norm{\vect{\xi}_2}_2}{\gamma}.
\end{align*}
On the other hand, by definition of $\lambda_{1, \bin}(\mat{B})$, since $\vect{z} \neq \vect{z}' \in \{-1, 1\}^d$, we know that $\norm{\mat{B} \vect{z} - \mat{B} \vect{z}'}_2 \ge \lambda_{1, \bin}(\mat{B})$. Therefore, by triangle inequality, and by our choice of $\gamma$, we have
\begin{align*}
    \norm{\mat{B} \mat{G}_\sparse \thetahat - \mat{B} \vect{z}}_2 &\ge \norm{\mat{B} \vect{z} - \mat{B} \vect{z}'}_2 - \norm{\mat{B} \mat{G}_\sparse \thetahat - \mat{B} \vect{z}'}_2 \ge \frac{49}{50} \lambda_{1, \bin}(\mat{B}).
\end{align*}
By incorporating the error $\vect{e}$ and multipling on the left by $\mat{R}$, we have
\begin{align*}
    \norm{\mat{X} \thetahat - \vect{y}}_2 &\ge \norm{\mat{R} 
    \mat{B} \mat{G}_\sparse \thetahat - \mat{R} (\mat{B} \vect{z} + \vect{e}) - \vect{\xi}_1}_2 \\
    &\ge \sigma_{\min}(\mat{R}) \cdot \left( \norm{\mat{B} \mat{G}_\sparse \thetahat - \mat{B} \vect{z}}_2 - \norm{\vect{e}}_2 - \norm{\vect{\xi}_1}_2 \right)\\
    &\ge \frac{\sqrt{m_1}}{2} \cdot \left( \frac{49}{50} \lambda_{1, \bin}(\mat{B}) - \alpha \lambda_{1, \bin}(\mat{B}) - c_1 \sigma \sqrt{m_1} \right).
\end{align*}
The last inequality follows because $\norm{\vect{\xi}_1}_2 \le c_1 \sigma \sqrt{m_1}$ for some constant $c_1 > 0$ with probability at least $1 - e^{-\Omega(m_1)}$. Also, by Lemma~\ref{lemma:gaussian-singular-values}, we know that $\sigma_{\min}(\mat{R}) \ge \sqrt{m_1}/2$ with probability at least $1 - e^{-\Omega(m_1)}$.
By Equation~\eqref{equation:k-slr-solution-independent-noise}, we get a contradiction as long as
\begin{align*}
    \delta \sqrt{m} < \sqrt{m_1} \cdot \left( \frac{49}{50} \lambda_{1, \bin}(\mat{B}) - \alpha \cdot \lambda_{1, \bin}(\mat{B}) - c_1 \sigma \sqrt{m_1} \right).
\end{align*}
On the other hand, we know
\begin{align*}
    \delta \sqrt{m} = \frac{\widehat{\lambda}_1 \sqrt{3 m_1}}{10\sqrt{m}} \cdot \sqrt{m} \le \frac{1}{5} \cdot \lambda_{1, \bin}(\mat{B}) \cdot \sqrt{3m_1}.
\end{align*}
Combining these inequalities we get a contradiction if $\alpha < c_2$ and $\sigma < c_3 \cdot \lambda_{1, \bin}(\mat{B})/\sqrt{m_1}$, for some constants $c_2, c_3 > 0$. Since $\alpha = o(1/(m \sqrt{\log m}))$, and by the choice of $\sigma$, these conditions are met.

Finally, we want the distributions of $\mat{R} \vect{e} + \vect{\xi}_1$ and that of $\vect{\xi}_1$ to have total variation distance at most $1/2$. With probability at least $99/100$, $\norm{\mat{R} \vect{e}}_{\infty} \le O\left(\norm{\vect{e}}_2 \cdot \sqrt{\log(m_1)} \right)$ by standard Gaussian tail bounds and a union bound. By Lemma~\ref{lemma:flooding}, and applying a union bound, as long as $\sigma \ge \Omega\left(\norm{\vect{e}}_2 \cdot m_1 \sqrt{\log(m_1)} \right)$, the distributions of $\mat{R} \vect{e} + \vect{\xi}_1$ and of $\vect{\xi}_1$ are at most $1/2$-apart in total variation distance. It suffices that $\sigma \ge \Omega(\alpha \cdot \lambda_{1, \bin}(\mat{B}) \cdot m_1 \sqrt{\log(m_1)})$.

Since $m_1 = \Theta(m)$, choosing $\sigma = \Theta \left(\alpha \cdot \lambdabin(\mat{B}) \cdot m \sqrt{\log(m)}\right)$, $\delta = \Theta(\lambdabin)$ and $\alpha = o(1/(m \sqrt{\log m}))$ satisfies all of these constraints. This completes the proof.
\end{proof}

\begin{lemma}\label{lemma:flooding}
Let $|x| \le b$ be a real number and let $Y \sim \mathcal{N}(0, \sigma^2)$ be a random variable. If $\sigma \ge \frac{b}{2\epsilon}$, then the distributions of $x + Y$ and $Y$ have total variation distance at most $\epsilon$.
\end{lemma}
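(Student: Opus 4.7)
The plan is to reduce the statement to a standard bound on the total variation distance between two Gaussians that differ only in their mean. Since $x + Y \sim \N(x, \sigma^2)$ and $Y \sim \N(0, \sigma^2)$, it suffices to show that whenever $|x| \le b$ and $\sigma \ge b/(2\epsilon)$, we have $d_{\mathrm{TV}}(\N(x, \sigma^2), \N(0, \sigma^2)) \le \epsilon$.

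The cleanest route is via Pinsker's inequality. First I would invoke the standard closed-form expression for the KL divergence between two univariate Gaussians with a common variance, namely
\[ \mathrm{KL}\bigl(\N(x, \sigma^2) \,\bigl\|\, \N(0, \sigma^2)\bigr) = \frac{x^2}{2\sigma^2}. \]
Then Pinsker's inequality gives $d_{\mathrm{TV}}(\N(x, \sigma^2), \N(0, \sigma^2)) \le \sqrt{\mathrm{KL}/2} = |x|/(2\sigma)$. Substituting $|x| \le b$ and the hypothesis $\sigma \ge b/(2\epsilon)$ yields $d_{\mathrm{TV}} \le b/(2\sigma) \le \epsilon$, as required.

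There is no serious obstacle here: the argument is a two-line application of textbook facts. If one prefers to avoid importing Pinsker, an equally short alternative is to integrate $|f_{\N(x,\sigma^2)}(t) - f_{\N(0,\sigma^2)}(t)|$ directly, observing that the two densities cross at $t = x/2$ and using $d_{\mathrm{TV}} = \Phi(|x|/(2\sigma)) - \Phi(-|x|/(2\sigma)) \le |x|/(\sigma \sqrt{2\pi}) \le b/(2\sigma)$ via the bound $\Phi'(0) = 1/\sqrt{2\pi}$ on the Gaussian density at the origin. Either route concludes the proof.
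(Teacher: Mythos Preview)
Your proposal is correct and essentially identical to the paper's own proof: both compute $\mathrm{KL}(\N(x,\sigma^2)\|\N(0,\sigma^2)) = x^2/(2\sigma^2)$, apply Pinsker's inequality to get $d_{\mathrm{TV}} \le |x|/(2\sigma) \le b/(2\sigma)$, and conclude using $\sigma \ge b/(2\epsilon)$.
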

\begin{proof}
The random variable $x + Y$ is distributed as $\mathcal{N}(x, \sigma^2)$, so the KL divergence between the distributions of $x+Y$ and $Y$ is $\KL(x+Y \| Y) = \frac{x^2}{2\sigma^2} \le \frac{b^2}{2 \sigma^2}$. By Pinsker's inequality, the total variation distance between the two distributions is $\TV(x + Y, Y) \le \sqrt{\frac{1}{2} \KL(x+Y \| Y)} \le \frac{b}{2 \sigma} \le \epsilon$.
\end{proof}

\fi

\end{document}